\newcommand{\blue}[1]{\textcolor{blue}{#1}}
\newcommand{\red}[1]{\textcolor{red}{#1}}
\newcommand{\mcl}[2]{\multicolumn{#1}{c|}{#2}}
\DeclareMathOperator*{\argmin}{arg\,min}
\newcommand{\defeq}{\mathrel{\mathop:}=}
\def\R{\mathbb{R}}
\def\B{\mathcal{S}^{d-1}}
\def\dd{\mathrm{d}}
\def\x{\mathbf{x}}
\def\y{\mathbf{y}}
\def\v{\mathbf{v}}
\def\w{\mathbf{w}}
\def\u{\mathbf{u}}
\def\X{\mathcal{X}}
\def\O{\mathcal{O}}
\def\vs{\mathbf{v}^\star}
\def\bs{b^\star}
\def\Th{\Theta}
\def\conv{\mathrm{conv}\,}
\def\MDP{MDP$^2$\ }
\def\S3VM{S$^3$VM}
\def\phiSSL{\phi_{\mathrm{SSC}}}
\def\phiUL{\phi_{\mathrm{CL}}}
\def\fUL{f_{\mathrm{CL}}}
\def\fSL{f_{\mathrm{SSC}}}
\def\hY{B}
\def\margin{\mathrm{margin}}
\title{Minimum Density Hyperplanes} \date{}
\begin{document}


\author{\name Nicos G.\ Pavlidis \email n.pavlidis@lancaster.ac.uk \\
\addr Department of Management Science \\
Lancaster University\\
Lancaster, LA1 4YX, UK
\AND
\name David P.\ Hofmeyr \email d.hofmeyr@lancaster.ac.uk \\
\addr Department of Mathematics and Statistics\\
Lancaster University\\
Lancaster, LA1 4YF, UK
\AND
\name Sotiris K.\ Tasoulis \email s.tasoulis@ljmu.ac.uk \\
\addr Department of Applied Mathematics\\
Liverpool John Moores University, \\
Liverpool, L3 3AF, UK
}

\editor{Andreas Krause}

\maketitle

\begin{abstract}%

Associating distinct groups of objects (clusters) with contiguous regions of
high probability density (high-density clusters), is central to many
statistical and machine learning approaches to the classification of unlabelled
data. We propose a novel hyperplane classifier for clustering and
semi-supervised classification which is motivated by this objective.  The
proposed \textit{minimum density hyperplane} minimises the integral of the
empirical probability density function along it, thereby avoiding intersection
with high density clusters.  We show that the minimum density and the maximum
margin hyperplanes are asymptotically equivalent, thus linking this approach to
maximum margin clustering and semi-supervised support vector classifiers.  We
propose a projection pursuit formulation of the associated optimisation problem
which allows us to find minimum density hyperplanes efficiently in practice,
and evaluate its performance on a range of benchmark data sets.  The proposed
approach is found to be very competitive with state of the art methods for
clustering and semi-supervised classification.

\end{abstract}

\begin{keywords}
low-density separation, high-density clusters,
clustering, semi-supervised classification, projection pursuit
\end{keywords}

\section{Introduction}

We study the fundamental learning problem: {\em Given a random sample from an
unknown probability distribution with no, or partial label information,
identify a separating hyperplane that avoids splitting any of the distinct
groups (clusters) present in the sample.}
We adopt the cluster definition given by~\citet[chap.~11]{Hartigan1975}, in which a {\em
high-density cluster} is defined as a maximally connected component of the
level set of the probability density function, $p(\x)$, at level~$c \geqslant
0$,
\[\mathrm{lev}_c p(\x) = \left\{ \x \in \R^d \, \big{\vert}\, p(\x) > c\right\}.\]
An important advantage of this approach over other methods is that it is
well founded from a statistical perspective, in the sense that a well-defined population quantity 
is being estimated.

However, since $p(\x)$ is typically unknown, detecting high-density clusters
necessarily involves estimates of this function, and standard approaches to
nonparametric density estimation are reliable only in low dimensions. 
A number of existing \textit{density clustering} algorithms approximate the
level sets of the empirical density through a union of spheres around points
whose estimated density exceeds a user-defined
threshold~\citep{Walther1997,CuevasFF2000,CuevasFF2001,RinaldoW2010}.
The choice of this threshold affects both the shape and number of detected
clusters, while an appropriate threshold is typically not known in advance.
The performance of these methods deteriorates sharply as dimensionality
increases, unless the clusters are assumed to be clearly
discernible~\citep{RinaldoW2010}.
An alternative is to consider the more specific problem of allocating
observations to clusters,
which shifts the focus to local properties of the density, rather than its
global approximation. The central idea underlying such methods is that if a
pair of observations belong to the same cluster they must be connected through
a path traversing only high-density regions.
Graph theory is a natural choice to address this type of problem.
\cite{AzzaliniT2007,StuetzleN2010} and \cite{MenardiA2014} have recently
proposed algorithms based on this approach. Even these approaches however are
limited to problems of low dimensionality by the standards of current
applications~\citep{MenardiA2014}.

An equivalent formulation of the density clustering problem is to assume that
clusters are separated through contiguous regions of low probability density;
known as the {\em low-density separation} assumption. 
In both clustering and semi-supervised classification,
identifying the hyperplane with the maximum margin is considered a direct implementation
of the low-density separation approach.
Motivated by the success of support vector machines (SVMs) in classification, maximum
margin clustering (MMC)~\citep{XuNLS2004}, seeks the
maximum margin hyperplane to perform a binary partition (bi-partition) of unlabelled data.
MMC can be equivalently viewed as seeking the binary labelling of the data
sample that will maximise the margin of an SVM
estimated using the assigned labels.

In a plethora of applications data can be collected cheaply and automatically,
while labelling observations is a manual task that can be performed for a small
proportion of the data only.
Semi-supervised classifiers attempt to exploit the abundant unlabelled data to
improve the generalisation error over using only the scarce labelled examples.
Unlabelled data provide additional information about the marginal density,
$p(\x)$, but this is beneficial only insofar as it improves the inference of
the class conditional density,~$p(\x|y)$.
Semi-supervised classification relies on the assumption that a relationship
between $p(\x)$ and $p(\x|y)$ exists.
The most frequently assumed relationship is that high-density clusters are
associated with a single class (cluster assumption), or equivalently that class
boundaries pass through low-density regions (low-density separation
assumption).
The most widely used semi-supervised classifier based on the low-density
separation assumption is the semi-supervised support vector machine
(S$^3$VM)~\citep{VapnikS1977,Joachims1999,ChapelleZ2005}.
S$^3$VMs implement the low-density separation assumption by partitioning the
data according to the maximum margin hyperplane with respect to both labelled
and unlabelled data.

Encouraging theoretical results for semi-supervised classification have been
obtained under the cluster assumption. 
If $p(\x)$ is a mixture of class conditional distributions,
\citet{CastelliC1995,CastelliC1996} have shown that the generalisation error
will be reduced exponentially in the number of labelled examples if the mixture
is identifiable.
More recently, \cite{SinghNZ2008} showed that the mixture components can be identified if
$p(\x)$ is a mixture of a finite number of smooth density functions, and the
separation between mixture components is large.
\cite{Rigollet2007} considers the cluster assumption in a nonparametric
setting, that is in terms of density level sets, and shows that the
generalisation error of a semi-supervised classifier decreases exponentially
given a sufficiently large number of unlabelled data. 
However, the cluster assumption is difficult
to verify with a limited number of labelled examples.
Furthermore, the algorithms proposed by~\citet{Rigollet2007}
and~\citet{SinghNZ2008} are difficult to implement efficiently even if the
cluster assumption holds. This renders them impractical for real-world
problems~\citep{JiYLJH2012}.

Although intuitive, the claim that maximising the margin over (labelled and)
unlabelled data is equivalent to identifying the hyperplane that goes through
regions with the lowest possible probability density has received surprisingly
little attention.  The work of~\cite{BenDavidLPS2009} is the only attempt we
are aware of to theoretically investigate this claim.
\citet{BenDavidLPS2009} quantify the notion of a low-density separator by
defining the {\em density on a hyperplane}\/, as the integral of the
probability density function along the hyperplane.
They study the existence of universally consistent algorithms to compute the
hyperplane with minimum density.
The maximum hard margin classifier is shown to be consistent only in one
dimensional problems.
In higher dimensions only a soft-margin algorithm is a consistent estimator of
the minimum density hyperplane.  \citet{BenDavidLPS2009} do not provide an
algorithm to compute low density hyperplanes.

This paper introduces a novel approach to clustering and semi-supervised
classification which directly identifies low-density hyperplanes in the finite
sample setting.
In this approach the density on a hyperplane criterion proposed
by~\cite{BenDavidLPS2009} is directly minimised with respect to a kernel
density estimator that employs isotropic Gaussian kernels.
The density on a hyperplane provides a uniform upper bound on the value of the
empirical density at points that belong to the hyperplane. This bound is tight
and proportional to the density on the hyperplane. Therefore, the smallest
upper bound on the value of the empirical density on a hyperplane is achieved
by hyperplanes that minimise the density on a hyperplane criterion.
An important feature of the proposed approach is that the density on a
hyperplane can be evaluated exactly through a one-dimensional kernel density
estimator, constructed from the projections of the data sample onto the vector
normal to the hyperplane.
This renders the computation of minimum density hyperplanes tractable even in
high dimensional applications.

We establish a connection between the minimum density hyperplane and the
maximum margin hyperplane in the finite sample setting.
In particular, as the bandwidth of the kernel density estimator is reduced
towards zero, the minimum density hyperplane converges to the maximum margin
hyperplane. An intermediate result establishes that there exists a positive
bandwidth such that the partition of the data sample induced by the minimum
density hyperplane is identical to that of the maximum margin hyperplane.
%

The remaining paper is organised as follows:
The formulation of the minimum density hyperplane problem as well as basic
properties are presented in Section~\ref{sec:formulation}.
Section~\ref{sec:MaxMarg}  establishes the connection between minimum density
hyperplanes and maximum margin hyperplanes. 
Section~\ref{sec:Methodology} discusses the estimation of
minimum density hyperplanes and the computational complexity of the resulting
algorithm.
Experimental results are presented in
Section~\ref{sec:Exper}, followed by concluding remarks and future research
directions in Section~\ref{sec:concl}.

\section{Problem Formulation}\label{sec:formulation}

We study the problem of estimating a hyperplane to partition a finite data set,
$\X = \{ \x_i\}_{i=1}^n \subset \R^d$, without splitting any of the
high-density clusters present.
We assume that $\X$ is an i.i.d. sample
of a random variable $\mathbf{X}$ on $\R^d$, with unknown probability density function $p:\R^d \to \R^+$.
A hyperplane is defined as $H(\v,b) \defeq \{ \x \in \R^d \,|\, \v \cdot \x = b\}$, where without loss of
generality we restrict attention to hyperplanes with unit normal vector, i.e., those parameterised by
$(\v,b) \in \B \times \R$, where $\B = \{ \v \in \R^d \, \big|\, \| \v\|=1\}$. 
Following~\citet{BenDavidLPS2009} we define the {\em density on the hyperplane} $H(\v,b)$
as the integral of the probability density function along the hyperplane,
\begin{equation}\label{eq1}
I(\v,b) \defeq \int_{ H(\v,b)} p(\x) \dd \x.
\end{equation}

We approximate $p(\x)$ through a kernel density estimator with isotropic Gaussian
kernels,
\begin{equation}\label{eq3}
\hat{p}(\x \vert \X, h^2 I) = \frac{1}{n (2 \pi h^2)^{d/2} } \sum_{i=1}^n \exp \left\{ -\frac{ \|\x - \x_i \|^2}{2 h^2} \right\}.
\end{equation}
This class of kernel density estimators has the useful property that the
integral in Equation~(\ref{eq1}) can be evaluated exactly
by projecting $\X$ onto $\v$; constructing a one-dimensional density estimator
with Gaussian kernels and bandwidth~$h$; and evaluating the density at~$b$,
\begin{align}\label{eq4}
\hat I(\v,b| \X, h^2 I) & \defeq \int_{H(\v, b)} \hat{p} \left(\x | \X, h^2 I \right) \dd \x, \nonumber \\
& = \frac{1}{n\sqrt{2\pi h^2 }} \sum_{i=1}^n  \exp \left\{- \frac{ (b - \v \cdot \x_i)^2 }{2 h^2} \right\}
=  \hat{p} \left(b \, | \, \{ \v\cdot \x_i \}_{i=1}^n, h^2 \right).
\end{align}
The univariate kernel estimator $\hat{p} \left(\cdot \, | \, \{ \v\cdot \x_i \}_{i=1}^n, h^2 \right)$
approximates the {\em projected density on~$\v$}, that is,
the density function of the random variable, $X_\v = \mathbf{X} \cdot \v$.
Henceforth we use $\hat{I}(\v,b)$ to approximate $I(\v,b)$.
To simplify terminology we refer to $\hat I(\v,b)$ as the {\em density on
$H(\v,b)$}, or the {\em density integral on $H(\v,b)$}, rather than the empirical
density, or the empirical density integral, respectively.
For notational convenience we write 
$\hat{I}(\v, b)$ for $\hat{I}(\v,b| \X, h^2 I)$, where $\X$ and $h$
are apparent from context.

The following Lemma, adapted from~\cite[Lemma 3]{TasoulisTP2010}, shows that
$\hat{I}(\v,b)$ provides an upper bound for the maximum value of the
empirical density at any point that belongs to the hyperplane.
\begin{lemma}\label{lem:levsetbound}
Let $\X = \{\x_i\}_{i=1}^n \subset \R^d$, and $\hat{p}(\x|\X,h^2 I)$
be a kernel density estimator with isotropic Gaussian kernels.
Then, for any $(\v,b) \in \B\times \R$,
\begin{equation}\label{lm:Tasoulis}
\max_{\x \in H(\v,b)} \hat{p} (\x| \X,h^2 I) \leqslant \left(2 \pi h^2 \right)^{\frac{1-d}{2}} \hat{I}(\v, b),
\quad \textrm{for all } \x \in H(\v, b).
\end{equation}
\end{lemma}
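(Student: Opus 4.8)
The plan is to reduce the $d$-dimensional bound to the one-dimensional identity~(\ref{eq4}) by an orthogonal decomposition of each vector $\x-\x_i$ relative to the unit normal $\v$.

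First I would fix $(\v,b)\in\B\times\R$ and an arbitrary point $\x\in H(\v,b)$, so that $\v\cdot\x=b$. For each sample point $\x_i$ I would split $\x-\x_i$ into its component along $\v$ and its component in the orthogonal complement of $\v$, namely
\[
\x-\x_i = (b-\v\cdot\x_i)\,\v + \bigl(\x-\x_i-(\v\cdot(\x-\x_i))\v\bigr).
\]
Since $\|\v\|=1$ and the two summands are orthogonal, Pythagoras gives $\|\x-\x_i\|^2 = (b-\v\cdot\x_i)^2 + \bigl\|\x-\x_i-(\v\cdot(\x-\x_i))\v\bigr\|^2 \geqslant (b-\v\cdot\x_i)^2$. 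Hence every Gaussian term in~(\ref{eq3}) is dominated by the corresponding one-dimensional term, $\exp\{-\|\x-\x_i\|^2/(2h^2)\}\leqslant \exp\{-(b-\v\cdot\x_i)^2/(2h^2)\}$.

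Next I would sum this over $i=1,\dots,n$, divide by $n(2\pi h^2)^{d/2}$, and factor out $(2\pi h^2)^{(1-d)/2}$ so that what remains is precisely the univariate estimator $\hat p\bigl(b\mid\{\v\cdot\x_i\}_{i=1}^n,h^2\bigr)$, which by~(\ref{eq4}) equals $\hat I(\v,b)$. This yields $\hat p(\x\mid\X,h^2 I)\leqslant (2\pi h^2)^{(1-d)/2}\,\hat I(\v,b)$ for the chosen $\x$. Since $\x\in H(\v,b)$ was arbitrary, taking the supremum over $\x\in H(\v,b)$ on the left preserves the inequality and gives~(\ref{lm:Tasoulis}).

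There is no genuine obstacle here: the only point needing a little care is that the bound is first obtained pointwise on $H(\v,b)$ and then transferred to the maximum, which is legitimate because the inequality holds for the supremum regardless of attainment (though one may note that $\hat p(\cdot\mid\X,h^2 I)$ restricted to $H(\v,b)$ is continuous and vanishes at infinity, so the maximum is in fact attained). One should also verify the bookkeeping of the normalising constants $(2\pi h^2)^{-d/2}$ versus $(2\pi h^2)^{-1/2}$ to confirm the exponent $(1-d)/2$, but this is routine.
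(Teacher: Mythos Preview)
Your argument is correct and is precisely the approach the paper indicates: it notes that the proof ``relies on the fact that projection contracts distances, and follows from simple algebra,'' which is exactly your Pythagorean inequality $\|\x-\x_i\|^2\geqslant (b-\v\cdot\x_i)^2$ followed by termwise comparison and the bookkeeping of normalising constants. There is nothing to add.
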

This lemma shows that a hyperplane, $H(\v,b)$, cannot intersect level sets of
the empirical density with level higher than $\left(2 \pi h^2
\right)^{\frac{1-d}{2}} \hat{I}(\v, b)$.
The proof of the lemma relies on the fact that projection contracts distances,
and follows from simple algebra.
In Equation~(\ref{lm:Tasoulis}) equality holds if and only if
there exists $\x \in H(\v,b)$ and $\mathbf{c} \in \R^n$ such that all $\x_i \in
\X$, can be written as $\x_i = \x + c_i \v$.
It is therefore not possible to obtain a uniform upper bound on the value of
the empirical density at points that belong to $H(\v,b)$ that is lower than
$\left(2 \pi h^2 \right)^{\frac{1-d}{2}} \hat{I}(\v, b)$ using only
one-dimensional projections. 
Since the upper bound of Lemma~\ref{lem:levsetbound} is tight and proportional
to $\hat{I}(\v, b)$, minimising the density on the hyperplane leads to the
lowest upper bound on the maximum value of the empirical density along the
hyperplane separator.

To obtain hyperplane separators that are meaningful for clustering and semi-supervised classification,
it is necessary to constrain the set of feasible solutions, because the density
on a hyperplane can be made arbitrarily low by considering a hyperplane that intersects
only the tail of the density. In other words, for any $\v$, $\hat{I}(\v,b)$ can be made arbitrarily low for sufficiently
large~$|b|$.
In both problems the constraints restrict the feasible set
to a subset of the hyperplanes that intersect the
interior of the convex hull of $\X$. 
In detail, let $\conv\X$ denote the convex hull of~$\X$, and assume
$\mbox{Int}(\conv\X)\neq \emptyset$. Define~$C$ to be the set of hyperplanes that
intersect $\mbox{Int}(\conv\X)$,
\begin{equation}\label{eq:domain}
C =\left\{ H(\v, b) \, \Big{\vert} \, (\v, b) \in \B \times \R, \, \exists \mathbf{z} \in \mbox{Int}(\conv
\X) \mbox{ s.t. } \v \cdot \mathbf{z} = b\right\}.
\end{equation}
Then denote by $F$ the set of feasible hyperplanes, where $F \subset C$.
We define the {\em minimum density hyperplane} (MDH), $H(\vs,\bs) \in F$ to satisfy,
\begin{equation}\label{eq2}
\hat{I}(\vs,\bs) = \min_{(\v, b) \vert H(\v, b) \in F} \hat{I}(\v,b).
\end{equation}
In the following subsections we discuss the specific formulations for clustering
and semi-supervised classification in turn.

\subsection{Clustering} \label{ssec:UL}

Since high-density clusters are formed around the modes of~$p(\x)$,
the convex hull of these modes would be
a natural choice to define the set of feasible hyperplanes.
Unfortunately, this convex hull is unknown and difficult to estimate.
We instead propose to constrain the distance of hyperplanes to the origin,~$b$.
Such a constraint is inevitable as for any $\v \in \B$, $\hat{I}(\v,b)$ can
become arbitrarily close to zero for sufficiently large~$|b|$.
Obviously, such hyperplanes are inappropriate for the purposes of
bi-partitioning as they assign all the data to the same partition.
Rather than fixing~$b$ to a constant, we constrain it in the interval, 
\begin{equation}\label{constr1}
F(\v) = \left[ \mu_{\v} - \alpha \sigma_\v, \mu_\v + \alpha \sigma_\v \right],
\end{equation} 
where $\mu_\v$ and $\sigma_\v$ denote the mean and standard deviation, respectively, of the
projections $\{\v \cdot \x_i\}_{i=1}^n$.
The parameter~$\alpha \geqslant 0$, controls the width of the interval, and has a
probabilistic interpretation from Chebyshev's inequality.
Smaller values of $\alpha$ favour more balanced partitions of the data
at the risk of excluding low density hyperplanes that separate
clusters more effectively.
On the other hand, increasing~$\alpha$ increases the risk
of separating out only a few outlying observations.
We discuss in detail how to set this parameter in the experimental results section.
If $\mathrm{Int}( \conv \X) \neq \emptyset$, then there exists $\alpha>0$ 
such that the set of feasible hyperplanes for clustering, $F_{\mathrm{CL}}$, satisfies,
\begin{equation}\label{eq:feasible}
F_{\mathrm{CL}} = \left\{ H(\v,b) \, \Big{\vert} \, (\v,b) \in \B \times \R, \, b \in F(\v) \right\} \subset C,
\end{equation}
where $C$ is the set of hyperplanes that intersect $\mathrm{Int}(\conv \X)$,
as defined in Equation~(\ref{eq:domain}).

The minimum density hyperplane for clustering is the solution to the following constrained
optimisation problem,
\begin{subequations}\label{eq:ULconstr1}
\begin{align}
& \min_{(\v, b) \in  \B \times \R} \  \hat I(\v, b),  \\
\textrm{subject to:} \ \ &	b - \mu_{\v} + \alpha \sigma_\v \geqslant 0, \\
& \mu_{\v} + \alpha \sigma_\v - b \geqslant 0.
\end{align}
\end{subequations}
Since the objective function and the constraints are continuously
differentiable, MDHs can be estimated through
constrained optimisation methods like sequential quadratic programming
(SQP).
Unfortunately the problem of local minima due to the nonconvexity of the
objective function seriously hinders the effectiveness of this approach.

To mitigate this we propose a parameterised optimisation formulation,
which gives rise to a projection pursuit approach. 
Projection pursuit methods optimise a measure of ``interestingness'' of a
linear projection of a data sample, known as the projection index.
For our problem the natural choice of projection index for~$\v$
is the minimum value of the projected density within the feasible region,
$\min_{b \in F(\v)} \hat{I}(\v,b)$.
This index gives the minimum density integral of feasible hyperplanes with normal vector~$\v$.
To ensure the differentiability of the projection index we
incorporate a penalty term into the objective function.
We define the penalised density integral as,
\begin{align}
\fUL(\v,b) & = \hat{I}(\v,b) + \frac{L}{\eta^\epsilon} \max \left\{ 0, \mu_\v  -\alpha \sigma_\v - b, b - \mu_\v  -\alpha \sigma_\v \right\}^{1+\epsilon},\label{eq:PenalisedUL} 
\end{align}
where,
$L = \left(e^{1/2}h^2 \sqrt{2 \pi}\right)^{-1} \geqslant \sup_{b \in \R}
\left| \frac{\partial \hat{I}(\v,b)}{\partial{b}} \right|$, $\epsilon \in (0,1)$ is a constant term that ensures that the
penalty function is everywhere continuously differentiable, and $\eta \in
(0,1)$.
Other penalty functions are possible, but we only consider the above due to its
simplicity, and the fact that its parameters offer a direct interpretation: $L$
in terms of the derivative of the projected density on $\v$; and $\eta$ in terms of the
desired accuracy of the minimisers of $\fUL(\v,b)$ relative to the minimisers
of Equation~(\ref{eq:ULconstr1}), as discussed in the following proposition.

\begin{proposition} \label{prop:minim}

For $\v \in \B$, define, the set of minimisers,
\begin{align}
	B(\v) & =  \argmin_{b \in F(\v)} \hat{I}(\v,b), \\
B_C(\v) & = \argmin_{b \in \R} \fUL(\v,b)
\end{align}
For every $\bs \in B(\v)$ there exists $\bs_C \in B_C(\v)$ such that $| \bs  - \bs_C| \leqslant \eta$.
Moreover, there are no minimisers of $\fUL(\v,b)$ outside the interval $[\mu_v - \alpha \sigma_\v -\eta,  \mu_v + \alpha \sigma_\v + \eta]$,
\[
B_C(\v) \cap \R \backslash [\mu_v - \alpha \sigma_\v -\eta, \mu_v + \alpha \sigma_\v +\eta] = \emptyset.
\]

\end{proposition}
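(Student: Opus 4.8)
The plan is to fix $\v\in\B$ and write $a=\mu_\v-\alpha\sigma_\v$, $c=\mu_\v+\alpha\sigma_\v$, so that $F(\v)=[a,c]$ and the penalty in $\fUL(\v,\cdot)$ equals $\frac{L}{\eta^\epsilon}\rho(b)^{1+\epsilon}$ with $\rho(b)=\max\{0,a-b,b-c\}=\mathrm{dist}(b,[a,c])$. Everything rests on three elementary observations: (i) $\hat I(\v,\cdot)$ is $L$-Lipschitz in $b$, since by hypothesis $L\geqslant\sup_{b}|\partial\hat I(\v,b)/\partial b|$; (ii) $\fUL(\v,b)\to+\infty$ as $|b|\to\infty$, because the penalty grows like $|b|^{1+\epsilon}$ while $\hat I$ stays bounded, so by continuity $B_C(\v)\neq\emptyset$; and (iii) $\fUL(\v,b)=\hat I(\v,b)$ on $[a,c]$ while $\fUL(\v,b)\geqslant\hat I(\v,b)$ everywhere.

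First I would dispatch the ``moreover'' statement, which is self-contained. For $b>c$, $\partial\fUL(\v,b)/\partial b=\partial\hat I(\v,b)/\partial b+\frac{L(1+\epsilon)}{\eta^\epsilon}(b-c)^\epsilon$, and when $b>c+\eta$ the last term exceeds $L(1+\epsilon)$, so the derivative exceeds $-L+L(1+\epsilon)=L\epsilon>0$. Hence $\fUL(\v,\cdot)$ is strictly increasing on $(c+\eta,\infty)$ and, symmetrically, strictly decreasing on $(-\infty,a-\eta)$; so no local minimiser, in particular no element of $B_C(\v)$, lies outside $[a-\eta,c+\eta]$.

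For the proximity claim, set $m=\min_{b\in[a,c]}\hat I(\v,b)$ and $\tilde m=\min_{b\in\R}\fUL(\v,b)\leqslant m$. If $\tilde m=m$, then for any $\bs\in B(\v)$ we have $\fUL(\v,\bs)=\hat I(\v,\bs)=m=\tilde m$, so $\bs\in B_C(\v)$ and $\bs_C=\bs$ works. Assume then $\tilde m<m$. On $[a,c]$ we have $\fUL=\hat I\geqslant m>\tilde m$, so every $\bs_C\in B_C(\v)$ lies outside $[a,c]$, hence (by the confinement step) in $(a-\eta,a)\cup(c,c+\eta)$; say $\bs_C\in(c,c+\eta)$, the other case being symmetric. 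As an interior minimiser $\bs_C$ satisfies $\partial\hat I(\v,\bs_C)/\partial b=-\frac{L(1+\epsilon)}{\eta^\epsilon}(\bs_C-c)^\epsilon$, and $|\partial\hat I/\partial b|\leqslant L$ then forces $(\bs_C-c)^\epsilon\leqslant\eta^\epsilon/(1+\epsilon)$, in particular $|\bs_C-c|<\eta$. It therefore suffices to show that in this regime every $\bs\in B(\v)$ lies within $\eta$ of the relevant endpoint $c$ (or $a$). The mechanism I would use is again the Lipschitz bound: $\fUL(\v,\bs_C)<m$ with $\hat I(\v,c)\geqslant m$ forces $\hat I$ to descend from its endpoint value at nearly the maximal rate $L$ over the short interval $[c,\bs_C]$, and combining this with the Lipschitz estimate running back into $[a,c]$ constrains where $\hat I$ can attain the level $m$ inside $[a,c]$.

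\textbf{The main obstacle.} The Lipschitz and coercivity facts and the derivative-sign argument for the confinement statement are routine; the genuine difficulty is the last step of the proximity claim. One must rule out configurations in which $\fUL(\v,\cdot)$ dips below $m$ near one endpoint while $\hat I|_{[a,c]}$ attains its minimum far from that endpoint (for instance at an interior point where $\hat I$ is nearly flat), since there the corresponding constrained minimiser would have no global minimiser of $\fUL$ within $\eta$. This is exactly the point where the quantitative link between the penalty weight $L/\eta^\epsilon$ and the tolerance $\eta$ has to be exploited, and I expect it to require either a sharper use of the Lipschitz bound on $\hat I$ at the endpoints, or a mild regularity assumption on the constrained minimiser (such as strictness/uniqueness); it is the step most in need of care.
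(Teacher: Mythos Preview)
Your setup and the confinement argument are correct; the paper proves the confinement by a mean-value-theorem estimate (for $b>b^{+}+\eta$ it shows $\fUL(\v,b)>\hat I(\v,b^{+})$, and symmetrically on the left) rather than by your derivative-sign computation, but the two are equivalent. For the proximity claim the paper's proof is much shorter than yours: it splits on whether $\bs$ lies in the interior of $F(\v)$ or on its boundary. For interior $\bs$ it simply asserts $\bs\in B_C(\v)$, reasoning only from $\fUL=\hat I$ on $F(\v)$; for boundary $\bs=b^{\pm}$ it invokes the same MVT estimate to conclude that $\fUL$ cannot beat $\hat I(\v,b^{\pm})$ beyond the $\eta$-strip.

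The obstacle you isolate is real, and the paper's proof does not address it. The interior case of the paper tacitly assumes $\tilde m=m$: the fact that $\fUL=\hat I$ on $F(\v)$ only shows that an interior constrained minimiser is a minimiser of $\fUL$ \emph{over $F(\v)$}, not over $\R$. If $\fUL$ dips strictly below $m$ in one of the strips $(c,c+\eta)$ or $(a-\eta,a)$ while the unique constrained minimiser sits at an interior point far from both endpoints---which the Lipschitz bound alone does not preclude when $\hat I(\v,b^{\pm})-m$ is small and $\hat I$ decreases near rate $L$ just outside the endpoint---then neither your argument nor the paper's supplies a nearby $\bs_C$. Your case split $\tilde m=m$ versus $\tilde m<m$ is in fact the more careful decomposition; the difficulty you encounter in closing the second case is not a failure of technique but the same gap that the paper's brief argument glosses over.
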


\begin{proof}

Any minimiser in the interior of the feasible region, $\bs \in  B(\v) \cap \mbox{Int}(F(\v))$, also minimises the penalised function,
since $\fUL(\v,b)= \hat{I} (\v, b)$ for all $b \in \mbox{Int}(F(\v))$, hence $\bs \in B_C(\v)$.

Next we consider the case when either or both of the boundary points of $F(\v)$,
$b^- = \mu_v - \alpha \sigma_\v$ and $b^+ = \mu_v + \alpha \sigma_\v$, are contained in $B(\v)$. 
It suffices to show that,
$\fUL(\v,b) > \hat{I}(\v, b^-)$ for all $b < b^- - \eta$, and
$\fUL(\v,b) > \hat{I}(\v, b^+)$ for all $b > b^+ + \eta$.
We discuss only the case $b > b^+ + \eta$ as the treatment of $b<b^- - \eta$ is
identical. Assume that $\hat{I}(\v, b) < \hat{I}(\v, b^+)$ (since in the opposite case
the result follows immediately: $\fUL(\v,b) > \hat{I}(\v, b) > \hat{I}(\v, b^+)$).
From the mean value theorem there exists $\xi \in (b^+,b)$ such that,
\begin{align*}
\hat{I}(\v, b^+) & = \hat{I}(\v, b) - (b - b^+) \left.\frac{ \partial{ \hat{I}(\v,b)} }{\partial{b}} \right|_{b=\xi} \\
& \leqslant \hat{I}(\v,b) + (b - b^+) L \\
& < \hat{I}(\v, b) + \frac{L (b - b^+)^{1+\epsilon}}{\eta^\epsilon} = \fUL(\v,b).
\end{align*}
In the above we used the following facts: $\left.\frac{ \partial{ \hat{I}(\v,b)} }{\partial{b}} \right|_{b=\xi} <0$, $L \geqslant \sup_{b \in \R} \left| \frac{ \partial{ \hat{I}(\v,b)} }{\partial{b}} \right|$, and
$\frac{b - b^+}{\eta}>1$.

\end{proof}

We define the 
projection index for the clustering problem as the minimum of the penalised density integral,
\begin{equation}\label{eq:PrIndexUL}
\phiUL(\v) = \min_{b \in \R} \fUL(\v,b).
\end{equation}
Since the optimisation problem of Equation~(\ref{eq:PrIndexUL}) is one-dimensional 
it is simple to compute the set of global minimisers $B_C(\v)$.
As we discuss in Section~\ref{sec:Methodology},
this is necessary to compute directional derivatives of the projection index,
as well as, to determine whether $\phiUL$ is differentiable.
We call the optimisation of $\phiUL$, {\em minimum density projection pursuit}
(MDP$^2$).
For each $\v$, \MDP considers only the optimal choice of $b$. This enables it to
avoid local minima of $\hat{I}(\v, \cdot)$.
Most importantly \MDP is able to accommodate a discontinuous change in the
location of the global minimiser(s), $\argmin_{b \in\R} \fUL(\v,b)$, as $\v$ changes.
Neither of the above can be achieved when the optimisation is jointly over $(\v,b)$ as
in the original constrained optimisation problem, Equation~(\ref{eq:ULconstr1}).
The projection index $\phiUL$ is continuous, but it
is not guaranteed to be everywhere differentiable when $B_C(\v)$ is not a
singleton. The resulting optimisation problem is therefore nonsmooth and
nonconvex.

To illustrate the effectiveness of \MDP to estimate MDHs,
we compare this approach with a direct optimisation of the constrained problem
given in Equation~(\ref{eq:ULconstr1}) using SQP.
To enable visualisation we consider the two-dimensional~S1
data set~\citep{FrantiV2006}, constructed by sampling from a Gaussian
mixture distribution with fifteen components, where each component corresponds to a cluster. 
\begin{figure}[t]
\begin{center}
\subfigure[SQP]{\includegraphics[scale=0.3]{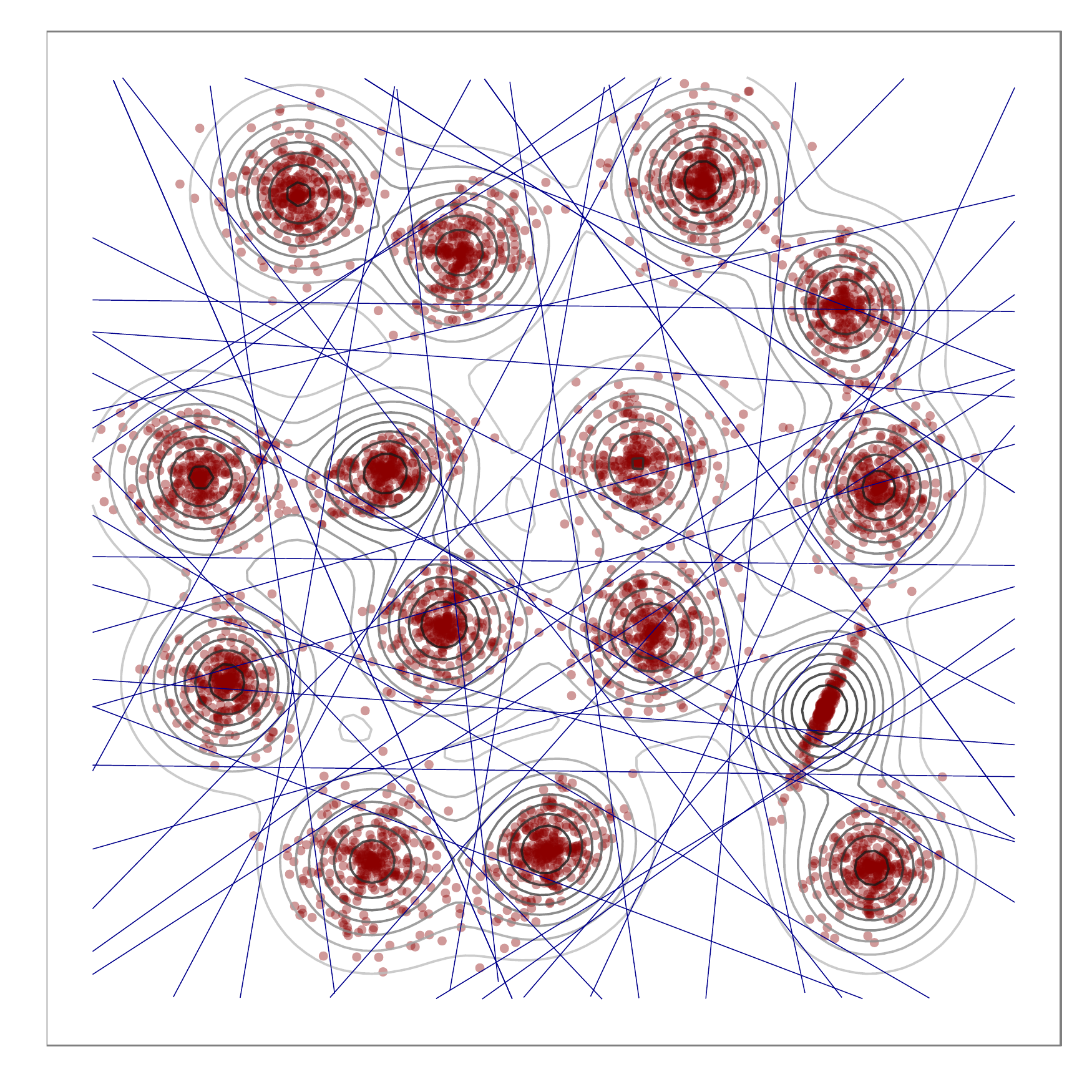}\label{fig:SQP1}} \hspace{0.5cm}
\subfigure[\MDP]{\includegraphics[scale=0.3]{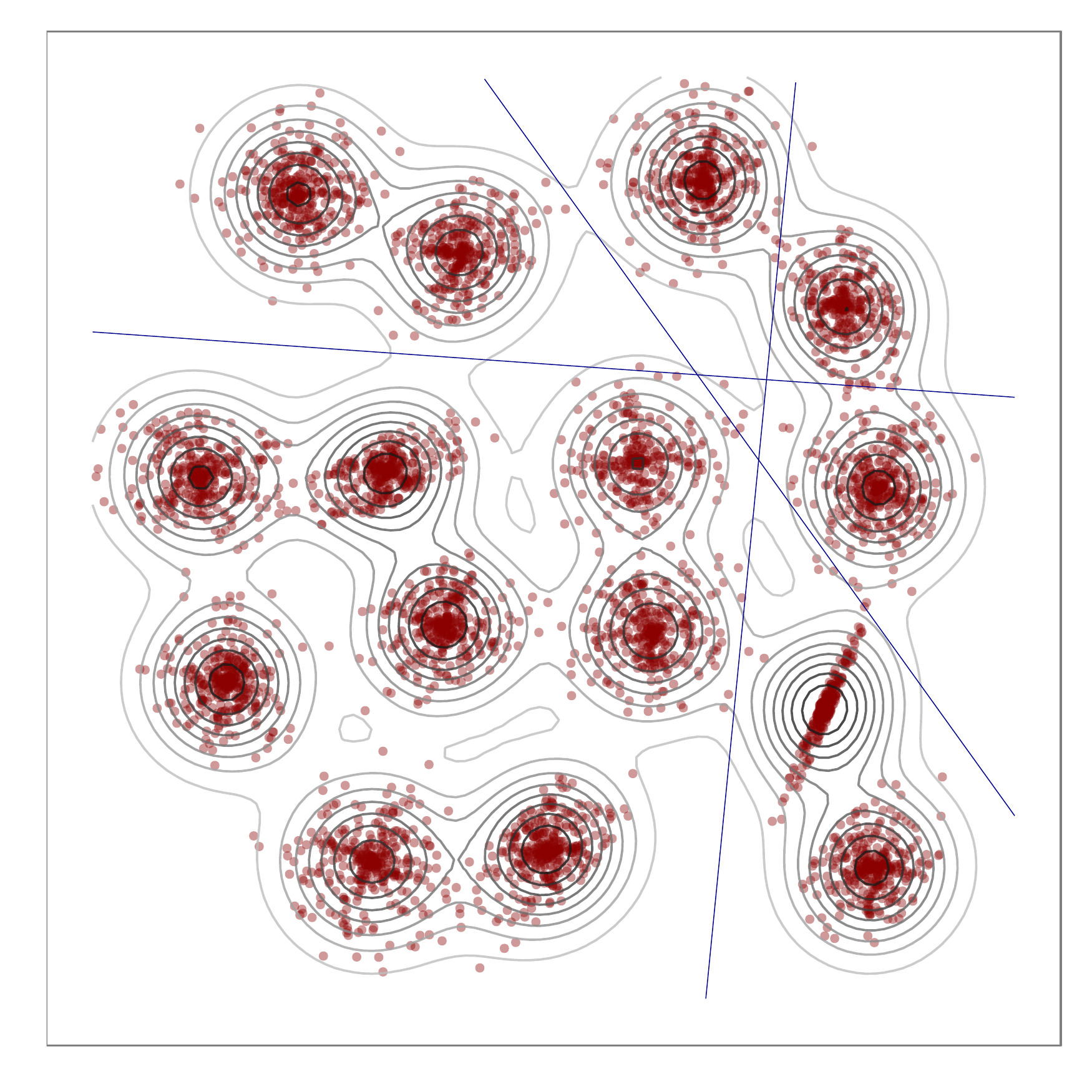}\label{fig:MDP1}}
\end{center}
\caption{Binary partitions induced by 100 MDHs estimated through SQP and
\MDP}\label{fig:SQPvsMDP}
\end{figure} 
Figure~\ref{fig:SQPvsMDP} depicts the
MDHs obtained over 100 random initialisations of SQP and MDP$^2$.
It is evident that SQP frequently yields hyperplanes that intersect regions with
high probability density thus splitting clusters.
As SQP always converged in these experiments the poor performance is solely
due to convergence to local minima.
In contrast, \MDP converges to three different solutions over the 100
experiments, all of which induce high quality partitions,
and none intersects a high-density cluster. 
\begin{figure}\centering
\includegraphics[scale=0.4]{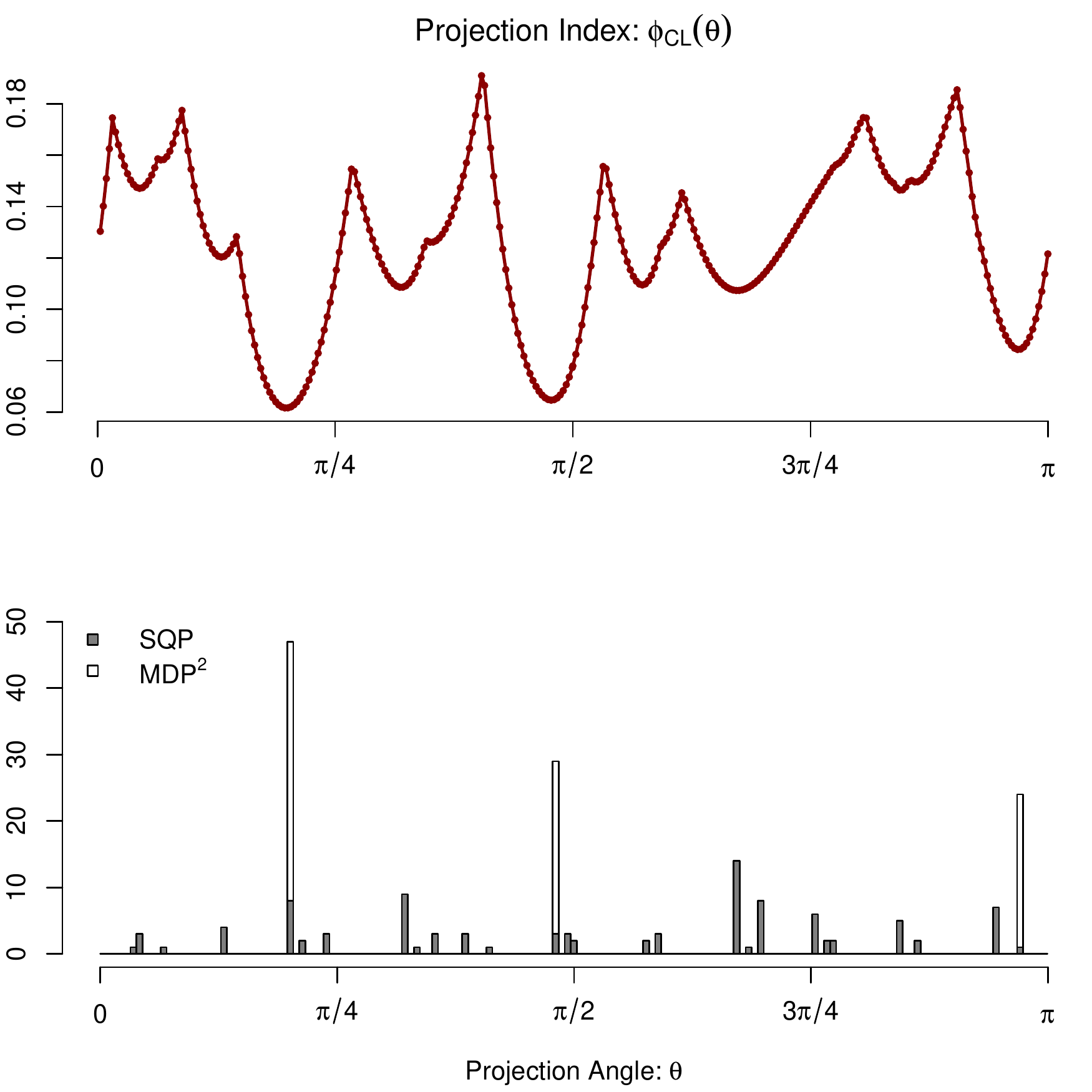}
\caption{Projection index for S1 data set and solutions obtained through SQP and MDP$^2$}\label{fig:hist}
\end{figure}
In polar coordinates any $\v \in \mathcal{S}^1$ can be parameterised
through a single projection angle. Using this parameterisation, the upper plot
of Figure~\ref{fig:hist} depicts the value of the projection index,
$\phiUL(\v(\theta))$, for $\theta \in [0,\pi]$. The lower plot of the figure
provides histograms of the distribution of the solutions (locally optimal
projection angles) obtained over the 100 experiments with SQP (grey) and
MDP$^2$ (white).  The figure shows that $\phiUL(\v)$ is continuous but not
everywhere differentiable. The solution most frequently obtained through
MDP$^2$ corresponds to the global optimum, while the only other two solutions
identified are the local minimisers with the next two lowest function values.
In contrast SQP converges to a much wider range of solutions. Note that this method is
not guaranteed to identify the optimal value of $b$ for any $\v(\theta)$ and
this indeed occurs in this example. Therefore the value of $\phiUL(\v)$
is a lower bound for the function values of the minimisers identified through
SQP.

\subsection{Semi-Supervised Classification}\label{ssec:SSL}

In semi-supervised classification labels are available for a subset of
the data sample. The resulting classifier needs to predict as accurately as
possible the labelled examples, while avoiding intersection with high-density
regions of the empirical density.
The MDH formulation can readily accommodate partially labelled data by
incorporating the
linear constraints associated with the labelled data into the clustering formulation.
Without loss of generality assume that the first~$\ell$ examples are labelled
by $\y = (y_1,\ldots,y_\ell)^{\top} \in \{-1,1\}^\ell$.
The MDH for semi-supervised classification is the solution to the
problem,
\begin{subequations}\label{eq:SSconstr1}
\begin{align}
& \min_{(\v, b) \in  \B \times \R} \  \hat I(\v, b), \label{eq:SSLobj1} \\
\textrm{subject to:} & \quad  \ y_i(\v \cdot \x_i - b) \geqslant 0, \
	\forall i=1,\ldots,\ell, \label{eq:SSLconstr} \\
	&\quad b - \mu_{\v} + \alpha \sigma_\v \geqslant 0,
	\label{eq:SSLConstr2}\\
&\quad \mu_\v + \alpha \sigma_\v - b \geqslant 0,\label{eq:SSLConstr3}
\end{align}
\end{subequations}
where $\hat I(\v, b), \mu_\v$, and $\sigma_\v$ are computed over the entire
data set.
If the labelled examples are linearly separable the constraints in
Equation~(\ref{eq:SSconstr1}) define a nonempty feasible set of hyperplanes,
\begin{equation}
F_{\mathrm{LB}} = \left\{ H(\v,b) \,|\, (\v,b) \in \B\times\R, b \in F(\v), y_i(\v \cdot \x_i - b) \geqslant 0, \  \forall i \in \{1,\ldots,l\} \right\}\subset C.	
\end{equation}
Equations~(\ref{eq:SSLConstr2}) and~(\ref{eq:SSLConstr3}) act as a {\em balancing
constraint} which discourages MDHs that classify the vast majority of unlabelled data to
a single class. Balancing constraints are included in the estimation of
\S3VMs for the same reason~\citep{Joachims1999,ChapelleZ2005}.

As in the case of clustering, the direct minimisation of Equation~(\ref{eq:SSconstr1})
frequently leads to locally optimal solutions.
To mitigate this we again propose a projection pursuit formulation. 
We define the penalised density integral for semi-supervised classification as,
\begin{align}\label{eq:PenalisedSSL}
\fSL(\v,b) = \fUL(\v,b) + \gamma \sum_{i=1}^l \max \left\{0, -y_i (\v \cdot \x_i - b)\right\}^{1+\epsilon}
\end{align}
where, $\gamma>0$ is a user-defined constant, which controls the trade-off
between reducing the density on the hyperplane, and misclassifying the
labelled examples.
The projection index is then defined as the minimum of the penalised density integral,
\begin{align}
\phiSSL(\v) & = \min_{b \in \R} \fSL(\v,b) .\label{eq:PrIndexSSL} 
\end{align}

\section{Connection to Maximum Margin Hyperplanes}
\label{sec:MaxMarg}

In this section we discuss the connection between MDHs and maximum (hard) margin hyperplane separators.
The margin of a hyperplane $H(\mathbf{v}, b)$ with respect to a data set~$\X$
is defined as the minimum Euclidean distance between the hyperplane and its nearest datum,
\begin{equation}
\mbox{margin}\, H(\mathbf{v}, b) = \min_{\x \in \X}\vert \mathbf{v}\cdot \x - b \vert.
\end{equation}
The points whose distance to the hyperplane $H(\v,b)$ is equal to the margin of the hyperplane,
that is,
$
\argmin_{\x \in \X} \vert \mathbf{v}\cdot \x - b \vert,
$
are called the {\em support points} of $H(\v,b)$.
Let $F$ denote the set of feasible hyperplanes;
then the {\em maximum margin hyperplane} (MMH), $H(\v^m,b^m) \in F$ satisfies,
\begin{equation}
\margin\, H(\v^m,b^m) = \max_{(\v, b) \vert H(\v, b)\in F} \margin\, H(\v,b).
\end{equation}

The main result of this section is Theorem~\ref{thm:convergence},
which states that as the bandwidth parameter, $h$, is reduced to zero the
MDH converges to the MMH.
An intermediate result, Lemma~\ref{lm:samedivision}, shows that there exists a
positive bandwidth, $h'>0$ such that, for all $h \in (0,h')$, the partition of
the data set induced by the MDH is identical to that
of the MMH.

We first discuss some assumptions which allow us to present the
theoretical results of this section. As before we assume a fixed and
finite data set $\X \subset \R^d$, and approximate its (assumed) underlying
probability density function via a kernel density estimator using Gaussian
kernels with isotropic bandwidth matrix $h^2 I$. We assume that the interior of
the convex hull of the data, $\mbox{Int}(\conv \X)$, is non-empty,  and define
$C$ as the set of hyperplanes that intersect~$\mbox{Int}(\conv \ \X)$, as in
Equation~(\ref{eq:domain}). The set of feasible hyperplanes, $F$, for either
clustering or the semi-supervised classification satisfies $F \subset C$. By
construction every $H(\mathbf{v}, b) \in F$ defines a hyperplane which
partitions $\X$ into two non-empty subsets.
Observe that if for each $\v \in \B$ the set $\{b \in \R \,\vert\, H(\v, b) \in
F\}$ is compact, then by the compactness of $\B$ a maximum margin hyperplane
in~$F$ exists. For both the clustering and semi-supervised classification
problems this compactness holds by construction.

For any $h>0$, let $(\v^\star_h, b^\star_h) \in \B\times\R$ parameterise a hyperplane which
achieves the minimal density integral over all hyperplanes in $F$, for
bandwidth matrix $h^2I$. That is,
\begin{equation}
\hat{I}(\v^\star_h, b^\star_h) = \min_{(\v, b) \vert H(\v, b)\in F} \hat I(\v, b).
\end{equation}
Following the approach of~\citet{TongK2000} we first show that as the
bandwidth,~$h$, is reduced towards zero, the density on a hyperplane is
dominated by its nearest point.
This is achieved by establishing that for all sufficiently small values of $h$,
a hyperplane with non-zero margin has lower density integral than
any other hyperplane with smaller margin.

\begin{lemma}\label{lm:bddmargin}
Take $H(\v,b) \in F$ with non-zero margin and $0< \delta < \margin\, H(\v,b): = M_{\v, b}$.
Then $\exists h' >0$ such that $h \in (0,h')$ and $M_{\mathbf{w}, c}:=\margin\, H(\mathbf{w}, c) \leqslant M_{\v, b}-\delta$
implies $\hat{I}(\v,b) < \hat{I}(\w,c)$.
\end{lemma}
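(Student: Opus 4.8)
The plan is to bound the density integral $\hat{I}(\v,b)$ from above and below by expressions governed primarily by the single nearest projected point, and then show that when $h$ is small enough the dominant term forces the stated inequality. Recall from Equation~(\ref{eq4}) that
\[
\hat{I}(\v,b) = \frac{1}{n\sqrt{2\pi h^2}} \sum_{i=1}^n \exp\left\{-\frac{(b-\v\cdot\x_i)^2}{2h^2}\right\},
\]
so each summand is a decreasing function of the distance $|b - \v\cdot\x_i|$, and the nearest datum contributes the term $\exp\{-M_{\v,b}^2/(2h^2)\}$ (up to the common prefactor), which is the largest. The strategy is the standard one (attributed to~\citet{TongK2000} in the text): the summand from the support point dominates all others by an exponentially large factor as $h\to 0$.

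First I would establish a lower bound: $\hat{I}(\v,b) \geqslant \frac{1}{n\sqrt{2\pi h^2}} \exp\{-M_{\v,b}^2/(2h^2)\}$, simply by keeping only the support-point term. Next I would establish an upper bound for $\hat{I}(\w,c)$: since every projected point $\w\cdot\x_i$ is at distance at least $M_{\w,c}$ from $c$, we get $\hat{I}(\w,c) \leqslant \frac{n}{n\sqrt{2\pi h^2}} \exp\{-M_{\w,c}^2/(2h^2)\} = \frac{1}{\sqrt{2\pi h^2}}\exp\{-M_{\w,c}^2/(2h^2)\}$. Then the desired inequality $\hat{I}(\v,b) < \hat{I}(\w,c)$ would follow once
\[
\frac{1}{n}\exp\left\{-\frac{M_{\v,b}^2}{2h^2}\right\} < \exp\left\{-\frac{M_{\w,c}^2}{2h^2}\right\},
\]
i.e. once $\exp\{(M_{\v,b}^2 - M_{\w,c}^2)/(2h^2)\} < n$... wait, the inequality goes the wrong way here since $M_{\w,c} \leqslant M_{\v,b} - \delta < M_{\v,b}$ makes the left exponent's argument negative only after we rearrange. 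Let me restate: we need $\exp\{(M_{\w,c}^2 - M_{\v,b}^2)/(2h^2)\} < 1/n$, and since $M_{\w,c}^2 - M_{\v,b}^2 \leqslant (M_{\v,b}-\delta)^2 - M_{\v,b}^2 = -2\delta M_{\v,b} + \delta^2 = -\delta(2M_{\v,b}-\delta) < 0$ (using $\delta < M_{\v,b}$), the left-hand side tends to $0$ as $h\to 0$. Hence there is $h'>0$ making it smaller than $1/n$ for all $h\in(0,h')$, which completes the argument. Explicitly one may take any $h'$ with $\delta(2M_{\v,b}-\delta)/(2(h')^2) > \log n$.

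The main subtlety — not really an obstacle, but the place requiring care — is the chain of inequalities on the squared margins: one must use that $0<\delta<M_{\v,b}$ to guarantee $M_{\v,b}-\delta$ is nonnegative before squaring, so that $M_{\w,c}\leqslant M_{\v,b}-\delta$ indeed implies $M_{\w,c}^2 \leqslant (M_{\v,b}-\delta)^2$, and then to confirm the gap $\delta(2M_{\v,b}-\delta)$ is strictly positive. A secondary point is that the common prefactor $1/(n\sqrt{2\pi h^2})$ and the extra factor of $n$ from bounding the sum in $\hat{I}(\w,c)$ are both polynomial in $h$ (indeed $n$ is constant), hence negligible against the exponential gap; one should note this explicitly so the choice of $h'$ is seen to absorb the factor $n$. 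No compactness or continuity arguments are needed here — this is a clean two-sided exponential estimate for a fixed finite sample.
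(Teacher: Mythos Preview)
Your bounds are oriented the wrong way. To prove $\hat{I}(\v,b) < \hat{I}(\w,c)$ you need an \emph{upper} bound on $\hat{I}(\v,b)$ and a \emph{lower} bound on $\hat{I}(\w,c)$, but you supply the opposite: a lower bound on $\hat{I}(\v,b)$ (keeping only the support-point term) and an upper bound on $\hat{I}(\w,c)$ (bounding every summand by its largest possible value). From these one can only conclude that a lower bound for $\hat{I}(\v,b)$ lies below an upper bound for $\hat{I}(\w,c)$, which is vacuous. This is precisely why your algebra keeps ``going the wrong way'': the displayed inequality $\tfrac{1}{n}\exp\{-M_{\v,b}^2/(2h^2)\} < \exp\{-M_{\w,c}^2/(2h^2)\}$ rearranges to $\exp\{(M_{\w,c}^2 - M_{\v,b}^2)/(2h^2)\} < n$, and since the exponent is negative this holds for every $h>0$ and carries no information. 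Your ``restatement'' with $1/n$ in place of $n$ does not follow from the displayed line and, even if it did, would still not let you compare the two integrals.

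The fix is simply to swap the roles, as the paper does. Every projected distance $|b-\v\cdot\x_i|$ is at least $M_{\v,b}$, so each summand is at most $\exp\{-M_{\v,b}^2/(2h^2)\}$ and hence $\hat{I}(\v,b) \leqslant \tfrac{1}{h\sqrt{2\pi}}\exp\{-M_{\v,b}^2/(2h^2)\}$. For $(\w,c)$ keep only the nearest-point term to obtain $\hat{I}(\w,c) \geqslant \tfrac{1}{nh\sqrt{2\pi}}\exp\{-M_{\w,c}^2/(2h^2)\} \geqslant \tfrac{1}{nh\sqrt{2\pi}}\exp\{-(M_{\v,b}-\delta)^2/(2h^2)\}$. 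The ratio is then bounded by $n\exp\{-[M_{\v,b}^2-(M_{\v,b}-\delta)^2]/(2h^2)\}\to 0$ as $h\to 0^+$. Your remarks about needing $0<\delta<M_{\v,b}$ before squaring, and about the constant factor $n$ being absorbed by the exponential gap, are correct and apply verbatim once the bounds are oriented properly.
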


\begin{proof}

Using Equation~(\ref{eq4}) it is easy to see that,
\begin{align*}
\hat{I}(\v,b) & \leqslant \frac{1}{ h \sqrt{2 \pi} } \exp \left\{ - \frac{M_{\v, b}^2}{2 h^2} \right\}, \\
\inf \left\{ \hat I(\w,c) \, | \, M_{\w, c} \leqslant M_{\v, b} - \delta \right\} & \geqslant
	\frac{1}{n  h \sqrt{2 \pi} } \exp  \left\{ - \frac{(M_{\v, b}-\delta)^2}{2 h^2} \right\}.
\end{align*}
Therefore,
\begin{align*}
0 \leqslant \lim_{h \to 0^+} \frac{ \hat I(\v,b) }{\inf \left\{ \hat I(\w,c) \, | \, M_{\w, c} \leqslant M_{\v, b} - \delta \right\}}
& \leqslant \lim_{h \to 0^+} \frac{ n \exp \left\{ - \frac{M_{\v, b}^2}{2 h^2} \right\} }{ \exp \left\{ -\frac{(M_{\v, b}-\delta)^2}{2 h^2} \right\} } = 0.
\end{align*}

Therefore, $\exists h'>0$ such that $h \in (0,h^\prime) \Rightarrow \frac{\hat I(\v, b)}{\inf\left\{\hat I(\w, c) \big \vert M_{\w, c} \leqslant M_{\v, b}-\delta \right\}}<1$.

\end{proof}

An immediate corollary of Lemma~\ref{lm:bddmargin} is that as~$h$ tends to zero the margin of
the MDH tends to the maximum margin. However, this does
not necessarily ensure the stronger result that the sequence of MDHs
converges to the MMH. To establish this we require
two technical results, which describe some algebraic properties of
the MMH, and are provided as part of the proof of Theorem~\ref{thm:convergence}
which is given in Appendix~\ref{app:thm:convergence}.


The next lemma uses the previous result to show that there exists a
positive bandwidth, $h^\prime>0$, such that an MDH estimated using
$h\in (0,h^\prime)$ induces the same partition of $\X$ as the MMH.  The
result assumes that the MMH is unique. Notice
that if $\X$ is a sample of realisations of a continuous random variable then
this uniqueness holds with probability~1.

\begin{lemma} \label{lm:samedivision}

Suppose there is a unique hyperplane in $F$ with maximum margin, which can be
parameterised by $(\v^m, b^m) \in \B \times \R$.
Then $\exists h'>0$ s.t. $h \in (0,h^\prime) \Rightarrow H(\v^\star_h, b^\star_h)$ induces the same partition
of $\X$ as $H(\v^m, b^m)$.

\end{lemma}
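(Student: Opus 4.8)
The plan is to leverage Lemma~\ref{lm:bddmargin} to pin down the margin of the MDH from below, and then use a separation-type argument to show that only hyperplanes inducing the correct partition can have margin that close to the maximum. First I would fix the unique MMH, $H(\v^m,b^m)$, with margin $M^m := \margin\, H(\v^m,b^m) > 0$ (positive because $F\subset C$ means the hyperplane strictly separates two non-empty subsets of the finite set $\X$, so no datum lies on it). The key observation is that the partition of $\X$ induced by $H(\v^m,b^m)$ is robust: since $\X$ is finite and no point lies on the hyperplane, there is a threshold $\rho>0$ such that any hyperplane $H(\w,c)$ with $\|\w-\v^m\| + |c-b^m| < \rho$ induces exactly the same bipartition. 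Equivalently, I would argue contrapositively: consider the set of hyperplanes in $F$ that induce a \emph{different} partition of $\X$ than the MMH. I claim the supremum of the margin over this set is strictly less than $M^m$.

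The heart of the argument is establishing that claim. The set $\Pi$ of feasible hyperplanes inducing a partition different from that of the MMH need not be closed on its own, but its closure $\bar\Pi$ (inside the compact parameter region $\{(\v,b)\,\vert\,H(\v,b)\in F\}$, which is compact by the construction noted just before the lemma) is compact, and the margin functional $(\v,b)\mapsto \margin\, H(\v,b) = \min_{\x\in\X}|\v\cdot\x - b|$ is continuous. Hence the supremum is attained at some $(\v',b')\in\bar\Pi$. Now I must rule out that $\margin\, H(\v',b') = M^m$. If it did, then $H(\v',b')$ would be a maximum margin hyperplane in $F$, and by the assumed \emph{uniqueness} of the MMH we would have $(\v',b') = (\v^m, b^m)$ (up to the sign convention, which is fixed since $\|\v\|=1$ and the partition would then be reversed, which is still ``the same partition'' as a set-bipartition only if... — here one must be slightly careful, but reversing $(\v,b)\mapsto(-\v,-b)$ gives the same unordered partition, so uniqueness of the unordered-partition-inducing maximal hyperplane still forces $H(\v',b') = H(\v^m,b^m)$). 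But a sequence in $\Pi$ converging to $(\v^m,b^m)$ contradicts the robustness of the partition established above: for large enough index the hyperplanes in the sequence would induce the same partition as the MMH, contradicting membership in $\Pi$. Therefore $\sup_{H(\v,b)\in\Pi}\margin\, H(\v,b) =: M^m - \delta$ for some $\delta > 0$.

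With $\delta>0$ in hand, I would apply Lemma~\ref{lm:bddmargin} with the choice $H(\v,b) = H(\v^m,b^m)$ (which has non-zero margin $M^m$) and this $\delta$: it yields $h'>0$ such that for all $h\in(0,h')$, any hyperplane $H(\w,c)$ with $\margin\, H(\w,c) \leqslant M^m - \delta$ satisfies $\hat I(\v^m,b^m) < \hat I(\w,c)$. Since every hyperplane in $\Pi$ has margin at most $M^m-\delta$, none of them can be the minimiser of $\hat I$ over $F$ — the MMH itself strictly beats all of them. Hence the MDH $H(\v^\star_h,b^\star_h)$ lies outside $\Pi$, i.e.\ it induces the same partition of $\X$ as the MMH, for every $h\in(0,h')$.

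The main obstacle I anticipate is the careful handling of the partition-robustness step together with the sign/parameterisation ambiguity: making precise that ``induces the same partition'' is a statement about the unordered bipartition $\{\X^+,\X^-\}$ (invariant under $(\v,b)\mapsto(-\v,-b)$), that the set $\{(\v,b)\,\vert\,H(\v,b)\in F\}$ restricted to a fixed partition is relatively closed, and that the uniqueness hypothesis on the MMH is a statement at the level of hyperplanes (or equivalently partitions) rather than of raw $(\v,b)$ pairs. Once that bookkeeping is done, the compactness/continuity argument and the invocation of Lemma~\ref{lm:bddmargin} are routine. A secondary subtlety is confirming $M^m>0$, which follows because $H(\v^m,b^m)\in F\subset C$ forces a genuine split of $\X$ into two non-empty parts and $\X$ is finite, so the distance from the hyperplane to $\X$ is a positive minimum.
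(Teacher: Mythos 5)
Your proposal is correct and follows essentially the same route as the paper's proof: establish a margin gap $\delta>0$ so that every feasible hyperplane inducing a different partition has margin at most $M-\delta$, then invoke Lemma~\ref{lm:bddmargin} to conclude that for all sufficiently small $h$ the MDH cannot be one of these hyperplanes; your compactness/continuity argument just fills in the paper's one-sentence assertion that such a $\delta$ exists. The only slight imprecision is the side remark that $F\subset C$ forces no datum to lie on the maximum margin hyperplane---membership in $C$ only guarantees data strictly on both sides, and positivity of the maximum margin instead follows from the existence of some feasible hyperplane avoiding the finitely many projected data values---but this does not affect the argument.
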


\begin{proof}

Let $M = \margin\, H(\v^m, b^m)$, and let $P$ be the collection of
hyperplanes that induce the same partition of $\X$ as that induced by
$H(\v^m, b^m)$. 
Since $\X$ is finite and $H(\v^m,b^m)$ is unique, $\exists \delta > 0$ s.t. $H(w, c) \notin P
\Rightarrow \margin\, H(w, c) \leqslant M-\delta$.
By Lemma~\ref{lm:bddmargin}, $\exists h'>0$ s.t., \[h\in (0,h^\prime)
\Rightarrow H(\v^{\star}_h, b^{\star}_h) \notin \left\{H(\w, c) \, \vert \, \margin\, H(\w, c)
\leqslant M-\delta\right\},\] 
therefore $H(\v^{\star}_h, b^{\star}_h) \in P$.

\end{proof}

\noindent
The next theorem is the main result of this section, and states that the MDH converges to the MMH as the
bandwidth parameter is reduced to zero.  Notice that by
the non-unique representation of
hyperplanes, the maximum margin hyperplane has two parameterisations in $C$, namely $(\mathbf{v}^m,
b^m)$ and $(-\mathbf{v}^m, -b^m)$. Convergence to the
maximum margin hyperplane is therefore equivalent to showing that,
\[
\min\{\|(\mathbf{v}_h^\star, b_h^\star)-(\mathbf{v}^m, b^m)\|, \|(\mathbf{v}_h^\star, b_h^\star)+(\mathbf{v}^m, b^m)\|\} \to 0 \mbox{ as } h \to 0^+.
\]

\begin{theorem}\label{thm:convergence}

Suppose there is a unique hyperplane in $F$ with maximum margin, which can
be parameterised by $(\v^m, b^m) \in \B \times \R$.
Then,
\[
\lim_{h \to 0^+} \min \left\{\|(\v^\star_h, b^\star_h) - (\v^m, b^m)\|,
	\|(\v^\star_h, b^\star_h) + (\v^m, b^m)\| \right\} = 0.
\]

\end{theorem}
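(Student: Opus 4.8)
The plan is to argue by contradiction using a compactness argument on the sequence of MDHs. Fix a null sequence $h_k \to 0^+$ and write $(\v_k, b_k) = (\v^\star_{h_k}, b^\star_{h_k})$ for brevity. By Lemma~\ref{lm:samedivision} there is $h' > 0$ such that for all $h \in (0, h')$ the MDH $H(\v^\star_h, b^\star_h)$ induces the same partition of $\X$ as $H(\v^m, b^m)$; so for $k$ large enough the $(\v_k, b_k)$ all lie in the (closed) set $P$ of feasible hyperplanes inducing that fixed partition. Since $F \subset C$ and, for the clustering and semi-supervised formulations, $\{b : H(\v,b) \in F\}$ is compact for each $\v$ (and $\B$ is compact), the parameters $(\v_k, b_k)$ live in a compact subset of $\B \times \R$. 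Hence every subsequence has a convergent sub-subsequence; it suffices to show that \emph{every} such limit point $(\bar\v, \bar b)$ equals one of the two parameterisations $\pm(\v^m, b^m)$, since then the full sequence $\min\{\|(\v_k,b_k) - (\v^m,b^m)\|, \|(\v_k,b_k)+(\v^m,b^m)\|\}$ converges to $0$.

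So pass to a convergent subsequence with $(\v_k, b_k) \to (\bar\v, \bar b)$. The first thing to pin down is the margin of the limit. By the corollary to Lemma~\ref{lm:bddmargin} (or directly from that lemma), $\margin\, H(\v_k, b_k) \to M := \margin\, H(\v^m, b^m)$: for any $\delta > 0$, Lemma~\ref{lm:bddmargin} applied to $H(\v^m, b^m) \in F$ forces $\margin\, H(\v_k, b_k) > M - \delta$ for all small $h_k$ (otherwise $H(\v^m,b^m)$ would have strictly smaller density integral than $H(\v_k,b_k)$, contradicting optimality of the MDH), and $\margin\, H(\v_k,b_k) \le M$ always since $H(\v^m,b^m)$ is the maximum-margin hyperplane in $F$. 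The map $(\v,b) \mapsto \min_{\x \in \X}|\v\cdot\x - b|$ is continuous (a finite min of continuous functions), so $\margin\, H(\bar\v, \bar b) = M$. Moreover $(\bar\v, \bar b)$ is a limit of points of $P \subset F$; I need to check it remains feasible — this is where the two technical algebraic properties of the MMH mentioned in the excerpt (and deferred to the appendix proof) come in, guaranteeing that the limiting hyperplane does not slide off to the boundary of $F$ in a way that changes the support structure. Granting feasibility, $H(\bar\v, \bar b) \in F$ has margin equal to the maximum margin $M$, so by the uniqueness hypothesis $H(\bar\v, \bar b) = H(\v^m, b^m)$ as hyperplanes, which means $(\bar\v, \bar b) \in \{(\v^m,b^m), (-\v^m,-b^m)\}$ because these are the only two unit-normal parameterisations of that hyperplane. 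This is exactly the claimed conclusion.

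The main obstacle is the feasibility/non-degeneracy step: showing that the limit point $(\bar\v,\bar b)$ is still in $F$ and, in particular, that the limiting margin is genuinely attained with the right set of support points rather than being lost in a limit where, say, a support point drifts to the boundary of $\conv\X$ or where $\bar b$ hits an endpoint of the interval $F(\bar\v)$. The excerpt signals that this requires "two technical results, which describe some algebraic properties of the MMH" — presumably (i) a characterization of the support points of the MMH (e.g. the normal vector lies in the convex hull of differences of support points, so the MMH cannot be a boundary hyperplane of $\conv\X$), and (ii) a quantitative statement that near-maximal-margin feasible hyperplanes are close in parameter space to the MMH, which both rules out escape to the boundary and can be used to upgrade "every limit point is the MMH" to the stated limit. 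I would isolate these two facts as standalone lemmas, prove them by elementary geometry of the convex hull and the margin function, and then assemble the contradiction argument above; the kernel-density estimates themselves enter only through Lemma~\ref{lm:bddmargin}, which has already done the analytic heavy lifting.
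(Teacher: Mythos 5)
Your compactness route is genuinely different from the paper's argument, and its skeleton (margin of the MDH tends to $M$ by Lemma~\ref{lm:bddmargin}, the margin function $(\v,b)\mapsto\min_{\x\in\X}|\v\cdot\x-b|$ is continuous, so every limit point is a feasible hyperplane of margin $M$, hence by uniqueness equals $\pm(\v^m,b^m)$) is sound. But the proof stops exactly at the step you flag, and your guess about how to close it points in the wrong direction. The limit point $(\bar{\v},\bar{b})$ lies in $F$ not because of any algebraic property of the MMH, but simply because, for the two concrete formulations, the feasible set is closed and bounded in parameter space: $\v\mapsto\mu_\v$ and $\v\mapsto\sigma_\v$ are continuous on the compact set $\B$, so $\{(\v,b)\,:\,\mu_\v-\alpha\sigma_\v\leqslant b\leqslant\mu_\v+\alpha\sigma_\v\}$ is compact, and the label constraints $y_i(\v\cdot\x_i-b)\geqslant 0$ cut out a closed subset. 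That one-line verification is what your argument needs and does not supply (``granting feasibility''). Two further points: fibre-wise compactness of $\{b : H(\v,b)\in F\}$ together with compactness of $\B$ does not by itself give compactness of the parameter set for a generic $F$, so your proof establishes the theorem only under the additional hypothesis that the feasible parameter set is closed; and your detour through Lemma~\ref{lm:samedivision} and the set $P$ is unnecessary for your argument ($P$ need not be closed, and you never use it again), since margin convergence, continuity, closedness of $F$ and uniqueness already identify every limit point with $\pm(\v^m,b^m)$.

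The paper's deferred technical results play a different role from the one you assign them. Lemma~\ref{lem:MMHorthog} shows that no hyperplane orthogonal to the MMH separates its support points $C^+$ and $C^-$, and Lemma~\ref{lm:2Mv} deduces that no direction $\w$ satisfies $\w\cdot\x_+-\w\cdot\x_->2M\,\v^m\cdot\w$ for all support pairs. These feed a fully quantitative argument: for every $\epsilon>0$ there is $\delta>0$ such that any $(\w,c)$ separating $C^+$ from $C^-$ with signed margin exceeding $M-\delta$ satisfies $\|(\w,c)\mp(\v^m,b^m)\|<\epsilon$. Lemma~\ref{lm:samedivision} is then used precisely to fix the orientation of the MDH on $C^\pm$, and Lemma~\ref{lm:bddmargin} to guarantee its margin exceeds $M-\delta$. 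This yields the theorem for a generic $F$, with no topological assumptions beyond existence of the minimisers and of the unique MMH — which matters later, since Theorem~\ref{thm:convergence} is invoked in Lemma~\ref{lem:ULConv} for the enlarged set $F^\delta_{\mathrm{CL}}$, whose fibres are open neighbourhoods and hence not closed, so your route would not apply there as stated. In short: your approach buys a shorter, more elementary proof at the price of a closedness assumption on $F$ (easily checked for $F_{\mathrm{CL}}$ and $F_{\mathrm{SSC}}$, but not available in general), while the paper's support-point geometry buys an explicit $\epsilon$--$\delta$ relation valid for arbitrary feasible families. Fill in the closedness verification and drop the appeal to the unknown lemmas, and your proof is complete for the clustering and semi-supervised feasible sets.
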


The set~$F$ used in Theorem~\ref{thm:convergence} is generic so it can capture
the constraints associated with both clustering and semi-supervised
classification, Equations~(\ref{eq:ULconstr1}) and~(\ref{eq:SSconstr1})
respectively. In the case of semi-supervised classification we must also assume
that the labelled data are linearly separable.
Theorem~\ref{thm:convergence} is not directly applicable to the \MDP
formulations as in this case the function being minimised is not the density on
a hyperplane.
The next two subsections establish this result for the \MDP formulation of the
clustering and semi-supervised classification problem.

\subsection{\MDP for Clustering}

We have shown that for the constrained optimisation formulation the MDH
converges to the MMH within the feasible set,
$F_{\mathrm{CL}} \subset C$. 
In addition, for a fixed $\v$, Proposition~\ref{prop:minim}
bounds the distance between minimisers of the penalised
function $\fUL$, $\argmin_{b \in \R} \fUL(\v,b)$,  and the optimal $b$ of the constrained
problem,~$\argmin_{b \in F(\v)} \hat{I}(\v,b)$.
Combining these we can show that the optimal solution to
the penalised MDP$^2$ formulation converges to the maximum margin hyperplane in
$F_{\mathrm{CL}}$, provided the parameters within the penalty term suitably
depend on the bandwidth parameter, $h$. While the general case can be shown,
for ease of exposition we make the simplifying assumption that the maximum
margin hyperplane is strictly feasible, i.e., if $(\v^m, b^m)$ parameterises
the maximum margin hyperplane then $b^m \in (\mu_{\v^m}-\alpha \sigma_{\v^m},
\mu_{\v^m}+\alpha \sigma_{\v^m})$.

For $h, \eta, L > 0$ define $(\v^\star_{h, \eta, L}, b^\star_{h, \eta, L})$ to be any global minimiser of $f_{\mathrm{CL}}$, i.e.,
$$
f_{\mathrm{CL}}(\v^\star_{h, \eta, L}, b^\star_{h, \eta, L}) = \min_{(\v, b) \in\B\times\R} f_{\mathrm{CL}}(\v, b).
$$

\begin{lemma}\label{lem:ULConv}
Suppose there is a unique hyperplane in $F_{\mathrm{CL}}$ with maximum margin, which can
be parameterised by $(\v^m, b^m) \in \B \times \R$. Suppose further that $b^m \in (\mu_{\v^m}-\alpha \sigma_{\v^m}, \mu_{\v^m}+\alpha \sigma_{\v^m})$.
For $h>0$, let
$L(h) = (e^{1/2}h^2\sqrt{2\pi})^{-1}$, and $0 < \eta(h) \leqslant h$.
Then,
$$
\lim_{h \to 0^+} \min\{\|(\v^\star_{h, \eta(h), L(h)}, b^\star_{h, \eta(h), L(h)}) - (\v^m,b^m)\|, \|(\v^\star_{h, \eta(h), L(h)}, b^\star_{h, \eta(h), L(h)}) + (\v^m, b^m)\|\} = 0.
$$
\end{lemma}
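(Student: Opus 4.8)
The plan is to reuse the margin-convergence machinery of Section~\ref{sec:MaxMarg}: I will show that a global minimiser of $\fUL$ has margin (with respect to $\X$) tending to $M := \margin\,H(\v^m,b^m)$ and is asymptotically feasible for $F_{\mathrm{CL}}$, so that the assumed uniqueness of the maximum margin hyperplane in $F_{\mathrm{CL}}$ forces convergence of the parameters. Throughout write $(\v^\star_h,b^\star_h) := (\v^\star_{h,\eta(h),L(h)},b^\star_{h,\eta(h),L(h)})$ and $M_{\v,b} := \margin\,H(\v,b)$; such a minimiser exists since $\fUL(\v,\cdot)$ is coercive for each $\v$ and $\phiUL$ is continuous on the compact set $\B$.

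First I would pin down the optimal value. Because $(\v^m,b^m)$ is strictly feasible by hypothesis, the penalty term in $\fUL$ vanishes at $(\v^m,b^m)$, so $\fUL(\v^m,b^m) = \hat I(\v^m,b^m)$; bounding the Gaussian sum in Equation~(\ref{eq4}) by its largest summand (the nearest-point bound already exploited in the proof of Lemma~\ref{lm:bddmargin}) gives $\min_{(\v,b)}\fUL(\v,b) \le \hat I(\v^m,b^m) \le \frac{1}{h\sqrt{2\pi}}\exp\{-M^2/2h^2\}$. Since $\fUL \ge \hat I$ pointwise, the same bound holds for $\hat I(\v^\star_h,b^\star_h)$; pairing it with the reverse nearest-point estimate $\hat I(\v^\star_h,b^\star_h) \ge \frac{1}{nh\sqrt{2\pi}}\exp\{-M_{\v^\star_h,b^\star_h}^2/2h^2\}$ and comparing exponents yields $M_{\v^\star_h,b^\star_h}^2 \ge M^2 - 2h^2\log n$, hence $\liminf_{h\to 0^+} M_{\v^\star_h,b^\star_h} \ge M$. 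For the matching upper bound I would invoke the second conclusion of Proposition~\ref{prop:minim}, which places $b^\star_h$ within $\eta(h)$ of the feasible interval $F(\v^\star_h) = [\mu_{\v^\star_h}-\alpha\sigma_{\v^\star_h},\mu_{\v^\star_h}+\alpha\sigma_{\v^\star_h}]$: letting $\tilde b_h$ be the projection of $b^\star_h$ onto $F(\v^\star_h)$ we have $H(\v^\star_h,\tilde b_h) \in F_{\mathrm{CL}}$, so $M_{\v^\star_h,\tilde b_h} \le M$, and since $b \mapsto M_{\v,b}$ is $1$-Lipschitz we get $M_{\v^\star_h,b^\star_h} \le M + \eta(h)$. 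As $0 < \eta(h) \le h \to 0$, both $M_{\v^\star_h,b^\star_h} \to M$ and $M_{\v^\star_h,\tilde b_h} \to M$.

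It then remains to convert margin convergence into parameter convergence, exactly as in Theorem~\ref{thm:convergence}. The set $F_{\mathrm{CL}} = \{(\v,b) : \v \in \B,\ b \in F(\v)\}$ is compact (compactness of $\B$ and continuity of $\v \mapsto (\mu_\v,\sigma_\v)$) and $\margin$ is continuous on it, so every subsequential limit of the feasible sequence $(\v^\star_h,\tilde b_h)$ lies in $F_{\mathrm{CL}}$ and has margin $M$; by uniqueness each such limit equals $(\v^m,b^m)$ or its mirror parameterisation $(-\v^m,-b^m)$, whence $\min\{\|(\v^\star_h,\tilde b_h)-(\v^m,b^m)\|,\|(\v^\star_h,\tilde b_h)+(\v^m,b^m)\|\} \to 0$. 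Since $|b^\star_h - \tilde b_h| \le \eta(h) \to 0$, the same limit holds with $b^\star_h$ in place of $\tilde b_h$, which is the assertion. The one genuinely delicate point is that the penalised minimiser need not be feasible, so the projection step and the Lipschitz control of the margin under it are essential; this is precisely where the prescribed couplings $L(h) = (e^{1/2}h^2\sqrt{2\pi})^{-1}$ (so that Proposition~\ref{prop:minim} applies verbatim) and $\eta(h) \le h$ (so that the infeasibility gap vanishes with $h$) enter. Everything else is the nearest-point bookkeeping behind Lemma~\ref{lm:bddmargin}.
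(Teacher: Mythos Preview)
Your argument is correct and takes a genuinely different route from the paper's. The paper enlarges the feasible set to $F^\delta_{\mathrm{CL}} = \{(\v,b): b \in \mathbb{B}_{\delta/2}(F(\v))\}$, applies Theorem~\ref{thm:convergence} on this enlarged set, and then uses the strict feasibility of $(\v^m,b^m)$ to argue that for small $h$ the $\hat I$-minimiser over $F^\delta_{\mathrm{CL}}$ already lies inside $F_{\mathrm{CL}}$, where $\hat I$ and $\fUL$ coincide; this forces the $\fUL$-minimiser into the same neighbourhood. You instead work directly with the penalised minimiser: you obtain the lower bound on its margin from the same nearest-point estimates that underlie Lemma~\ref{lm:bddmargin}, project $b^\star_h$ onto $F(\v^\star_h)$ using Proposition~\ref{prop:minim} to get a feasible point $\tilde b_h$ whose margin is sandwiched to $M$, and then close with a clean compactness argument (any subsequential limit in the compact set $F_{\mathrm{CL}}$ has margin $M$, hence equals $\pm(\v^m,b^m)$ by uniqueness). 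Your approach is more self-contained---it does not invoke Theorem~\ref{thm:convergence} or the algebraic Lemmas~\ref{lem:MMHorthog}--\ref{lm:2Mv}---and, as you implicitly observe, it does not actually use the \emph{strict} interior condition on $b^m$, only feasibility. The paper's route, by contrast, is more modular and illustrates how Theorem~\ref{thm:convergence} transfers to the penalised setting once the feasible region is suitably padded.
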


\begin{proof}

Let $M = \mbox{margin}H(\v^m, b^m)$ and as in the proof of Lemma \ref{lm:samedivision}, let $\delta > 0$ be such that
any hyperplane inducing a different partition from $H(\v^m, b^m)$ has margin at most $M-\delta$.
Consider the set $F^\delta_{\mathrm{CL}}: = \{(\v, b) \in \B \times \R \vert b \in \mathbb{B}_{\delta/2}(F(\v))\}$,
where we used the notation $\mathbb{B}_{\delta/2}(F(\v))$ to denote the neighbourhood of $F(\v)$ given by $\{r \in \R \vert d(r,F(\v))<\delta/2\}$.
The set $F^\delta_{\mathrm{CL}}$
increases the feasible set of hyperplanes by allowing $b$ to range in $b \in \mathbb{B}_{\delta/2}(F(\v))$.
For any fixed $\v$, the maximum margin of all hyperplanes with normal vector $\v$ can increase by at most $\delta/2$.
Thus, any hyperplane inducing a different partition compared to $H(\v^m,b^m)$ has a margin at most $M-\delta/2$.
Since $H(\v_m, b_m)$ is strictly feasible it therefore remains the unique maximum margin hyperplane in $F^\delta_{\mathrm{CL}}$.
Observe now that for $0<h<\delta/2$ we have $H(\v^\star_{h,
\eta(h), L(h)}, b^\star_{h, \eta(h), L(h)}) \in F_{\mathrm{CL}}^\delta$, by
Proposition~\ref{prop:minim}. In addition, by Theorem~\ref{thm:convergence}, we
know that the minimisers of $\hat I(\v, b)$ over $F^\delta_{\mathrm{CL}}$, say
$H(\v_h^\delta, b_h^\delta)$, satisfy
\[
\lim_{h\to 0^+} \min \left\{\|(\v_h^\delta, b_h^\delta) - (\v^m, b^m)\|, \|(\v_h^\delta, b_h^\delta) + (\v^m, b^m)\|\right\} = 0.
\]
Now, since $H(\v^m, b^m)$ is strictly feasible $\exists \epsilon^\prime > 0$
s.t. $(\v, b) \in \mathbb{B}_{\epsilon^\prime}(\{(\v^m, b^m), -(\v^m, b^m)\})
\Rightarrow H(\v, b) \in F_{\mathrm{CL}}$. Then for any $0 < \epsilon <
\epsilon^\prime$ there exists $h^\prime > 0$ s.t. for $0<h<h^\prime$ both
$(\v_h^\delta, b_h^\delta) \in \mathbb{B}_{\epsilon}(\{(\v^m, b^m), -(\v^m,
b^m)\}) \Rightarrow H(\v_h^\delta, b_h^\delta) \in F_{\mathrm{CL}}$ \emph{and}
$H(\v^\star_{h, \eta(h), L(h)}, b^\star_{h, \eta(h), L(h)}) \in
F_{\mathrm{CL}}^\delta$.  Now for $H(\v, b) \in F^\delta_{\mathrm{CL}}\setminus
F_{\mathrm{CL}}$ we know that $\hat I(\v, b) < f_{\mathrm{CL}}(\v, b)$, whereas
for $H(\v, b) \in F_{\mathrm{CL}}, \hat I(\v, b) = f_{\mathrm{CL}}(\v, b)$ and
therefore the minimiser of $f_{\mathrm{CL}}(\v, b)$ must lie in the
neighbourhood $\mathbb{B}_{\epsilon}(\{(\v^m, b^m), -(\v^m, b^m)\})$, and the
result follows.

\end{proof}

To illustrate the convergence of the MDH to the MMH we use the two-dimensional
data set shown in Figure~\ref{fig:MMHCon}. The data is sampled from a mixture
of two Gaussian distributions with equal covariance matrix. 
The MDH with respect to the true underlying density is~$H\left( (1,-1),0
\right)$.
A large margin separator is artificially introduced by removing a few
observations in a narrow margin around a hyperplane different from~$H\left(
(1,-1),0 \right)$.  The margin is intentionally small to ensure that
identifying the MMH is non-trivial.  Figure~\ref{fig:MMHCon} illustrates the
MDH solutions arising from the \MDP method for a decreasing sequence of
bandwidths,~$h$.  Initially the MDH approximately coincides with the optimal
MDH with respect to the true density of the Gaussian mixture.  As~$h$
decreases, the MDH approaches the MMH and for the smallest values of $h$ the
two are indistinguishable.
\begin{figure}\centering
\includegraphics[scale=0.25]{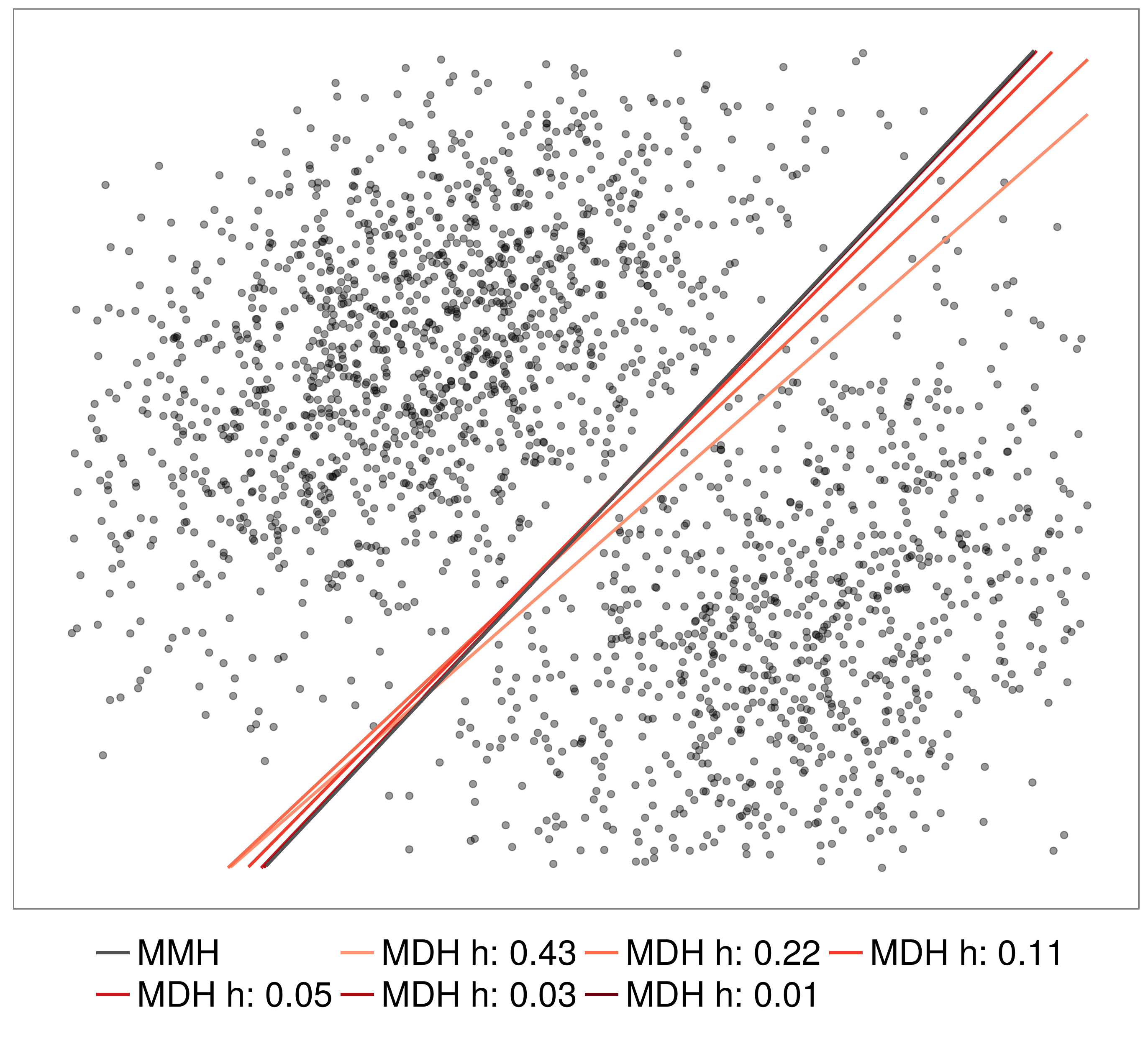}
\caption{Convergence of the MDH to the maximum margin hyperplane for a decreasing sequence of
bandwidth parameters, $h$.}\label{fig:MMHCon}
\end{figure}

\subsection{\MDP for Semi-Supervised Classification}

Denote the set of hyperplanes which correctly classify the
labelled data by $F_{\mathrm{LB}}$. Under the assumption that
$\exists H(\v, b) \in F_{\mathrm{LB}}\cap F_{\mathrm{CL}}$
with non-zero margin, we
can show that, provided the parameter $\gamma$ does not shrink too quickly with
$h$, the hyperplane that minimises~$\fSL$ converges to
the MMH contained in~$F_{\mathrm{LB}} \cap
F_{\mathrm{CL}}$, where as before we assume that such an MMH
is strictly feasible.
To establish this result it is sufficient to show that there exists $h'>0$ such
that for all $h \in (0,h^{\prime})$, the optimal hyperplane $H(\v^\star_{h,\eta,L,\gamma}, b^\star_{h, \eta,L,
\gamma})$ correctly classifies all the labelled examples.
If this holds, then $\fSL(\v^\star_{h,\eta,L,\gamma}, b^\star_{h,\eta,L,\gamma}) = \fUL(\v^\star_{h,\eta,L,\gamma}, b^\star_{h,\eta,L,\gamma})$ for
all sufficiently small~$h$, and hence
Lemma~\ref{lem:ULConv} can be applied to establish the
result.
The proof relies on the fact that the penalty terms
associated with the known labels in Equation~(\ref{eq:PenalisedSSL}) are polynomials
in~$b$. Provided that $\gamma$ is bounded below by a polynomial in $h$, the value of the
penalty terms for hyperplanes that do not correctly classify the labelled data dominate the value of the density integral as $h$ approaches zero. Therefore the optimal hyperplane
must correctly classify the labelled data for small values of $h$.

\begin{lemma}
Define $F_{\mathrm{LB}} = \{H(\v, b) \big \vert y_i(\v\cdot x_i - b) > 0, \forall
i=1, \dots, \ell\}$ and $F_{\mathrm{CL}} = \{H(\v, b) \big \vert \mu_{\v} -
\alpha\sigma_{\v} \leqslant b \leqslant \mu_{\v} + \alpha\sigma_{\v}\}$ and assume
that $F_{\mathrm{SSC}} = F_{\mathrm{LB}} \cap F_{\mathrm{CL}} \not = \emptyset$ and that $\exists H(\v, b) \in 
F_{\mathrm{SSC}}$ with non-zero margin.
For $h > 0$, let $L(h) = (e^{1/2}h^2\sqrt{2\pi})^{-1}$, $0 < \eta(h) \leqslant h$ and $\gamma(h) \geqslant h^r$ for some $r>0$.
Then $\exists h^\prime > 0$ s.t. $h \in(0,h^\prime) \Rightarrow H(\v^\star_{h, \eta(h), L(h), \gamma(h)},
b^\star_{h, \eta(h), L(h), \gamma(h)} ) \in F_{\mathrm{LB}}$.
\end{lemma}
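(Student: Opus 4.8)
The plan is to show that for sufficiently small $h$ any hyperplane violating one of the label constraints has strictly larger $\fSL$-value than some fixed feasible hyperplane with non-zero margin, and hence cannot be a global minimiser. First I would fix a reference hyperplane $H(\v_0,b_0)\in F_{\mathrm{SSC}}$ with non-zero margin $M_0>0$, whose existence is assumed. Since $H(\v_0,b_0)\in F_{\mathrm{CL}}$ the penalty term $\frac{L(h)}{\eta(h)^\epsilon}\max\{\cdots\}^{1+\epsilon}$ vanishes at $(\v_0,b_0)$ and the label-penalty term vanishes too, so $\fSL(\v_0,b_0)=\hat I(\v_0,b_0)\leqslant \frac{1}{h\sqrt{2\pi}}\exp\{-M_0^2/(2h^2)\}$, exactly as in the bound used in the proof of Lemma~\ref{lm:bddmargin}. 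This gives an explicit, exponentially small upper bound on the optimal value $\fSL(\v^\star_{h,\eta(h),L(h),\gamma(h)}, b^\star_{h,\eta(h),L(h),\gamma(h)})$.

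Next I would lower-bound $\fSL$ on any hyperplane $H(\v,b)$ that misclassifies at least one labelled point, i.e. $y_i(\v\cdot\x_i - b)<0$ for some $i\leqslant\ell$. Two sub-cases: if $H(\v,b)\notin F_{\mathrm{CL}}$ then the $\fUL$-penalty already forces the value up, but the cleaner route is to note $\fSL(\v,b)\geqslant \gamma(h)\,\max\{0,-y_i(\v\cdot\x_i-b)\}^{1+\epsilon}$ from the last term alone (all other terms are nonnegative). The issue is that $-y_i(\v\cdot\x_i-b)$ could itself be tiny, so this lower bound need not dominate the exponentially small upper bound. I would handle this by working with a global minimiser directly: suppose for contradiction that the optimal hyperplane $H^\star_h$ misclassifies some labelled point. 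Because $\fSL(H^\star_h)\leqslant\fSL(\v_0,b_0)$ is $O(h^{-1}e^{-M_0^2/(2h^2)})$, the label penalty at $H^\star_h$ is at most this, so $\max_i\{0,-y_i(\v^\star_h\cdot\x_i-b^\star_h)\}\leqslant (\gamma(h)^{-1}\cdot O(h^{-1}e^{-M_0^2/(2h^2)}))^{1/(1+\epsilon)}$, which with $\gamma(h)\geqslant h^r$ tends to $0$ as $h\to0^+$. Thus $H^\star_h$ is an arbitrarily small perturbation (in the relevant functional sense — closeness of the margin-violating slacks to zero) of the feasible set $F_{\mathrm{LB}}$, so it has margin arbitrarily close to zero; but Lemma~\ref{lm:bddmargin} then forces $\hat I(H^\star_h)$ to be much larger than $\hat I(\v_0,b_0)$ for small $h$, since $H_0$ has margin $M_0>0$ bounded away from zero — contradicting $\fSL(H^\star_h)\leqslant\hat I(\v_0,b_0)$ together with $\fSL(H^\star_h)\geqslant\hat I(H^\star_h)$ (all penalty terms nonnegative).

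The step I expect to be the main obstacle is making the contradiction quantitatively airtight: one must verify that "margin of $H^\star_h\to 0$" really follows from "label slacks $\to 0$" — i.e. that a hyperplane almost-correctly classifying the labels, but not exactly, is being compared against $H_0$ which sits in $F_{\mathrm{SSC}}$ with a fixed positive margin. One clean way to close this: let $\delta_0>0$ be such that any hyperplane with margin $>M_0-\delta_0$ that violates some label constraint does not exist in a sufficiently small neighbourhood — more precisely, since $\X$ is finite, there is $\delta_0>0$ so that every hyperplane misclassifying a labelled point and having all label-slacks below some threshold $\tau_0$ has margin at most $M_0-\delta_0$ (a point that is misclassified but within $\tau_0$ of the hyperplane contributes a margin $\leqslant\tau_0$; choosing $\tau_0<M_0-\delta_0$ suffices, with any $\delta_0<M_0$). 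Then apply Lemma~\ref{lm:bddmargin} with this $\delta_0$ to get $h'>0$ with $\hat I(\v_0,b_0)<\hat I(H^\star_h)$ for $h<h'$, and separately choose $h''>0$ so that for $h<h''$ the bound $(\gamma(h)^{-1}\,O(h^{-1}e^{-M_0^2/(2h^2)}))^{1/(1+\epsilon)}<\tau_0$, using $\gamma(h)\geqslant h^r$. For $h<\min\{h',h''\}$ the two conclusions are incompatible, so $H^\star_h$ cannot misclassify any labelled point, i.e. $H^\star_h\in F_{\mathrm{LB}}$, which is the claim.
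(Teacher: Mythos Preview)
Your argument is correct but takes a genuinely different route from the paper's. The paper proceeds by observing that for any $H(\v,b)\notin F_{\mathrm{LB}}$ the misclassified point $\x_i$ contributes \emph{simultaneously} to the density term (since $\lvert \v\cdot\x_i-b\rvert$ equals the slack~$\nu$) and to the label penalty, so
\[
\fSL(\v,b)\;\geqslant\;\min_{\nu>0}\Bigl[\tfrac{1}{n\sqrt{2\pi}h}\exp(-\nu^2/2h^2)+\gamma(h)\,\nu^{1+\epsilon}\Bigr],
\]
and then optimises this one-dimensional trade-off explicitly to obtain a \emph{uniform polynomial} lower bound $\fSL(\v,b)\geqslant K\,h^{(2r+3(1+\epsilon))/(1-\epsilon)}$, valid for every misclassifying hyperplane. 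Since the reference feasible hyperplane has $\fSL=\hat I$ decaying like $\exp(-M_0^2/2h^2)$, the comparison is immediate.

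Your route instead argues by contradiction on the minimiser: optimality forces the label slack at $H^\star_h$ to be at most $(\gamma(h)^{-1}\hat I(\v_0,b_0))^{1/(1+\epsilon)}\to 0$, hence the margin of $H^\star_h$ is at most this slack (the misclassified labelled point lies in~$\X$), and then Lemma~\ref{lm:bddmargin} delivers the contradiction $\hat I(H^\star_h)>\hat I(\v_0,b_0)=\fSL(\v_0,b_0)\geqslant\fSL(H^\star_h)\geqslant\hat I(H^\star_h)$. This is perfectly sound; the only thing worth making explicit is the boundary case $y_i(\v^\star_h\cdot\x_i-b^\star_h)=0$, where the slack is zero and the margin is zero, so the contradiction via Lemma~\ref{lm:bddmargin} still fires trivially. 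The paper's approach buys an explicit quantitative bound independent of the particular minimiser and avoids invoking Lemma~\ref{lm:bddmargin}; yours is arguably more transparent and reuses machinery already in place, at the cost of being non-constructive about~$h'$.
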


\begin{proof}

Consider $H(\v, b) \not \in F_{\mathrm{LB}}$. Then,
$$\fSL(\v, b) \geqslant \frac{1}{n\sqrt{2\pi}h}\exp(-\nu_\star^2/2h^2) + \gamma(h)\nu_\star^{1+\epsilon} > \gamma(h)\nu_\star^{1+\epsilon},$$
where $\nu_\star > 0$ minimises $\frac{1}{n\sqrt{2\pi}h}\exp(-\nu^2/2h^2) + \gamma(h)\nu^{1+\epsilon}$.
Therefore, $\nu_\star$ is the unique positive number satisfying,
\begin{align*}
\frac{1}{n\sqrt{2\pi}h}\exp\left(-\frac{\nu_\star^2}{2h^2}\right)\left(-\frac{\nu_\star}{h^2}\right) &+ (1+\epsilon)\gamma(h)\nu_\star^\epsilon = 0\\ &\Rightarrow \nu_\star^{1-\epsilon} = (1+\epsilon)\gamma(h)n\sqrt{2\pi}h^3\exp\left(\frac{\nu_\star^2}{2h^2}\right)\\
&\Rightarrow \nu_\star \geqslant \left((1+\epsilon)\gamma(h)n\sqrt{2\pi}h^3\right)^{1/1-\epsilon}.
\end{align*}
We therefore have,
\begin{eqnarray*}
\fSL(\v, b) &>& \gamma(h)\left((1+\epsilon)\gamma(h)n\sqrt{2\pi}h^3\right)^{\frac{1+\epsilon}{1-\epsilon}}\\
&=& K \gamma(h)^{\frac{2}{1-\epsilon}}h^{\frac{3(1+\epsilon)}{1-\epsilon}}\\
&\geqslant& K h^{\frac{2r+3(1+\epsilon)}{1-\epsilon}},
\end{eqnarray*}
where $K$ is a constant which can be chosen independent of $(\v, b)$. Finally,
for any $H(\v^\prime, b^\prime) \in F_{\mathrm{SSC}}$ with non-zero margin,
$\exists h^\prime > 0$ s.t.  
\[ h \in(0,h^\prime) \Rightarrow \fSL(\v^\prime,
b^\prime) = \hat I(\v^\prime, b^\prime) < K
h^{\frac{2r+3(1+\epsilon)}{1-\epsilon}} < \fSL(\v, b).\]
Since $K$ is
independent of $(\v, b)$, the result follows. The final set of inequalities
holds since the hyperplane $H(\v^\prime, b^\prime)$ is assumed to have non-zero
margin, say $M_{\v^\prime, b^\prime} > 0$, and hence $\hat I(\v^\prime, b^\prime) \leqslant \frac{1}{h\sqrt{2\pi}}\exp\{-M_{\v^\prime, b^\prime}/2h^2\}$, which tends to zero faster
than any polynomial in $h$.

\end{proof}

\section{Estimation of Minimum Density Hyperplanes}
\label{sec:Methodology}

In this section we discuss the computation of MDHs.
We first investigate the continuity and differentiability properties
required to optimise the projection indices~$\phiUL(\v)$ and~$\phiSSL(\v)$.

Since the domain of both projection indices, $\phiUL(\v)$ and~$\phiSSL(\v)$,
is the boundary of the unit-sphere in~$\R^d$
it is more convenient
to express $\v$ in terms of spherical coordinates,
\begin{equation}\label{eq:Spherical}
v_i(\theta)  = \left\{\begin{array}{ll}\cos(\theta_i) \prod_{j=1}^{i-1}
\sin(\theta_j),& i=1,\ldots,d-1  \\
\prod_{j=1}^{d-1} \sin(\theta_j), & i = d, \end{array}\right.
\end{equation}
where $\theta \in \Th = [0, \pi]^{d-2} \times [0,2\pi]$ is called the {\em projection angle}\/. 
Using spherical coordinates renders the domain, $\Th$, convex and compact, and
reduces dimensionality by one.

As the following discussion applies to both~$\phiUL(\v)$ and~$\phiSSL(\v)$
we denote a generic projection index $\phi:\Theta \to \R$, and the associated set of minimisers, as,
\begin{align}\label{OpVal}
\phi(\theta) &= \min_{b \in A} f(\v(\theta),b),\\
\hY(\theta) &= \left\{ b \in A\, \big| \, f(\v(\theta), b) = \phi(\theta) \right\},
\end{align}
where $f(\v(\theta), b)$ is continuously differentiable,
$A \subset \R$ is compact and convex, and the correspondence
$\hY(\theta)$
gives the set of global minimisers of $f(\v(\theta),b)$ for each $\theta$. 
The definition of~$A$ is not critical in our formulation. 
Setting,
\begin{equation}
A \supset \left[\min_{\v \in \B}\{\mu_\v\}-\alpha \sigma_{\textrm{pc}_1} -\eta, \max_{\v \in \B}\{\mu_\v\}+\alpha \sigma_{\textrm{pc}_1} + \eta \right],
\end{equation}
where $\sigma^2_{\textrm{pc}_1}$ is the variance of the projections
along the first principal component, ensures that the set of hyperplanes that satisfy
the constraint of Equation~(\ref{constr1}) will be a subset of $A$ for all $\v$.

Berge's maximum theorem~\citep{Berge1963,Polak1987}, establishes the continuity
of $\phi(\theta)$ and the upper-semicontinuity (u.s.c.) of the correspondence
$\hY(\theta)$.
Theorem~3.1 in~\citep{Polak1987} enables us to establish that $\phi(\theta)$ is
locally Lipschitz continuous. Using Theorem 4.13 of~\citet{BonnansS2000} we can
further show that $\phi(\theta)$ is directionally differentiable everywhere.
The directional derivative at $\theta$ in the direction $\nu$ is given by,
\begin{equation}\label{OptDif}
d \phi(\theta; \nu) = \min_{b \in \hY(\theta)} D_{\theta} f(\v(\theta), b)
\cdot \nu,
\end{equation}
where $D_{\theta}$ denotes the derivative with respect to $\theta$. 
It is clear from Equation~(\ref{OptDif}) that $\phi(\theta)$ is differentiable if
$D_{\theta} f(\v(\theta), b)$ is the same for all $b \in \hY(\theta)$. If
$B(\theta)$ is a singleton then this condition is trivially satisfied and
$\phi(\theta)$ is continuously differentiable at $\theta$.

It is possible to construct examples in which $B(\theta)$ is not a singleton.
However, with the exception of contrived examples, our experience with real and
simulated data sets indicates that when $h$ is set through standard bandwidth
selection rules $B(\theta)$ is almost always a singleton over the optimisation path.

\begin{proposition}
Suppose $B(\theta)$ is a singleton for almost all $\theta \in \Th$. Then $\phi(\theta)$ is continuously differentiable almost everywhere. 
\end{proposition}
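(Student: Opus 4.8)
The plan is to combine the directional-derivative formula in Equation~(\ref{OptDif}) with the stated singleton hypothesis, and then separately argue continuity of the gradient on the set where $B(\theta)$ is a singleton. First I would fix a point $\theta_0 \in \Th$ at which $B(\theta_0)$ is a singleton, say $B(\theta_0) = \{b_0\}$. From Equation~(\ref{OptDif}), for every direction $\nu$ we have $d\phi(\theta_0;\nu) = D_\theta f(\v(\theta_0),b_0)\cdot \nu$, which is linear in $\nu$; since $\phi$ is directionally differentiable everywhere (established earlier via Theorem 4.13 of~\citet{BonnansS2000}) and its directional derivative at $\theta_0$ is linear, $\phi$ is Gateaux differentiable at $\theta_0$ with gradient $D_\theta f(\v(\theta_0),b_0)$. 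Because $\phi$ is also locally Lipschitz (Theorem 3.1 of~\citet{Polak1987}), Gateaux differentiability upgrades to Fr\'echet differentiability at $\theta_0$. Hence $\phi$ is differentiable at every $\theta$ for which $B(\theta)$ is a singleton, which by hypothesis is almost every $\theta$.

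The remaining task is to show that $\theta\mapsto \nabla\phi(\theta)$ is continuous at almost every such $\theta$, so that $\phi$ is \emph{continuously} differentiable almost everywhere. Here I would use the upper semicontinuity of the correspondence $B(\cdot)$ guaranteed by Berge's maximum theorem. Take a point $\theta_0$ with $B(\theta_0)=\{b_0\}$ and a sequence $\theta_k\to\theta_0$. By u.s.c.\ of $B$ and compactness of $A$, every accumulation point of any selection $b_k\in B(\theta_k)$ lies in $B(\theta_0)=\{b_0\}$, so $b_k\to b_0$. Since $f$ is continuously differentiable in $(\theta,b)$ jointly, $D_\theta f(\v(\theta_k),b_k)\to D_\theta f(\v(\theta_0),b_0)$. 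At those $\theta_k$ which themselves have singleton $B(\theta_k)$ — a full-measure set — we have $\nabla\phi(\theta_k)=D_\theta f(\v(\theta_k),b_k)$, and therefore $\nabla\phi(\theta_k)\to\nabla\phi(\theta_0)$. This gives continuity of the gradient on the full-measure set of singleton points, completing the argument.

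The main obstacle I anticipate is bridging the gap between ``differentiable at each singleton point'' and ``continuously differentiable almost everywhere'': the convergence $\nabla\phi(\theta_k)\to\nabla\phi(\theta_0)$ only makes sense when $\phi$ is differentiable at the $\theta_k$ as well, so one must restrict attention to the (still full-measure) subset of $\Th$ on which $B$ is a singleton and verify that continuity of $\nabla\phi$ relative to that subset is what the statement requires. A secondary subtlety is the passage from linearity of the directional derivative to genuine (Fr\'echet) differentiability; this is standard for locally Lipschitz functions on finite-dimensional domains, but I would state it explicitly rather than take it for granted. Everything else — joint $C^1$-smoothness of $f$, compactness of $A$, and the u.s.c.\ of $B$ — is either assumed in the setup or already supplied by the earlier invocation of Berge's theorem.
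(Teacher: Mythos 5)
Your proposal is correct and follows essentially the same route as the paper's (one-line) proof: at singleton points the derivative formula reduces to $D_\theta f(\v(\theta),b)$, which is continuous. You simply make explicit the two steps the paper leaves implicit — upgrading linearity of the directional derivative to Fr\'echet differentiability via local Lipschitz continuity, and using upper semicontinuity of $B(\cdot)$ together with joint smoothness of $f$ to get continuity of the gradient on the full-measure set of singleton points.
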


\begin{proof}
The result follows immediately from the fact that if $B(\theta) = \{b\}$ is a singleton, then the derivative $D\phi(\theta) = D_\theta f(\v(\theta), b)$, which is continuous.
\end{proof}

\citet{Wolfe1972} has provided early examples of how standard gradient-based
methods can fail to converge to a local optimum when used to minimise nonsmooth
functions.
In the last decade a new class of nonsmooth optimisation algorithms has been
developed based on gradient sampling~\citep{BurkeLO2006}. Gradient sampling
methods use generalised gradient descent to find local minima. At each
iteration points are randomly sampled in a radius $\varepsilon$ of the current
candidate solution, and the gradient at each point is computed. The convex
hull of these gradients serves as an approximation of the $\varepsilon$-Clarke
generalised gradient~\citep{BurkeLO2002a}.
The minimum element in the convex hull of these gradients is a descent
direction. 
The gradient sampling algorithm progressively reduces the sampling radius so
that the convex hull approximates the Clarke generalised gradient.  When the
origin is contained in the Clarke generalised gradient there is no direction of
descent, and hence the current candidate solution is a local minimum.
Gradient sampling achieves almost sure global convergence for functions that
are locally Lipschitz continuous and almost everywhere continuously
differentiable.  It is also well documented that it is an effective
optimisation method for functions that are only locally Lipschitz continuous.

\subsection{Computational Complexity}

In this subsection we analyse the computational complexity of MDP$^2$.
At each iteration the algorithm projects the data sample onto $\v(\theta)$
which involves $\O(n d)$ operations.
To compute the projection index, $\phi(\theta)$, we need to minimise the
penalised density integral, $f(\v(\theta),b)$. This can be achieved by first
evaluating $f(\v(\theta),b)$ on a grid of $m$ points, to bracket the location
of the minimiser, and then applying bisection to compute the minimiser(s)
within the desired accuracy.
The main computational cost of this procedure is due to the first step which
involves $m$ evaluations of a kernel density estimator with $n$ kernels.  Using
the improved fast Gauss transform~\citep{Morariu08} this can be performed in
$\O(m+n)$ operations, instead of $\O(mn)$.  Bisection requires $\O(-\log_2
\varepsilon)$ iterations to locate the minimiser with accuracy~$\varepsilon$.

If the minimiser of the penalised density integral $b^\star = \argmin_{b \in A}
f(\v(\theta),b)$, is unique
the projection index is continuously differentiable at $\theta$. To obtain the
derivative of the projection index it is convenient to define the projection
function, 
$P(\v) = \left( \x_1 \cdot \v, \ldots, \x_n \cdot \v \right)^\top.$
An application of the chain rule yields,
\begin{equation}\label{Der}
d_\theta \phi = D_\theta f(\v(\theta), b^\star) = D_P f(\v(\theta), b^\star)
D_\v P D_\theta \v
\end{equation}
where the derivative of the projections of the data sample with respect to $\v$
is equal to the data matrix, $D_\v P = (\x_1, \ldots, \x_n)^{\top}$; and
$D_\theta \v$ is the derivative of $\v$ with respect to the projection angle,
which yields a $d\times (d-1)$ matrix.  The computation of the derivative
therefore requires $\O(d(n+d))$ operations.

The original GS algorithm requires $\O(d)$ gradient evaluations at each
iteration which is costly. \citet{CurtisQ2013} have developed an adaptive
gradient sampling algorithm that requires $\O(1)$ gradient evaluations in each
iteration. More recently, \citet{LewisO2013} have strongly advocated that for
the minimisation of nonsmooth, nonconvex, locally Lipschitz functions, a simple
BFGS method using inexact line searches is much more efficient in practice than
gradient sampling, although no convergence guarantees have been established for
this method. BFGS requires a single gradient evaluation at each iteration and a
matrix vector operation to update the Hessian matrix approximation. In our
experiments we use the BFGS algorithm.

\section{Experimental Results}\label{sec:Exper}

In this section we assess the empirical performance of MDHs
for clustering and semi-supervised
classification. We compare performance with existing state-of-the-art methods
for both problems on the following 14 benchmark data sets: Banknote
authentication (banknote), Breast Cancer Wisconsin original (br. cancer),
Forest type mapping (forest), Ionosphere, Optical recognition of handwritten
digits (optidigits), Pen-based recognition of hand-written digits (pendigits),
Seeds, Smartphone-Based Recognition of Human Activities and Postural
Transitions (smartphone), Statlog Image Segmentation (image seg.), Statlog
Landsat Satellite (satellite), Synthetic control chart time series (synth
control), Congressional voting records (voting), Wine, and Yeast cell cycle
analysis (yeast).  Details of these data sets, in terms of their size, $n$,
dimensionality, $d$ and number of clusters, $c$, can be seen in
Table~\ref{tb:datasets}.

\begin{table}
\begin{center}
\begin{tabular}{r|rrr}
		&		$n$ 	& $d$ 	& $c$ \\
\hline
banknote$^a$ & 1372 & 4 & 2\\
br. cancer$^a$ & 699 & 9 & 2\\
forest$^a$ & 523 & 27 & 4\\
ionosphere$^a$ & 351 & 33 & 2\\
optidigits$^a$ & 5618 & 64 & 10\\
pendigits$^a$ & 10992 & 16 & 10\\
seeds$^a$ & 210 & 7 & 3\\
smartphone $^a$ & 10929 & 561 & 12\\
image seg.$^a$ & 2309 & 18 & 7\\
satellite$^a$ & 6435 & 36 & 6\\
synth$^a$ & 600 & 60 & 6\\
voting$^a$ & 435 & 16 & 2\\
wine$^a$ & 178 & 13 & 3\\
yeast$^b$ & 698 & 72 & 5
\end{tabular}
\end{center}
{\scriptsize
a. UCI machine learning repository \url{https://archive.ics.uci.edu/ml/datasets.html}\\
b. Stanford Yeast Cell Cycle Analysis Project \url{http://genome-www.stanford.edu/cellcycle/}
}
\caption{Details of benchmark data sets: size~($n$), dimensionality~($d$), number of clusters~($c$).} 
\label{tb:datasets}
\end{table}

\subsection{Clustering}\label{ssec:expUL}

Since an MDH yields a bi-partition of a data set rather than a
complete clustering, we propose two measures to assess the quality of a binary partition
of a data set containing an arbitrary number of clusters.
Both take values in $[0, 1]$ with larger values indicating a better partition.
These measures are motivated by the fact that a good binary partition should
(a) avoid dividing clusters between elements of the partition, and (b) be able to
discriminate at least one cluster from the rest of the data.
To capture this we modify the cluster labels of the data by assigning each
cluster to the element of the binary partition which contains the
majority of its members. In the case of a tie the cluster is assigned to the
smaller of the two partitions. We thus merge the true clusters into two
aggregate clusters,~$C_{1}$ and $C_{2}$.

The first measure we use is the binary V-measure which is simply the
V-measure~\citep{Rosenberg2007} computed on $C_{1}, C_{2}$ with respect to the
binary partition, which we denote $\Pi_1, \Pi_2$.  The V-measure is the
harmonic mean of homogeneity and completeness. For a data set containing
clusters $C_1,\ldots,C_c$, partitioned as $\Pi_1,\ldots,\Pi_k$, homogeneity is
defined as the conditional entropy of the cluster distribution within each
partition, $\Pi_i$. Completeness is symmetric to homogeneity and measures the
conditional entropy of each partition within each cluster, $C_j$.
An important characteristic of the V-measure for evaluating binary partitions
is that if the distribution of clusters within each partition is equal to the
overall cluster distribution in the data set then the V-measure is equal to
zero~\citep{Rosenberg2007}.  This means that if an algorithm fails to
distinguish the majority of any of the clusters from the remainder of the data,
the binary V-measure returns zero performance.  Other evaluation metrics for
clustering, such as purity and the Rand index, can assign a high value to such
partitions.

To define the second performance measure we first determine the number of
correctly and incorrectly classified samples.
The error of a binary partition, $\mbox{E}(\Pi_1, \Pi_2)$, given in
Equation~(\ref{eq:error}), is defined as the number of elements of each aggregate
cluster which are not in the same partition as the majority of their original
clusters.
In contrast, the success of a partition, $\mbox{S}(\Pi_1, \Pi_2)$,
Equation~(\ref{eq:success}), measures the number of samples which are in the same
partition as the majority of their original clusters.
The Success Ratio, $\mbox{SR}(\Pi_1, \Pi_2)$, Equation~(\ref{eq:SR}), captures the
extent to which the majority of at least one cluster is well-distinguished from
the rest of the data.
\begin{align}
\mbox{E}(\Pi_1, \Pi_2) & = \min\left\{\vert \Pi_1 \cap C_1 \vert + \vert \Pi_2
	\cap C_2 \vert, \vert \Pi_1 \cap C_2 \vert + \vert \Pi_2 \cap C_1 \vert\right\}, \label{eq:error}\\
\mbox{S}(\Pi_1, \Pi_2) & = \min\left\{\max\left\{\vert\Pi_1 \cap C_1 \vert, \vert
\Pi_1 \cap C_2 \vert \right\}, \max\left\{\vert \Pi_2 \cap C_1 \vert, \vert \Pi_2 \cap C_2 \vert \right\}\right\}, \label{eq:success}\\
\mbox{SR}(\Pi_1, \Pi_2) & = \frac{ \mbox{S}(\Pi_1, \Pi_2) }{ \mbox{S}(\Pi_1, \Pi_2) + \mbox{E}(\Pi_1, \Pi_2)}. \label{eq:SR}
\end{align}
The Success Ratio takes the value zero if an
algorithm fails to distinguish the majority of any cluster from the remainder of
the data.

\subsubsection{Parameter Settings for \MDP}

The two most important settings for the performance of
the proposed approach are the initial projection direction, and the choice
of~$\alpha$, which controls the width of the interval $F(\v)$ within which the
optimal hyperplane falls.
Despite the ability of the \MDP formulation to
mitigate the effect of local minima of the projected density,
the problem remains non-convex and local minima in the projection index
can still lead to suboptimal performance. We have found that this effect is
amplified in general when either or both the number of dimensions, and
the number of high density clusters in the data set is large.
To better handle the effect of local optima, we use multiple initialisations
and select the MDH that maximises the
\textit{relative depth} criterion, defined in Equation~(\ref{eq:relDep}).
The relative depth of an MDH, $H(\v, b)$, is defined as the
smaller of the relative differences in the density on the MDH and its two adjacent
modes in the projected density,
\begin{equation}\label{eq:relDep}
\mbox{RelativeDepth}(\v, b) = \frac{ \min\left\{ \hat{I}(\v, m_l), \hat{I}(\v, m_r) \right\} - \hat{I}(\v, b)}{\hat{I}(\v, b)}
\end{equation}
where $m_l$ and $m_r$ are the two adjacent modes in 
the projected density on $\v$. If an MDH
does not separate the modes of the projected density, then
its relative depth is set to zero, signalling a failure of \MDP
to identify a meaningful bi-partition.
The relative depth is
appealing because it captures the fact that a high quality separating hyperplane
should have a low density integral, and separate well the
modes of the projected density.
Note also that the relative depth is equivalent to the inverse of a measure
used to define cluster overlap in the context of Gaussian mixtures~\citep{Aitnouri2000}.
In all the reported experiments we initialise \MDP to the
first and second principal component and select the MDH with the largest relative depth.
For the data sets listed above it was never the case that both initialisations
led to MDHs with zero relative depth.

The choice of~$\alpha$ determines the trade-off between a balanced bi-partition
and the ability to discover lower density hyperplanes. 
The difficulties associated with choosing this parameter are illustrated in
Figure~\ref{fig:alpha}.
In each sub-figure the horizontal axis is the candidate projection
vector,~$\v$, while the right vertical axis is the direction of maximum
variability orthogonal to $\v$. Points correspond to projections of
the data sample onto this two-dimensional space, while colour
indicates cluster membership. The solid line depicts the projected
density on $\v$, while the dotted line depicts the penalised
function, $\fUL(\v, \cdot)$. The scale of both functions is depicted on the left
vertical axis. The solid vertical line indicates the MDH along~$\v$.
Setting $\alpha$ to a large value can cause \MDP to focus on hyperplanes that
have low density because they partition only a small subset of the data set as shown
in Figure~\ref{fig:alpha1}.
In contrast smaller values of $\alpha$ may cause the algorithm
to disregard valid lower density hyperplane separators (see
Figure~\ref{fig:alpha2}), or for the separating hyperplane to not be a local
minimiser of the projected density (see Figure~\ref{fig:alpha3}).

\begin{figure}[!h]
\centering
\subfigure[MDH separating few observations]{\includegraphics[width = 5cm]{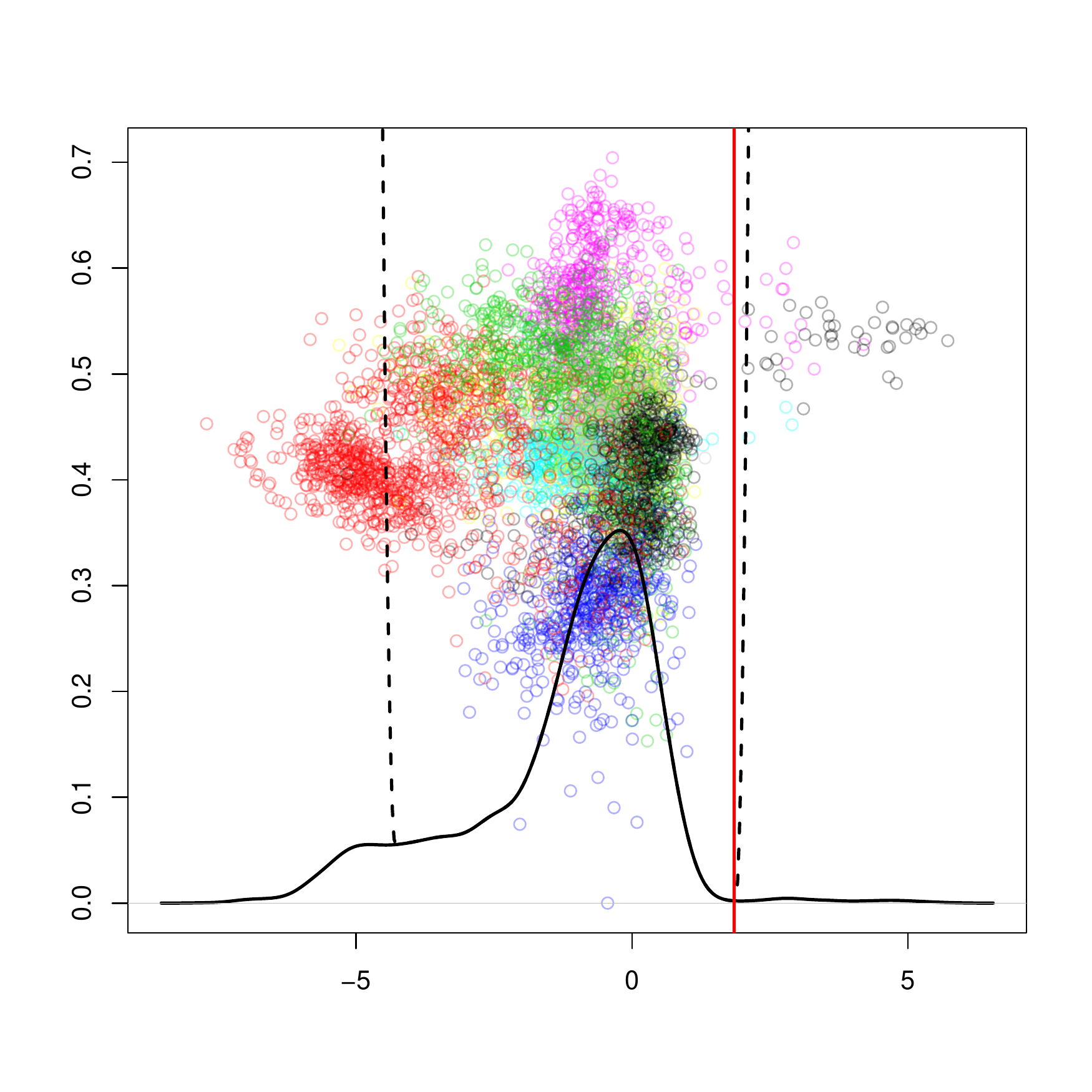}\label{fig:alpha1}}
\subfigure[Lower density hyperplane beyond feasible region]{\includegraphics[width = 5cm]{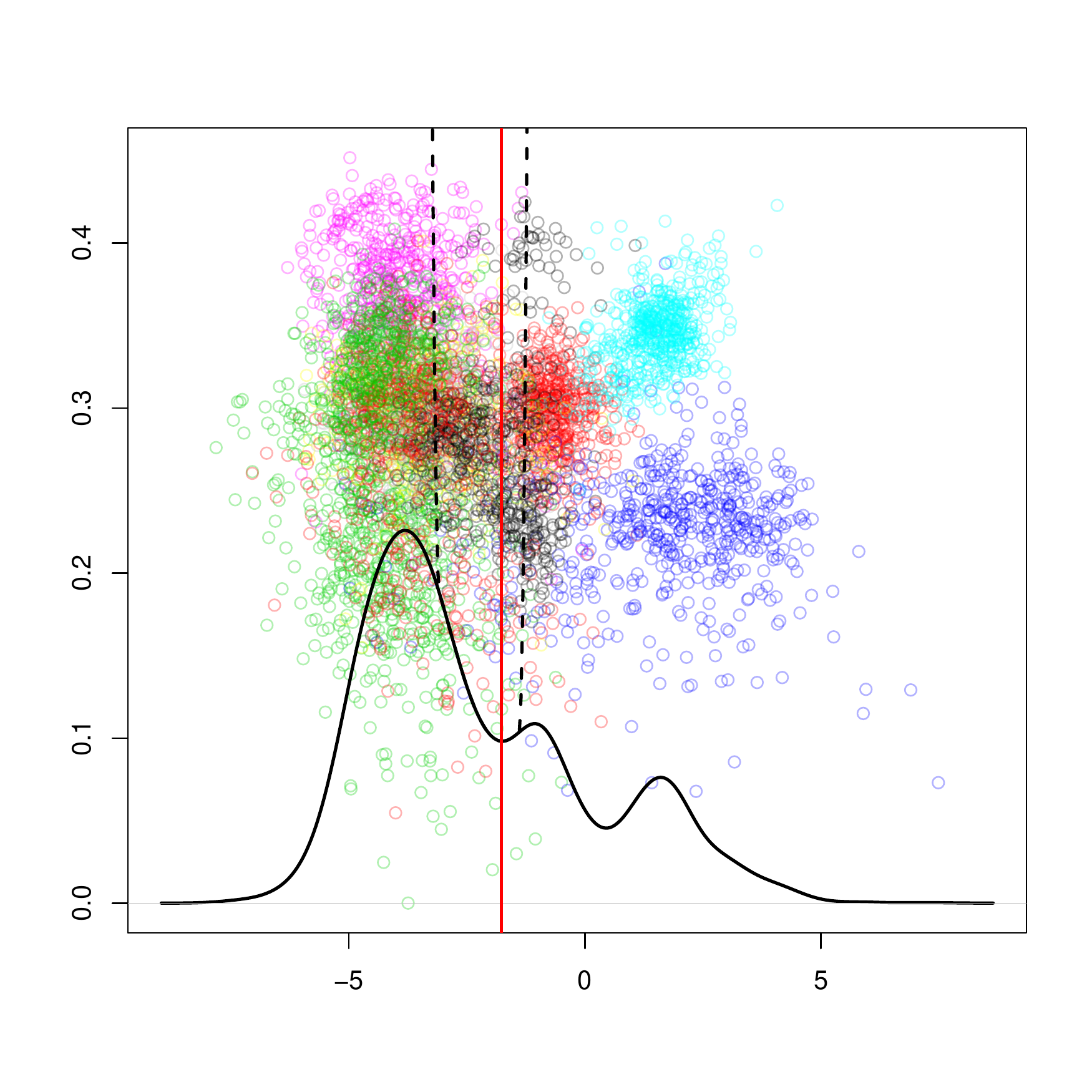}\label{fig:alpha2}}
\subfigure[MDH not a minimiser of the projected density]{\includegraphics[width = 5cm]{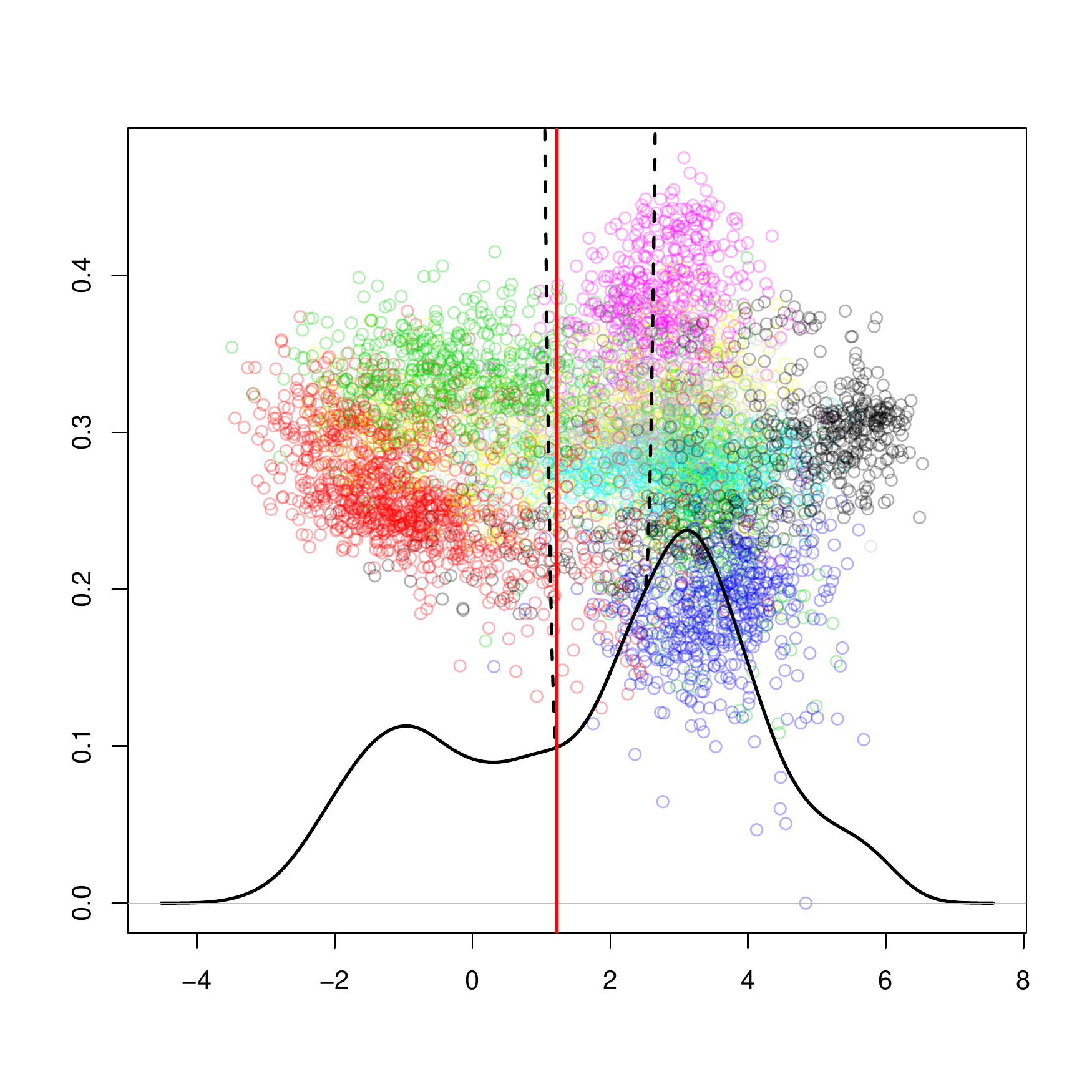}\label{fig:alpha3}}
\caption{Impact of choice of $\alpha$ on minimum density hyperplane.}\label{fig:alpha}
\end{figure}


Rather than selecting a single value for~$\alpha$ we recommend solving \MDP
repeatedly for an increasing sequence of values in the range $\{\alpha_{\min}, \alpha_{\max}\}$,
where each implementation beyond the first is initialised using the solution to the previous.
Setting
$\alpha_{\min}$ close to zero forces \MDP to seek low density hyperplanes that
induce a balanced data partition. This tends to find projections which display
strong multimodal structure, yet prevents convergence
to hyperplanes that have low density because they partition a few observations,
as in the case shown in Figure~\ref{fig:alpha1}.
Increasing $\alpha$ progressively fine-tunes the location of the MDH.
To avoid sensitivity to the value of $\alpha_{\max}$ (set to 0.9)
the output of the algorithm is the last hyperplane that corresponds to a
minimiser of the projected density.
Figure~\ref{fig:opti} illustrates this approach using the optical recognition
of handwritten digits data set from the UCI machine learning
repository~\citep{uci}. 
Figure~\ref{fig:opt1} depicts the projected density
on the initial projection direction, which in this case is the second
principal component. As shown, the density is unimodal and the clusters are not
well separated along this vector. Although not shown, if a large value of
$\alpha$ is used from the outset, \MDP will identify a vector along which 
the projected density is unimodal and skewed.
Figure~\ref{fig:opt2} shows that after five iterations with $\alpha=10^{-2}$
\MDP has identified a projection vector with bimodal density.
In subsequent iterations the two modes become more clearly separated,
Figure~\ref{fig:opt3},
while increasing $\alpha$ enables \MDP to locate an MDH
that corresponds to a minimiser of $\hat{I}(\v, b)$, as illustrated in Figure~\ref{fig:opt4}.

\begin{figure}[!ht]
\centering
\subfigure[Initial projection]{\includegraphics[width = 7cm, height = 6cm]{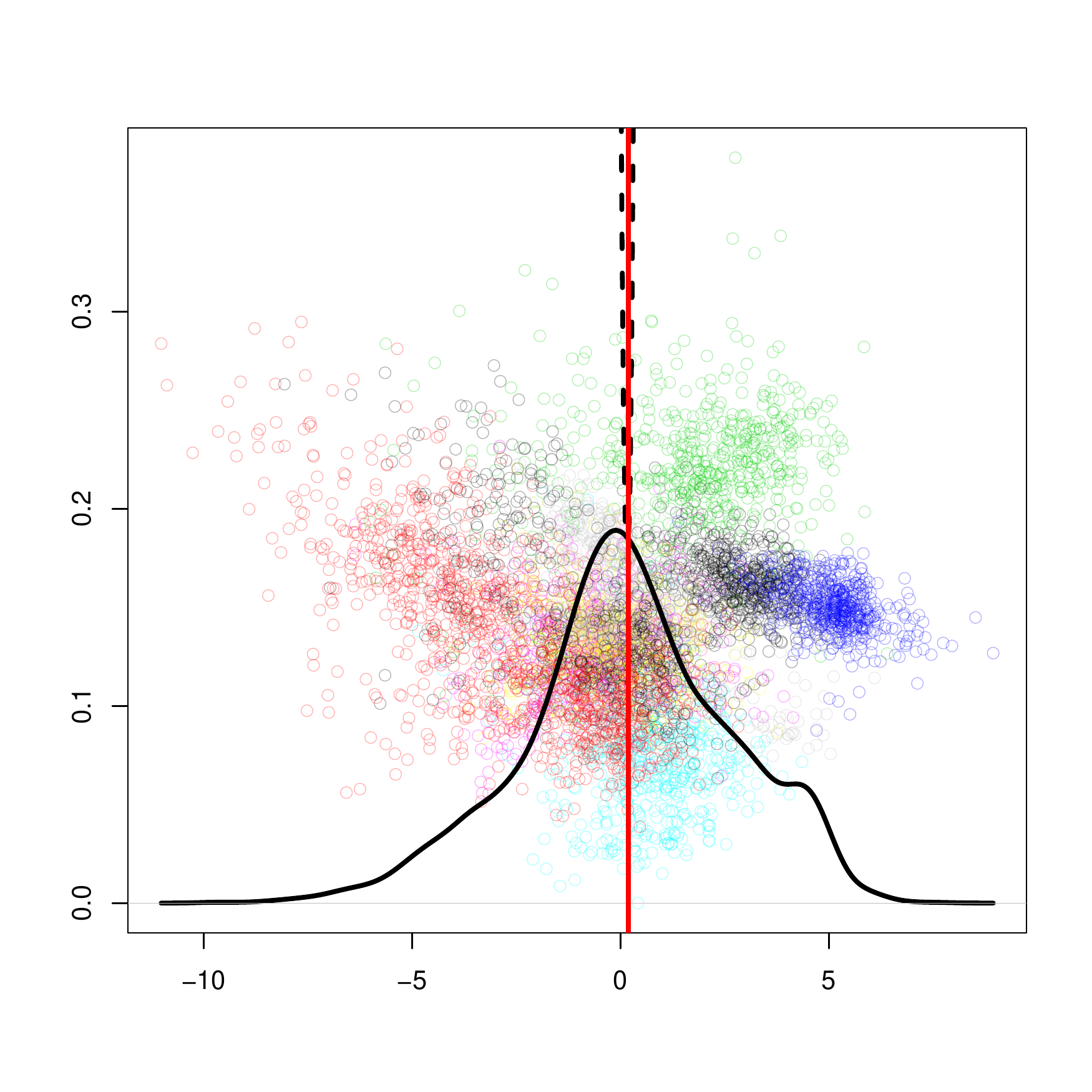}\label{fig:opt1}}
\subfigure[Projection after 5 iterations with $\alpha=10^{-2}$]{\includegraphics[width = 7cm, height = 6cm]{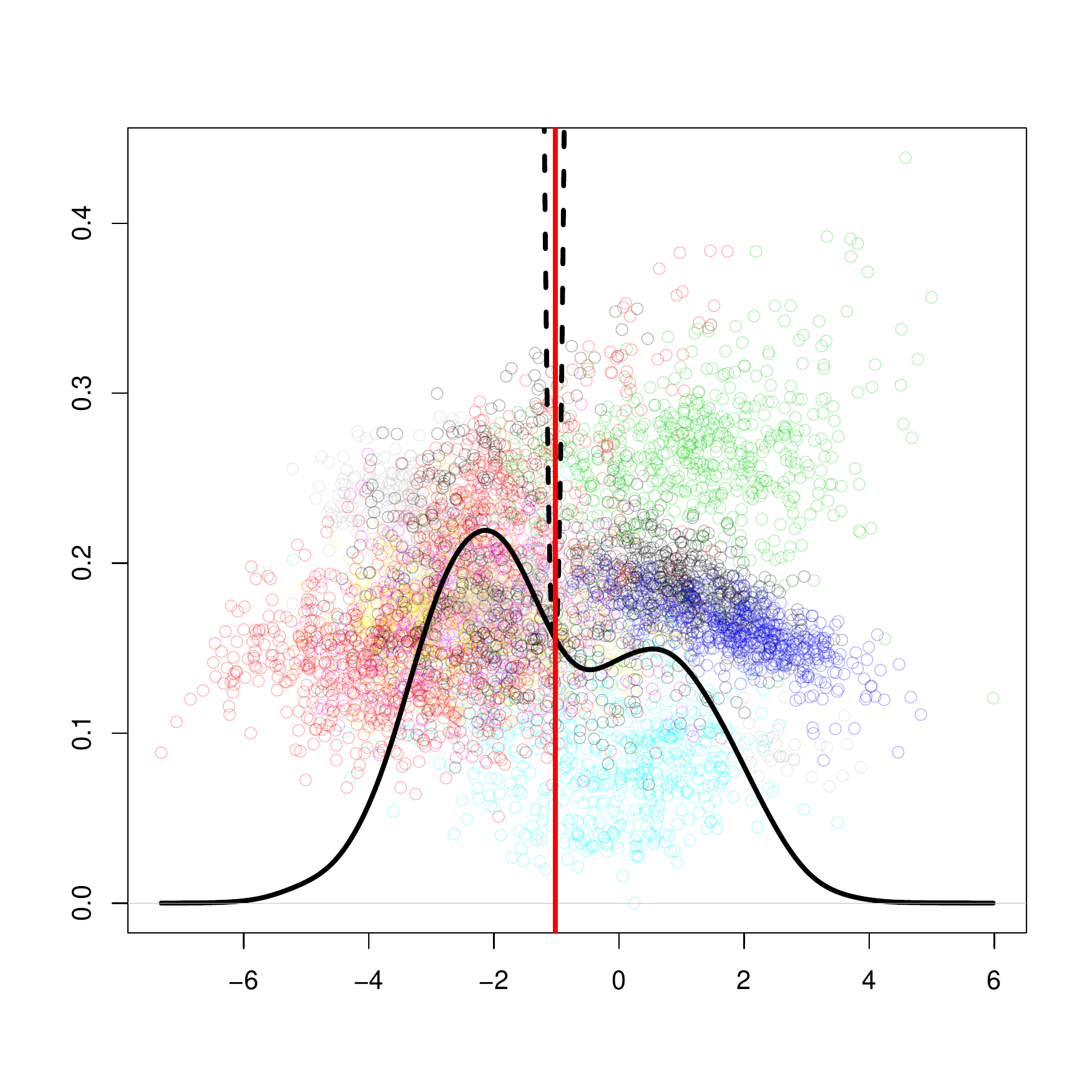}\label{fig:opt2}}
\\
\subfigure[Projection after 25 iterations with
$\alpha=10^{-2}$]{\includegraphics[width = 7cm, height = 6cm]{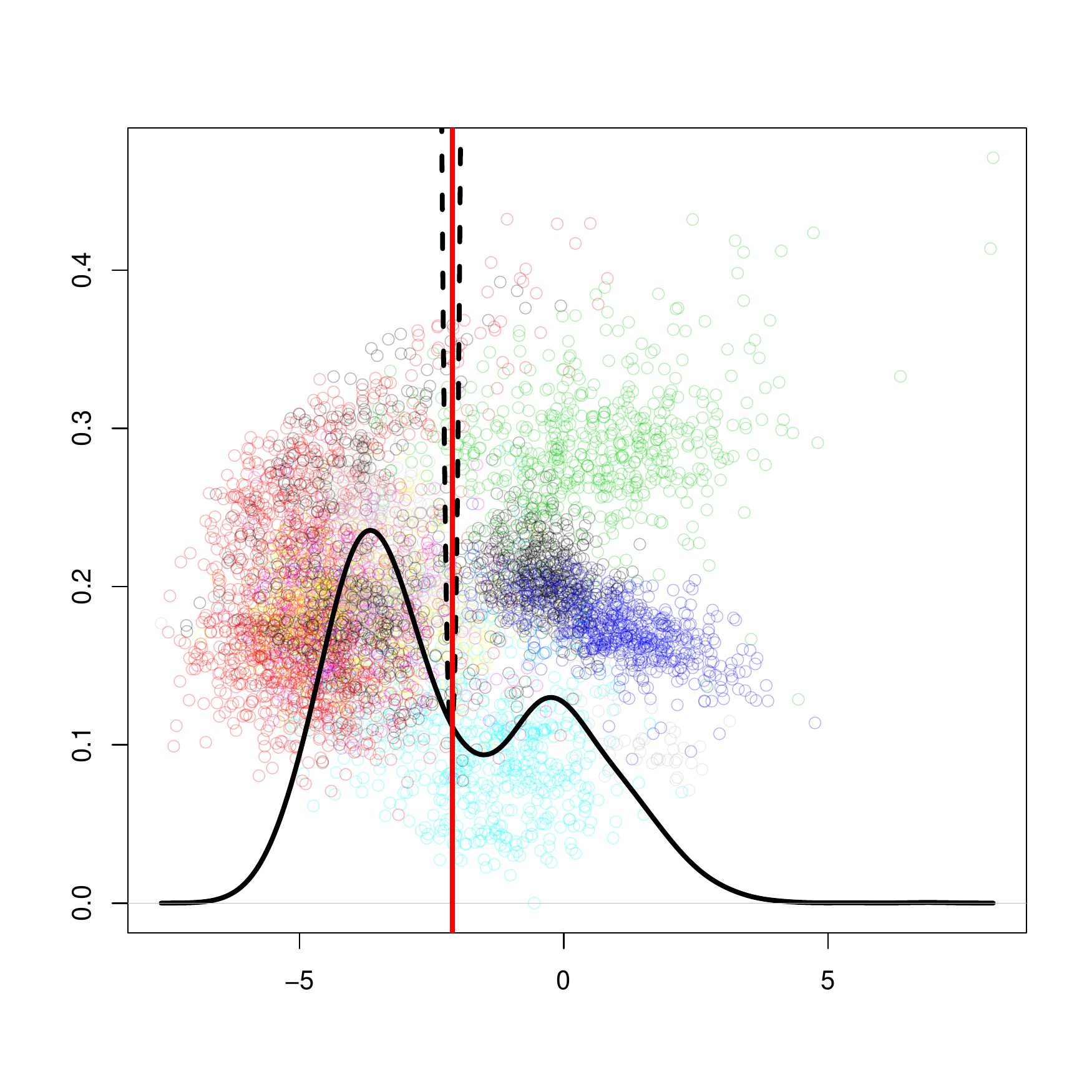}\label{fig:opt3}}
\subfigure[Final projection with $\alpha=\alpha_{\max}$]{\includegraphics[width = 7cm, height = 6cm]{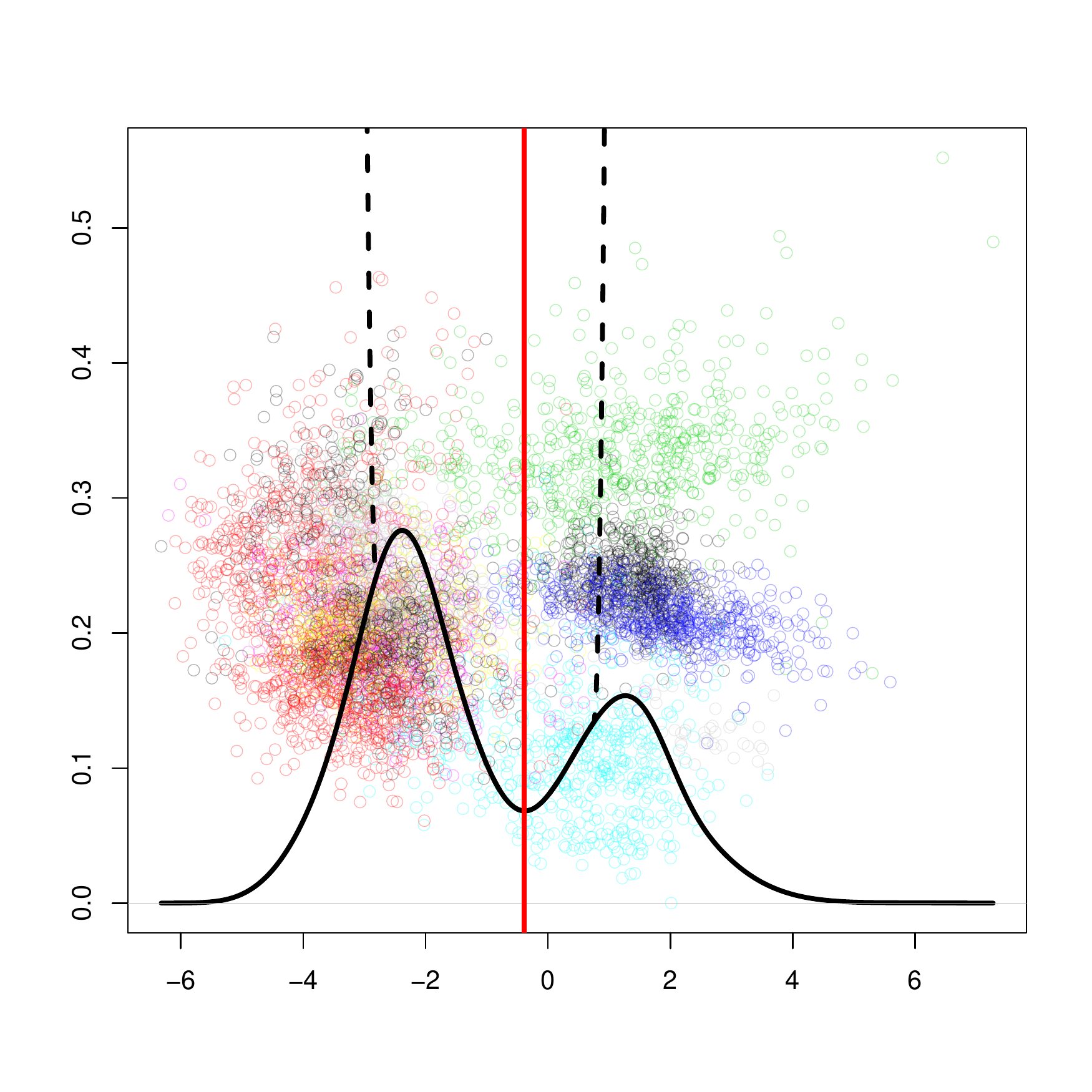}\label{fig:opt4}}
\caption{Evolution of the minimum density hyperplane through consecutive iterations.}\label{fig:opti}
\end{figure}

In all experiments we set the bandwidth parameter to~$h = 0.9
\hat{\sigma}_{\textrm{pc}_1}n^{-1/5}$, where
$\hat{\sigma}_{\textrm{pc}_1}$ is the estimated standard deviation of
the data projected onto the first principal component. This bandwidth
selection rule is recommended when the density being approximated is
assumed to be multimodal~\citep{Silverman1986}.
The parameter $\eta$ controls the distance between the minimisers of 
$\argmin_{b \in \R} \fUL(\v,b)$ and $\argmin_{b \in
F(\v)} \hat{I}(\v,b)$, while larger values of $\epsilon$ increase the smoothness
of the penalised function $\fUL$. 
Values of $\eta$ 
close to zero affect the numerical stability of the one-dimensional
optimisation problem, due to the term $\frac{L}{\eta^\epsilon}$ in $\fUL$ becoming very large.
We used $\eta=10^{-2}$ and $\epsilon=1-10^{-6}$ to avoid numerical instability.
Beyond these numerical problems
the values of $\eta$ and $\epsilon$ do not affect the solutions obtained through~MDP$^2$.


\subsubsection{Performance Evaluation}

We compare the performance of \MDP for clustering with the following methods:
\begin{enumerate}
\addtolength{\itemsep}{-0.6\baselineskip}
\item $k$-means++~\citep{ArthurV2007}, a version of $k$-means that
is guaranteed to be $\mathcal{O}(\log k)$-competitive to the optimal
$k$-means clustering.

\item The adaptive linear discriminant analysis guided $k$-means
	(LDA-$k$m)~\citep{DingL2007}. LDA-$k$m attempts to discover 
	the most discriminative linear subspace for
	clustering by iteratively using $k$-means, 
	to assign labels to observations, and LDA to
	identify the most discriminative subspace.

\item The principal direction divisive partitioning (PDDP)~\citep{Boley1998}, and
	the density-enhanced PDDP (dePDDP)~\citep{TasoulisTP2010}. Both methods
	project the data onto the first principal component. PDDP splits at the
	mean of the projections, while dePDDP splits at the lowest local minimum 
	of the one-dimensional density estimator. 

\item The iterative support vector regression algorithm for 
	MMC~\citep{ZhangTK2009} using the inner product and 
	Gaussian kernel, iSVR-L and iSVR-G respectively.
	Both are initialised with the output of 2-means++.
\item Normalised cut spectral clustering (SCn)~\citep{NgJW2002} using the
	Gaussian affinity function, and the automatic bandwidth selection
	method of~\cite{Zelnik2004}. This choice of kernel and bandwidth produced
	substantially better performance than alternative choices considered.
	For data sets that are too large for the eigen decomposition of the Gram
	matrix to be feasible we employed the Nystr{\"o}m
	method~\citep{FowlkesBSM2004}.

\end{enumerate}

We also considered the density-based clustering algorithm PdfCluster~\citep{MenardiA2014}, but this algorithm could not
be executed on the larger data sets and so its performance is not reported in this
paper.
With the exception of SCn and iSVR-G, the methods considered bi-partition the
data through a hyperplane in the original feature space. For the 2-means and
LDA-2m algorithm the hyperplane separator bisects the line segment joining the
two centroids.
iSVR-L directly seeks the maximum margin hyperplane in the original space,
while iSVR-G seeks the maximum margin hyperplane in the feature space defined
by the Gaussian kernel.
PDDP and dePDDP use a hyperplane whose normal vector is the first principal
component. PDDP uses a fixed split point while dePDDP uses the
hyperplane with minimum density along the fixed projection direction.

Table~\ref{tbl:cluster} reports the performance of the considered methods with
respect to the success ratio (SR) and the binary V-measure (V-m) on the fourteen
data sets. In addition Figures~\ref{fig:clMetr}
and~\ref{fig:clReg} provide summaries of the overall performance on all data sets
using boxplots of the raw performance measures as well as the associated \textit{regret}.
The regret of an algorithm on a given data set is defined as the difference
between the best performance attained on this data set and the performance of
this algorithm. By comparing against the best performing clustering algorithm regret
accommodates for differences in difficulty between clustering problems, while
also making use of the magnitude of performance differences between algorithms.
The distribution of performance with respect to both SR and V-m is negatively
skewed for most methods, and as a result the median is higher than the mean
(indicated with a red dot).

It is clear from Table~\ref{tbl:cluster} that no single method is consistently superior
to all others, although
\MDP achieves the highest or tied highest performance on seven data sets 
(more than any other method). More importantly \MDP is among the best performing
methods in almost all cases. This fact is better captured by
the regret distributions in Figure~\ref{fig:clReg}. Here we see that the
average, median, and maximum regret of \MDP is substantially lower than
any of the competing methods.
In addition \MDP achieves the highest mean and
median performance with respect to both SR and V-m, while
also having much lower variability in performance when compared with
most other methods.

Pairwise comparisons between \MDP and other methods reveal some
less obvious facts. SCn achieves higher performance than \MDP in
more examples (six) than any other competing method, however it is much
less consistent in its performance, obtaining very poor performance on
five of the data sets. The iSVR maximum margin clustering approach is
arguably the
closest competitor to MDP$^2$. 
iSVR-L and iSVR-G achieve
the second and third highest average performance with respect to V-m
and SR respectively.
The PDDP algorithm is the second best performing method on average with
respect to SR, but performs poorly with respect to V-m.
The density enhanced variant, dePDDP, performs on average much worse than
MDP$^2$. This approach is similarly motivated by obtaining hyperplanes
with low density integral, and its low average performance
indicates the usefulness of searching for high quality projections as
opposed to always using the first principal component.
Finally, neither of the $k$-means variants appears to be competitive with \MDP in
general.

\setlength{\tabcolsep}{.1cm}
\setlength\extrarowheight{.1cm}
\begin{table}
\small
\begin{center}
\scalebox{.85}{
\begin{tabular}{|l|ll|ll|ll|ll|ll|ll|ll|ll|}
\hline
\hline
	& \mcl{2}{\MDP} & \mcl{2}{iSVR-L} & \mcl{2}{iSVR-G} & \mcl{2}{SCn}   & \mcl{2}{LDA-2m} & \mcl{2}{2-means++} & \mcl{2}{PDDP}  & \mcl{2}{dePDDP} \\
\hline
Data set	& SR & V-m	& SR      & V-m      & SR      & V-m      & SR     & V-m   & SR    & V-m          & SR    & V-m      &  SR    & V-m   & SR & V-m  \\
\hline
banknote   & {\bf 0.79} & {\bf 0.55} & 0    & 0    & 0.35       & 0          & 0.46       & 0.10       & 0          & 0.01       & 0.37    & 0.01    & 0.40 & 0.03 & 0       & 0.03 \\
\hline
br. cancer & {\bf 0.91} & {\bf 0.79} & 0.73 & 0.56 & 0.73       & 0.56       & 0          & 0.13       & 0.87       & 0.71       & 0.87    & 0.72    & 0.91 & 0.78 & 0.90    & 0.77 \\
\hline
forest     & 0.78       & 0.67       & 0.90 & 0.72 & {\bf 0.91} & {\bf 0.74} & 0.56       & 0.41       & 0.76       & 0.63       & 0.72    & 0.58    & 0.64 & 0.36 & 0       & 0   \\
\hline
image seg. & 0.89       & 0.72       & 0.82 & 0.59 & 0.88       & 0.71       & 0.92       & 0.87       & 0.78       & 0.58       & 0.78    & 0.71    & 0.87 & 0.67 & {\bf 1} & {\bf 1} \\
\hline
ionosphere & 0.48       & 0.13       & 0.47 & 0.13 & 0.47       & 0.13       & {\bf 0.55} & {\bf 0.22} & 0.47       & 0.12       & 0.47    & 0.12    & 0.47 & 0.12 & 0.42    & 0.09 \\
\hline
optidigits & {\bf 0.93} & {\bf 0.85} & 0.63 & 0.29 & 0.82       & 0.60       & 0          & 0          & 0.81       & 0.62       & 0.92    & 0.82    & 0.68 & 0.30 & 0       & 0   \\
\hline
pendigits  & 0.74       & 0.39       & 0.79 & 0.55 & {\bf 0.88} & {\bf 0.68} & 0.80       & 0.68       & 0.79       & 0.55       & 0.78    & 0.57    & 0.79 & 0.54 & 0.61    & 0.42 \\
\hline
satellite  & 0.89       & 0.75       & 0.73 & 0.40 & 0.73       & 0.40       & {\bf 0.92} & {\bf 0.86} & 0.73       & 0.40       & 0.87    & 0.81    & 0.71 & 0.37 & 0       & 0   \\
\hline
seeds      & 0.88       & 0.73       & 0.71 & 0.53 & 0.71       & 0.53       & 0.89       & 0.76       & {\bf 0.96} & {\bf 0.90} & 0.86    & 0.70    & 0.75 & 0.59 & 0.73    & 0.60 \\
\hline
smartphone & {\bf 0.99} & {\bf 0.97} & 0.99 & 0.95 & 0.99       & 0.96       & 0.99       & 0.94       & 0.99       & 0.97       & 0.99    & 0.94    & 0.99 & 0.95 & 0       & 0   \\
\hline
synth      & 0.98       & 0.94       & 0.94 & 0.83 & 0.94       & 0.83       & {\bf 1}    & {\bf 1}    & 0.88       & 0.76       & {\bf 1} & {\bf 1} & 0.69 & 0.51 & {\bf 1} & {\bf 1} \\
\hline
voting     & {\bf 0.70} & {\bf 0.43} & 0.46 & 0.09 & 0          & 0          & 0          & 0.05       & 0.69       & 0.41       & 0       & 0       & 0.70 & 0.40 & 0.68    & 0.38 \\
\hline
wine       & {\bf 0.77} & {\bf 0.61} & 0.70 & 0.52 & 0.69       & 0.50       & 0.67       & 0.48       & 0.66       & 0.48       & 0.68    & 0.49    & 0.65 & 0.46 & 0.68    & 0.49 \\
\hline
yeast      & {\bf 0.92} & {\bf 0.76} & 0.89 & 0.68 & 0.91       & 0.72       & 0.84       & 0.61       & 0.86       & 0.63       & 0.91    & 0.73    & 0.87 & 0.65 & 0       & 0   \\
\hline
\hline
\multicolumn{3}{c|}{Average Improvement} & 0.13 & 0.18 & 0.12 & 0.14 & 0.22 & 0.16 & 0.10 & 0.11 & 0.10 & 0.08 & 0.11 & 0.18 & 0.40 & 0.32 \\
\hline
\hline
\end{tabular}
}
\end{center}
\caption{Performance on the task of binary partitioning. (Ties in best performance were resolved by considering more decimal places)}\label{tbl:cluster}
\end{table}

\begin{figure}[!h]
\centering
\subfigure[Raw Performance Measure]{\includegraphics[scale=0.28]{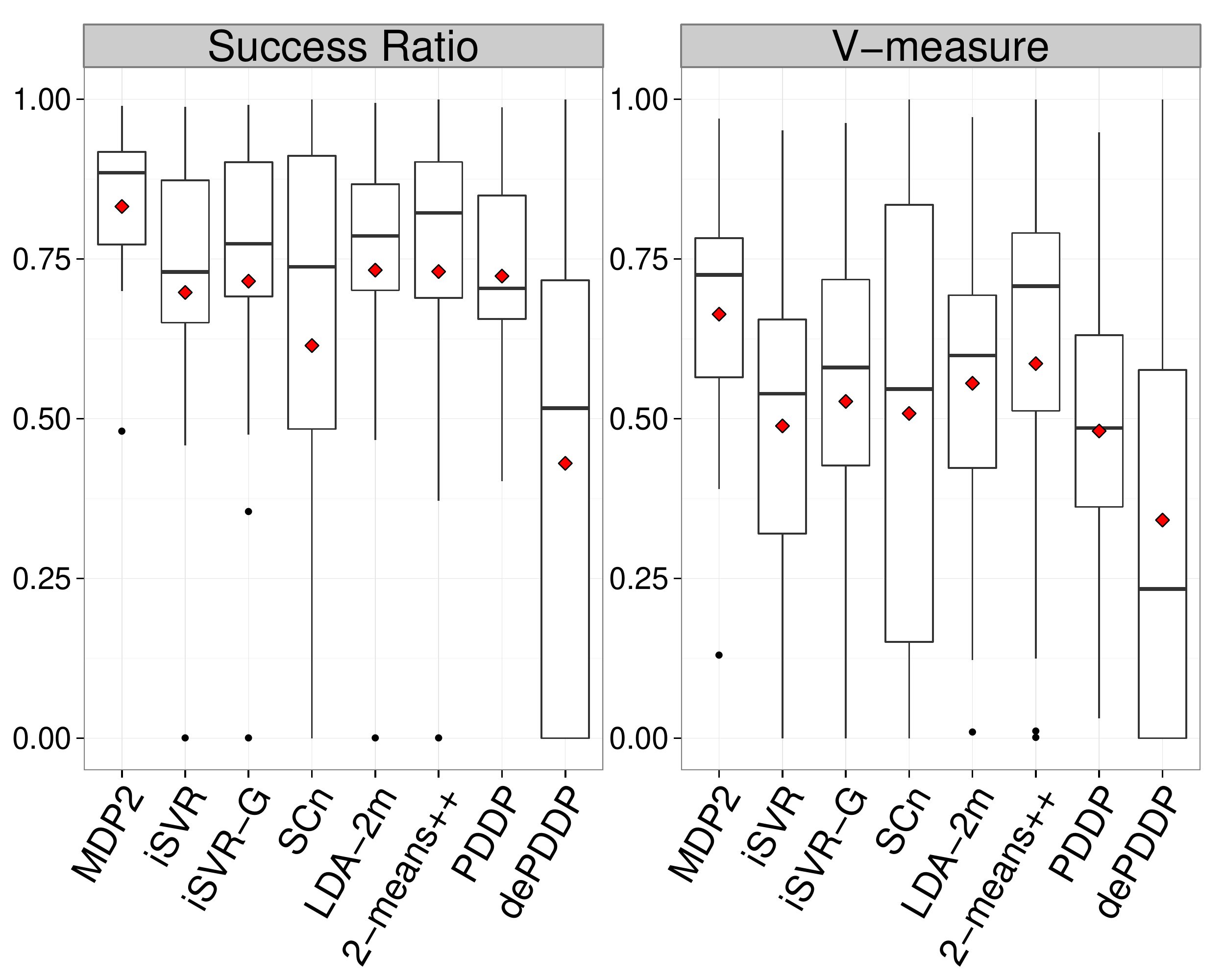}\label{fig:clMetr}} \hfill
\subfigure[Regret]{
\includegraphics[scale=0.28]{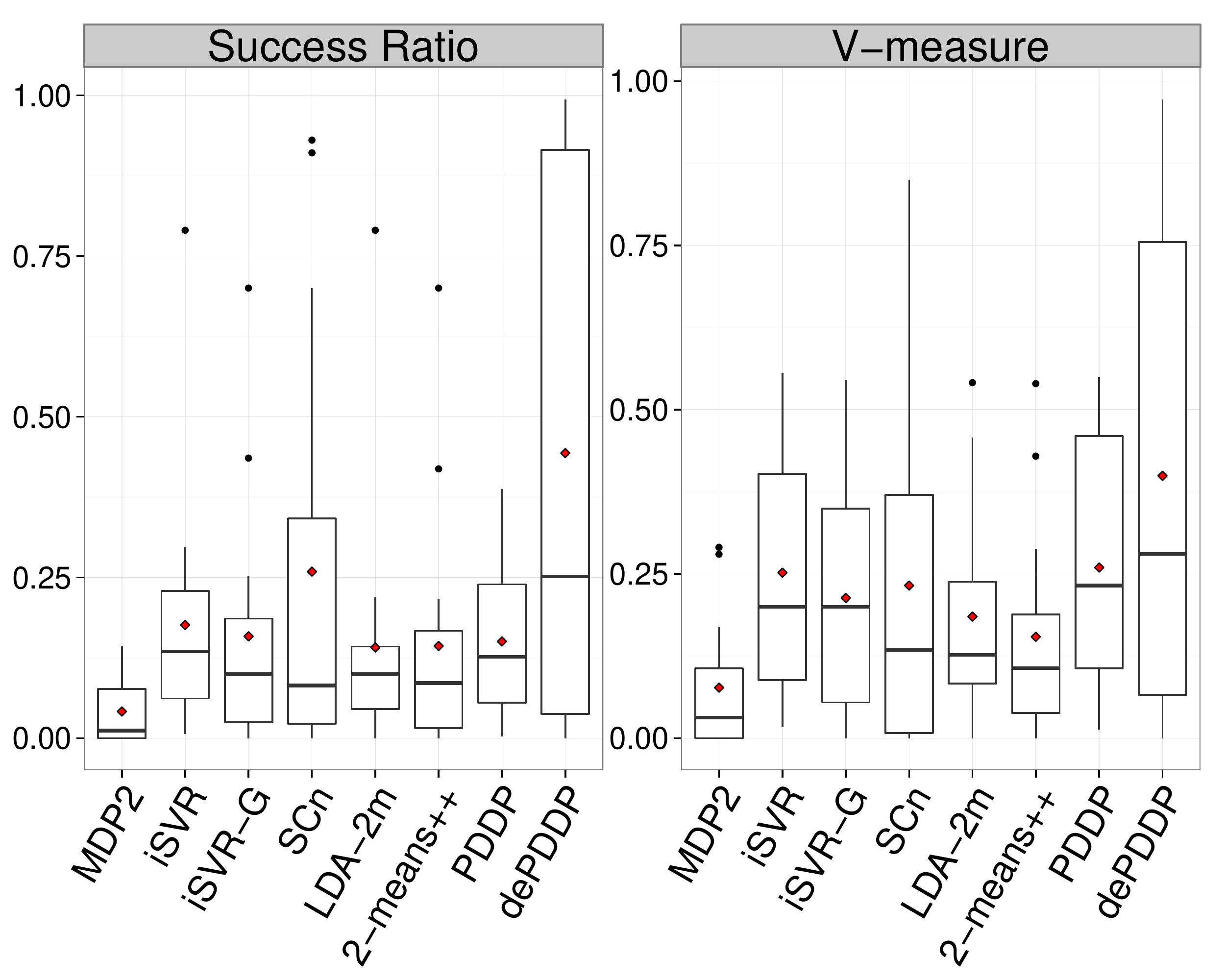}\label{fig:clReg}
}
\caption{Performance and Regret Distributions for all Methods Considered}
\label{fig:cluster}
\end{figure}

\subsection{Semi-Supervised Classification}\label{ssec:expSSL}

In this section we evaluate MDHs for semi-supervised classification. 
We compare MDHs against three state-of-the-art semi-supervised classification
methods: Laplacian Regularised Support Vector Machines
(LapSVM)~\citep{BelkinNS2006}, Simple Semi-Supervised Learning
(SSSL)~\citep{JiYLJH2012}, and Correlated Nystr{\" o}m Views
(XNV)~\citep{mcwilliams2013}. For all methods the inner product kernel was used
to render the resulting classifiers linear, and thereby comparable to our
method.
As the MDH is asymptotically equivalent to a linear \S3VM we also considered
the continuous formulation for the estimation of a \S3VM proposed
by~\cite{ChapelleZ2005}. These results are omitted as this method was not
competitive on any of the considered data sets.

\subsubsection{Parameter Settings for \MDP}

The existence of a few labelled examples enables an informed initialisation of
MDP$^2$. We consider the first and second principal components as well as the
weight vector of a linear SVM trained on the labelled examples only, and initialise
\MDP with the vector that minimises the value of the projection index,$~\phiSSL$.
The penalty parameter $\gamma$ is first set to~$0.1$ and with this setting $\alpha$
is progressively increased in the same way as for clustering. After this, $\alpha$
is kept at $\alpha_{\max}$ and $\gamma$ is increased to~1 and then~10.
Thus the emphasis is initially on finding
a low density hyperplane with respect to the marginal density $\hat{p}(\x)$.
As the algorithm progresses the emphasis on correctly classifying the labelled
examples increases, so as to obtain a hyperplane with low training error within
the region of low density already determined.

\subsubsection{Performance Evaluation}

To assess the effect on performance of the number of labelled examples, $\ell$,
we consider a range of values.
We compare the methods using the subset of data sets used in the previous section in
which the size of the smallest class exceeds 100. In total eight data sets are
used. 
For each value of $\ell$, 30 random partitions into labelled and unlabelled data
are considered.
As classes are balanced in the data sets considered, performance is measured
only in terms of classification error on the unlabelled data. 
For data sets with more than two classes all pairwise combinations of classes
are considered and aggregate performance is reported.

\begin{figure}
\vspace{-30pt}
\centering
\subfigure[voting]{\includegraphics[width = 7cm, height = 5cm]{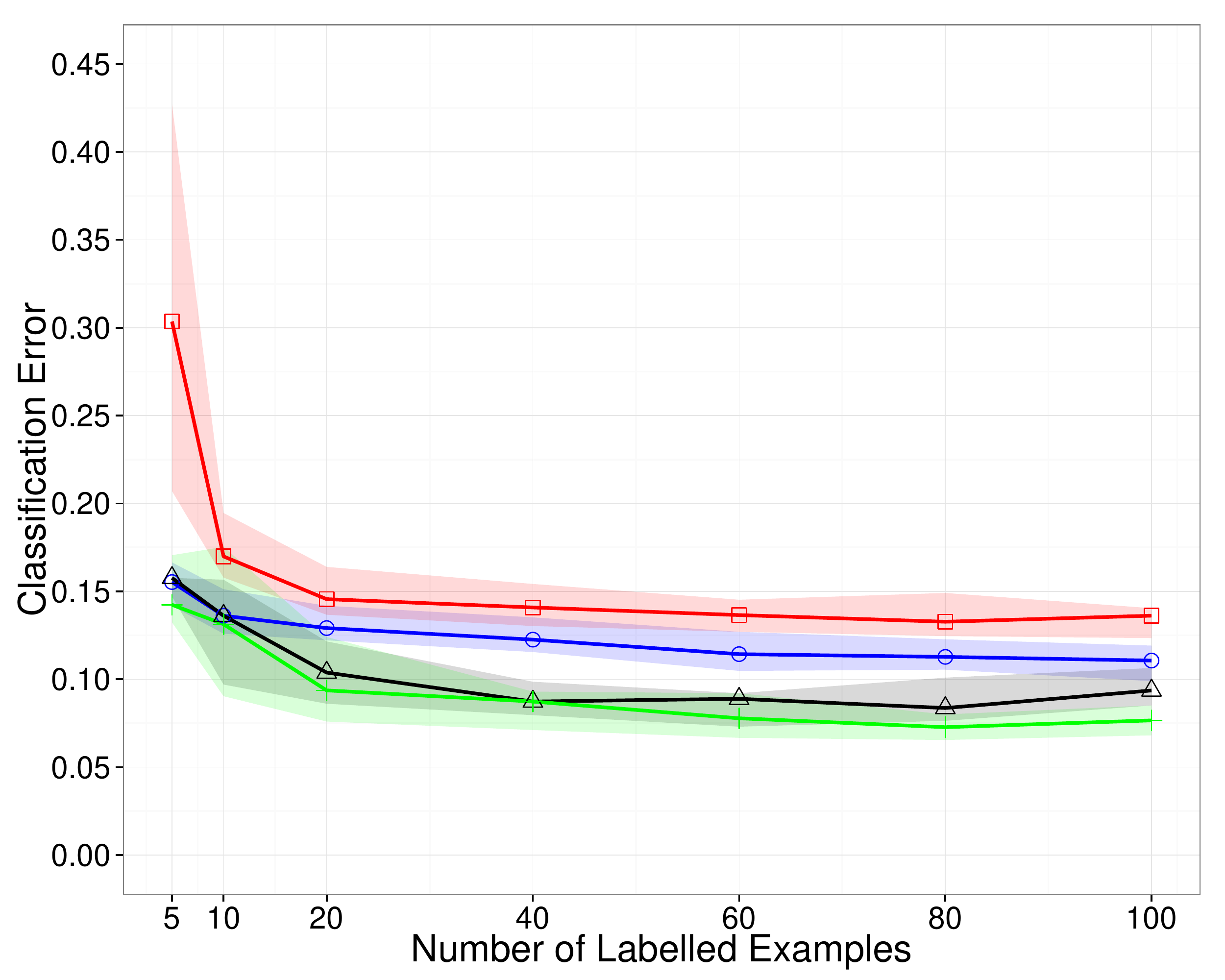}}
\subfigure[banknote]{\includegraphics[width = 7cm, height = 5cm]{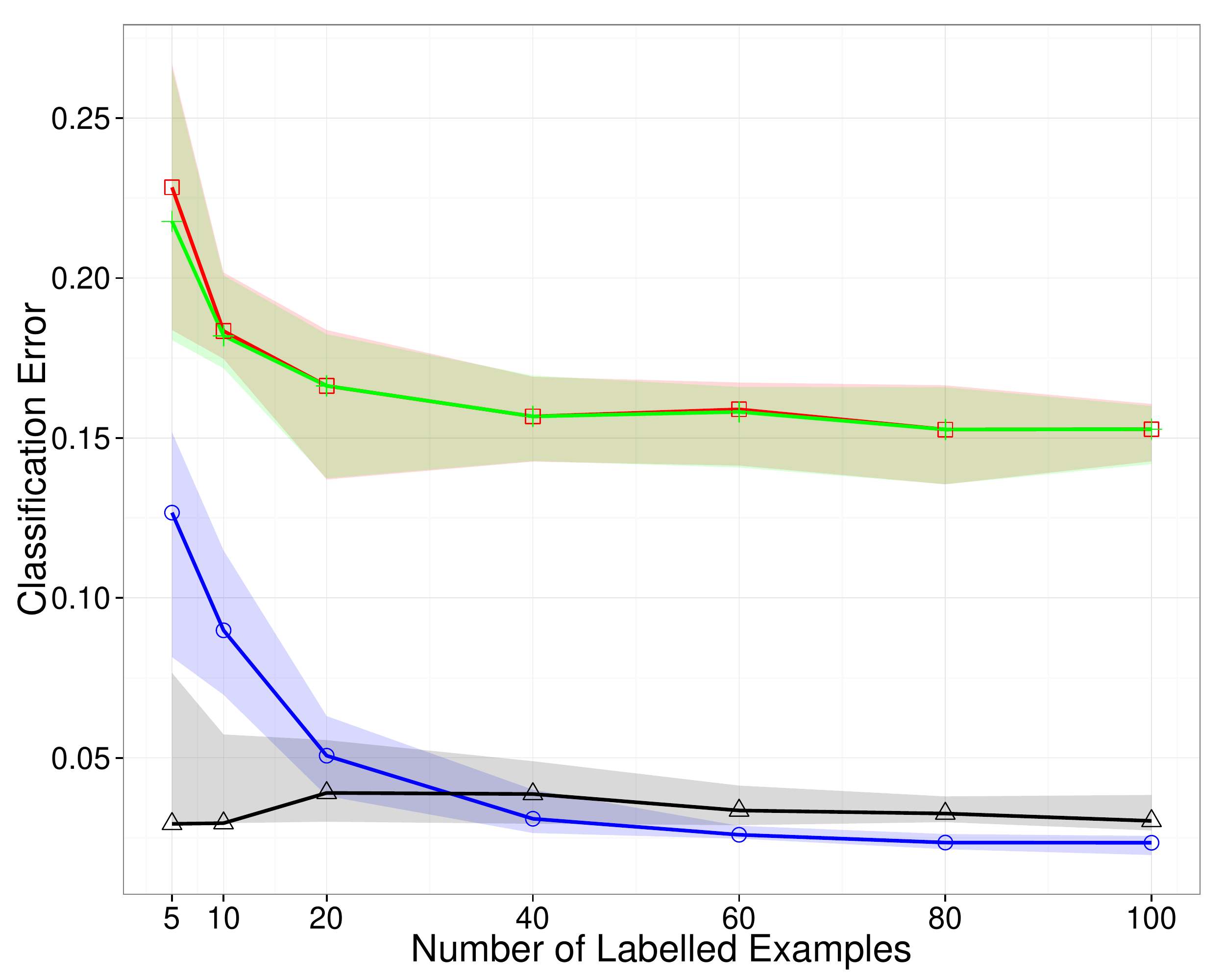}}
\subfigure[breast cancer]{\includegraphics[width = 7cm, height = 5cm]{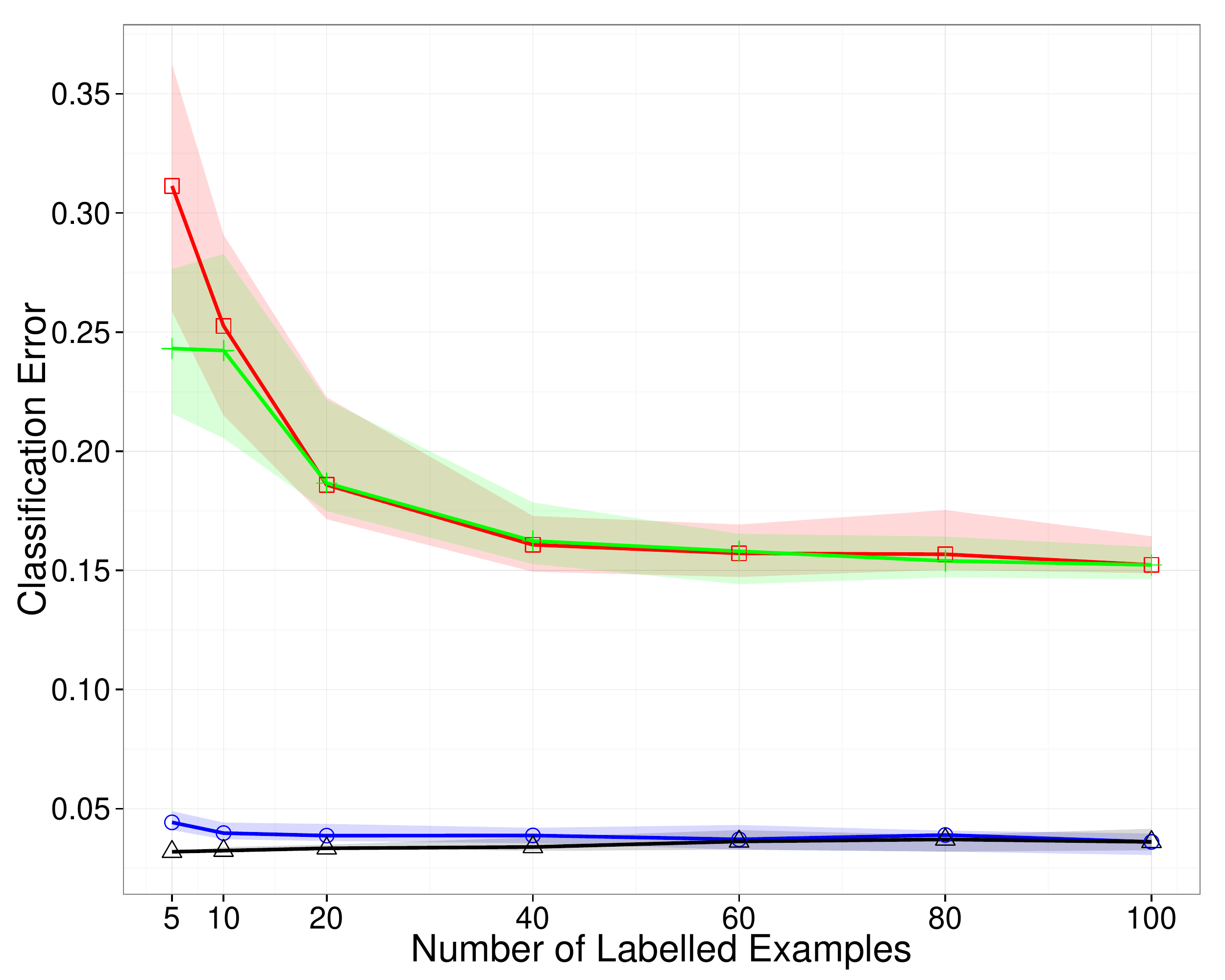}}
\subfigure[ionosphere]{\includegraphics[width = 7cm, height = 5cm]{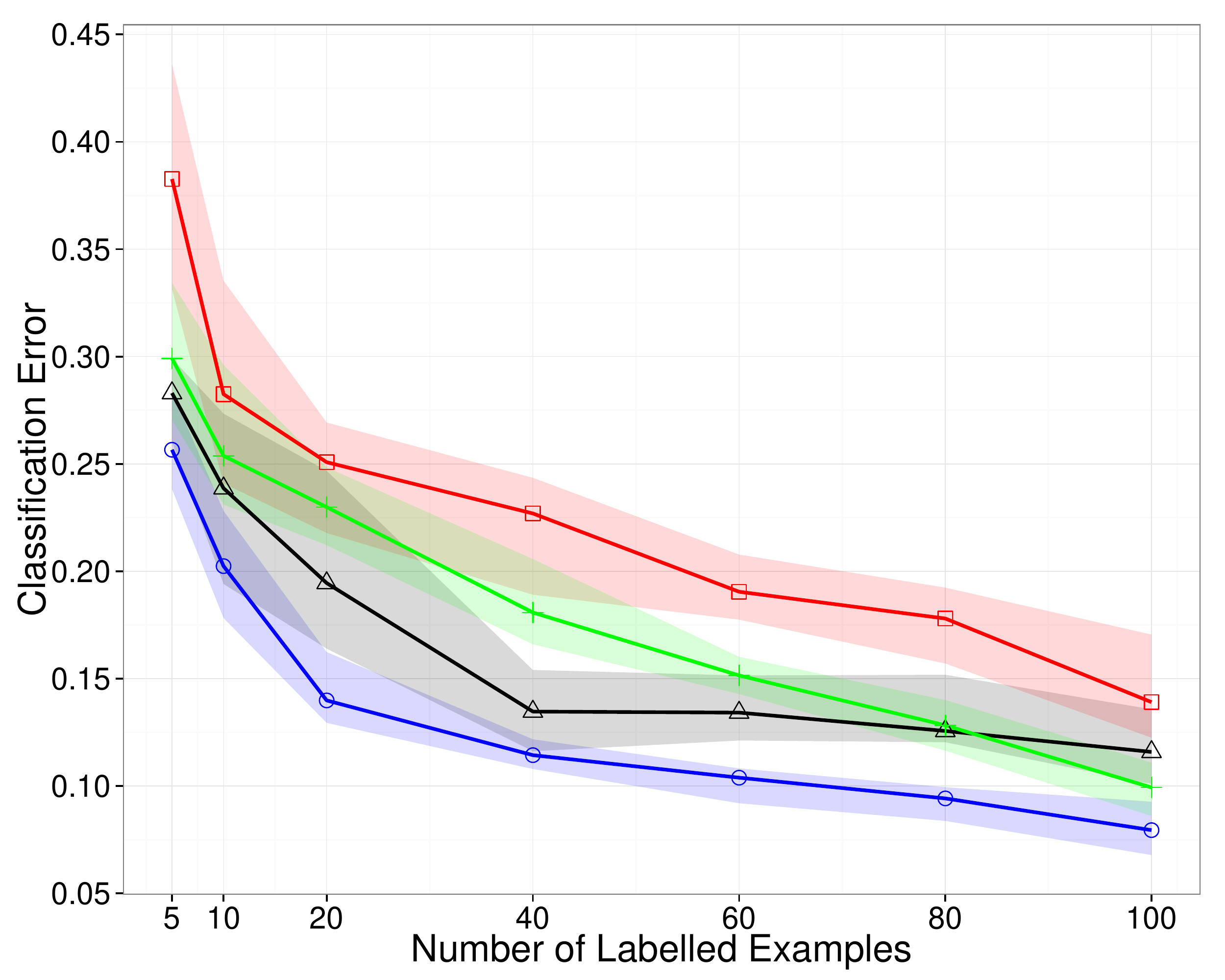}}
\MDP median (---{\tiny $\triangle$}---), LapSVM median (\blue{---$\circ$---}), SSSL median (\red{---{\tiny $\square$}---}),
XNV median~({\color{green}---{\small $+$}---}), with corresponding interquartile ranges given by shaded regions.
\caption{Classification error for different number of labelled examples for data sets with two clusters.}\label{ssl:regret1}
\vspace{-25pt}
\end{figure}

Figure~\ref{ssl:regret1} provides plots of the median and interquartile range
of the classification error for values of $\ell$ between 5 and 100 for the four
data sets with two classes.
Overall \MDP appears to be most competitive when the number of labelled
examples is small. 
In addition, \MDP is comparable with the best performing method in almost every
case. The only exception is the ionosphere data set where LapSVM outperforms
\MDP for all values of $\ell$.  Figure~\ref{ssl:regret2} provides plots of the
median and interquartile range of the aggregate classification error on
data sets containing more than two classes. 
As these data sets are larger we consider up to 300 labelled examples. 
Note that the interquartile range for XNV is not depicted for the
satellite data set.  The variability of performance of XNV on this data set was
so high that including the interquartile range would obscure all other
information in the figure.
\MDP exhibits the best performance overall, and obtains the lowest median
classification error, or tied lowest, for all data sets and values of $\ell$.

\begin{figure}
\vspace{-30pt}
\centering
\subfigure[pendigits]{\includegraphics[width = 7cm, height = 5cm]{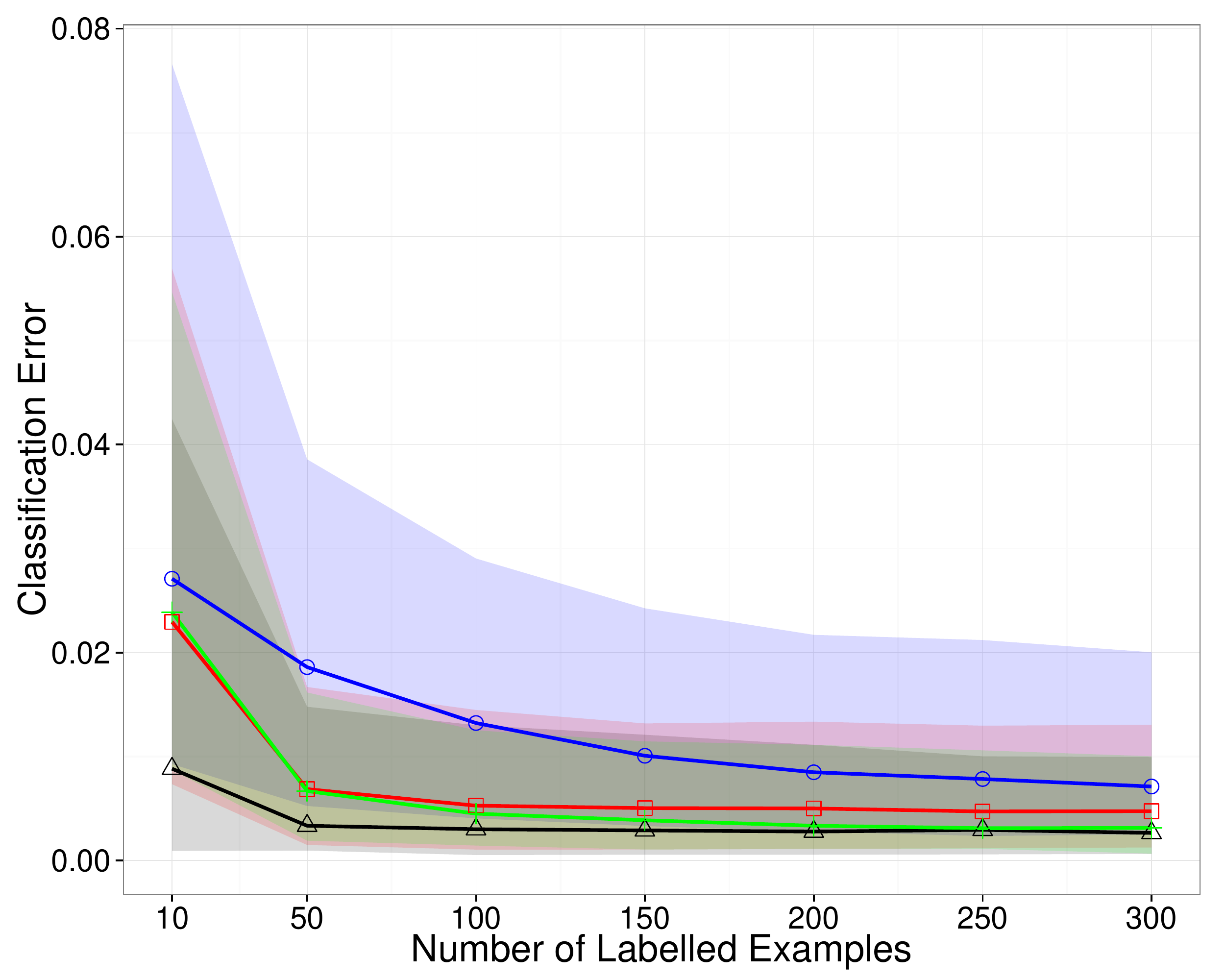}}
\subfigure[optidigits]{\includegraphics[width = 7cm, height = 5cm]{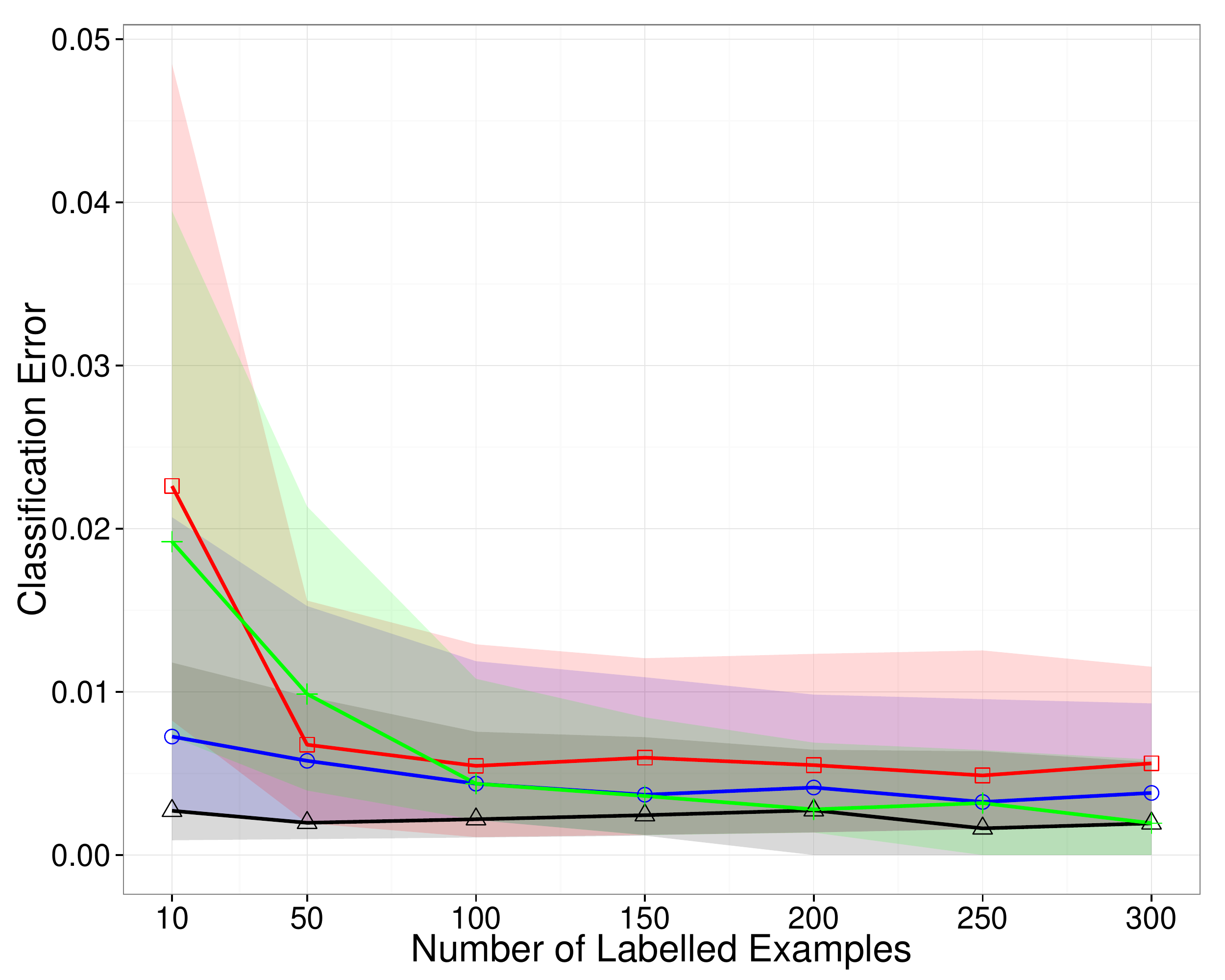}}
\subfigure[satellite]{\includegraphics[width = 7cm, height = 5cm]{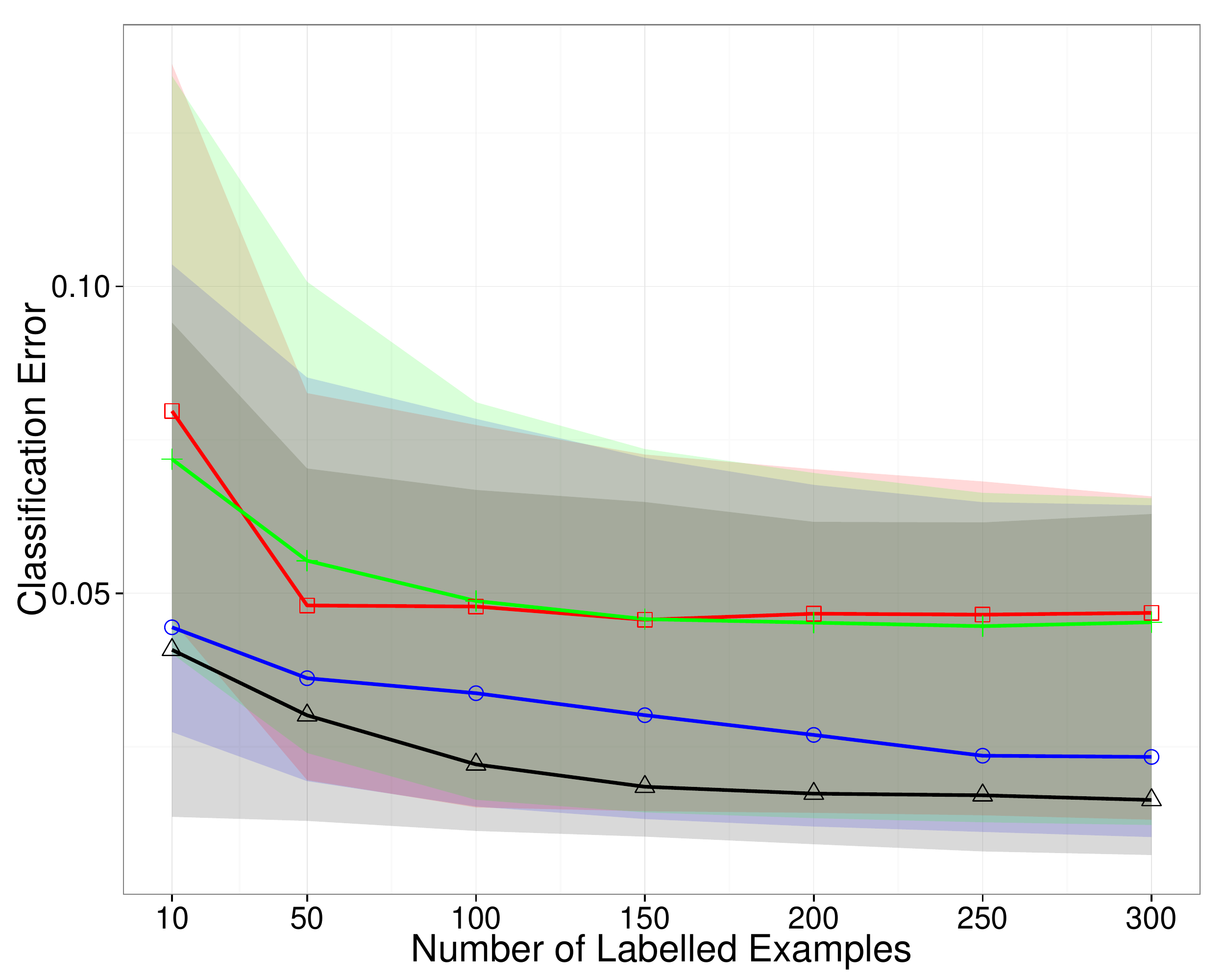}}
\subfigure[image segmentation]{\includegraphics[width = 7cm, height = 5cm]{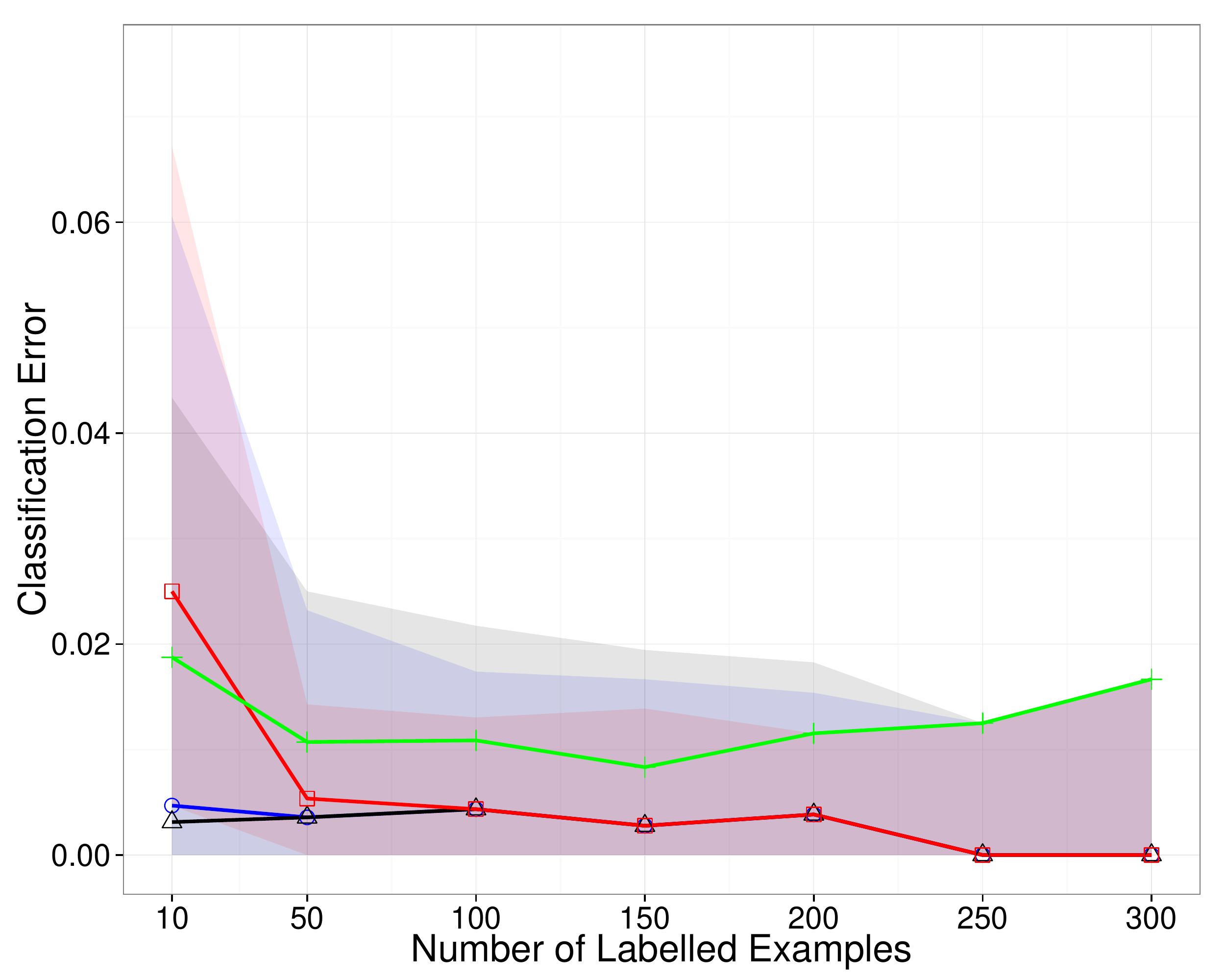}}
\MDP median (---{\tiny $\triangle$}---), LapSVM median (\blue{---$\circ$---}), SSSL median (\red{---{\tiny $\square$}---}), XNV median ({\color{green}---{\small $+$}---}), with corresponding interquartile ranges given by shaded regions.
\caption{Classification error for different numbers of labelled examples over all pairwise combinations of classes.}\label{ssl:regret2}
\vspace{-30pt}
\end{figure}


\subsection{Summary of Experimental Results}

We evaluated the performance of the \MDP formulation for finding
MDHs for both clustering and semi-supervised classification, on
a large collection of benchmark data sets, and in comparison with
state-of-the-art methods for both problems. 

For clustering, we found that no single method was consistently superior to all
others. This is a result of the vastly differing nature of the data sets in
terms of size, dimensionality, number and shape of clusters, etc. \MDP
achieved the best performance on more data sets than any of the competing
methods, and importantly was competitive with the best performing method in
almost every data set considered. All other methods performed poorly in at least
as many examples. Boxplots of both the raw performance and performance regret,
which measures the difference between each method and the best performing
method on each data set, allowed us to summarise the comparative performance of
the different methods across data sets. The mean and median raw performance of
\MDP is substantially higher than the next best performing method, and the
regret is also substantially lower.

In the case of semi-supervised classification it was apparent that \MDP is
extremely competitive when the number of labelled examples is (very) small, but
that in some cases its performance does not improve as much as that of the
other methods considered, when the labelled examples become more abundant.
Our experiments suggest that overall \MDP is very competitive with the
state-of-the-art for semi-supervised classification problems.

\section{Conclusions}\label{sec:concl}

We proposed a new hyperplane classifier for clustering and semi-supervised
classification. The proposed approach is motivated by determining low density
linear separators of the high-density clusters within a data set.
This is achieved by minimising the integral of the empirical density along the
hyperplane,
which is computed through kernel density estimation. 
To the best of our knowledge this is the first direct implementation of the low
density separation assumption that underlies high-density clustering and
numerous influential semi-supervised classification methods.
We show that the minimum density hyperplane is asymptotically
connected with maximum margin hyperplane, thereby establishing
an important link between the proposed approach, maximum margin clustering, and
semi-supervised support vector machines. 

The proposed formulation allows us to evaluate the integral of the density on a
hyperplane by projecting the data onto the vector normal to the hyperplane, and
estimating a univariate kernel density estimator.
This enables us to apply our method effectively and efficiently on data sets of
much higher dimensionality than is generally possible for density based
clustering methods.
To mitigate the problem of convergence to locally optimal solutions we proposed
a projection pursuit formulation.

We evaluated the minimum density hyperplane approach on a large collection of
benchmark data sets. The experimental results obtained indicate that the method
is competitive with state-of-the-art methods for clustering and semi-supervised
classification.
Importantly the performance of the proposed approach displays low variability
across a variety of data sets, and is robust to differences in data size,
dimensionality, and number of clusters. In the context of semi-supervised
classification, the proposed approach shows especially good performance when
the number of labelled data is small.

\acks{
We would like to thank the reviewers for their insightful comments which
substantially improved this paper.
We also thank Prof. David Leslie, and Dr. Teemu Roos for valuable comments and
suggestions on this work.
Nicos Pavlidis would like to thank the Isaac Newton Institute for Mathematical
Sciences, Cambridge, for support and hospitality during the programme `Inference
for Change-Point and Related Processes', where part of the work on this paper was undertaken.
David Hofmeyr gratefully acknowledges the support of the EPSRC funded
EP/H023151/1 STOR-i centre for doctoral training, as well as the Oppenheimer
Memorial Trust.
The underlying code and data are openly available from Lancaster
University data repository at \url{http://dx.doi.org/10.17635/lancaster/researchdata/97}.
}

\appendix


\section{Proof of Theorem~\ref{thm:convergence}}\label{app:thm:convergence}

Before proving Theorem~\ref{thm:convergence} we require the following two
technical lemmata which establish some algebraic properties of the maximum
margin hyperplane.  The following lemma shows that any hyperplane orthogonal to
the maximum margin hyperplane results in a different partition of the support
points of the maximum margin hyperplane. 
The proof relies on the fact that if this statement does not hold then a
hyperplane with larger margin exists which is a contradiction.
Figure~\ref{fig:MMHorthog} provides an illustration of why this result holds.
(a)~Any hyperplane orthogonal to MMH generates a different partition of the
support points of MMH, e.g., the point highlighted in red in (b)~is grouped
with the lower three by the dotted line but with the upper two by the solid
line, the MMH. If an orthogonal hyperplane \textit{can} generate the same
partition~(c), then a larger margin hyperplane than the proposed MMH exists~(d).

\begin{figure}
\centering
\subfigure[]{\includegraphics[width = 3cm]{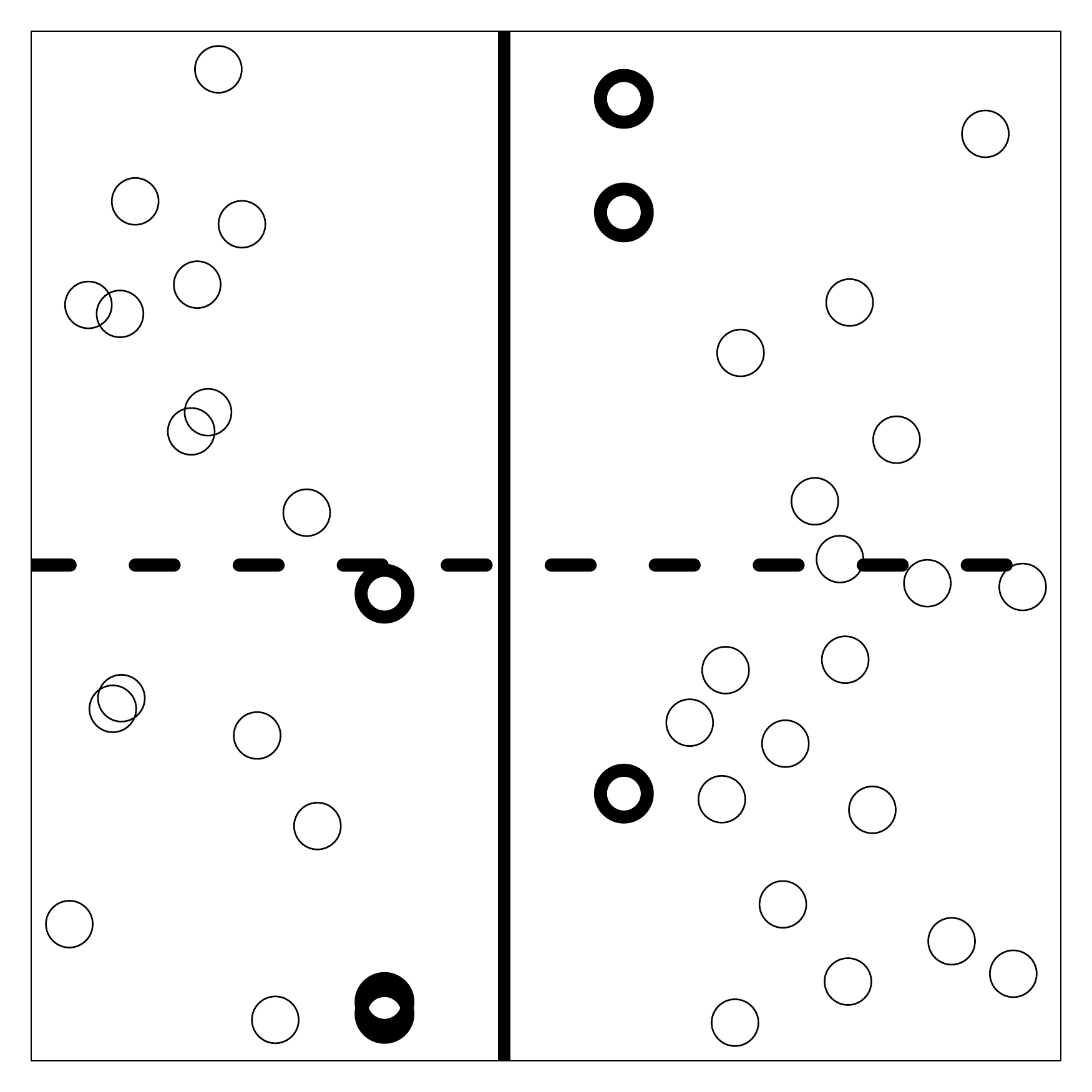}}
\subfigure[]{\includegraphics[width = 3cm]{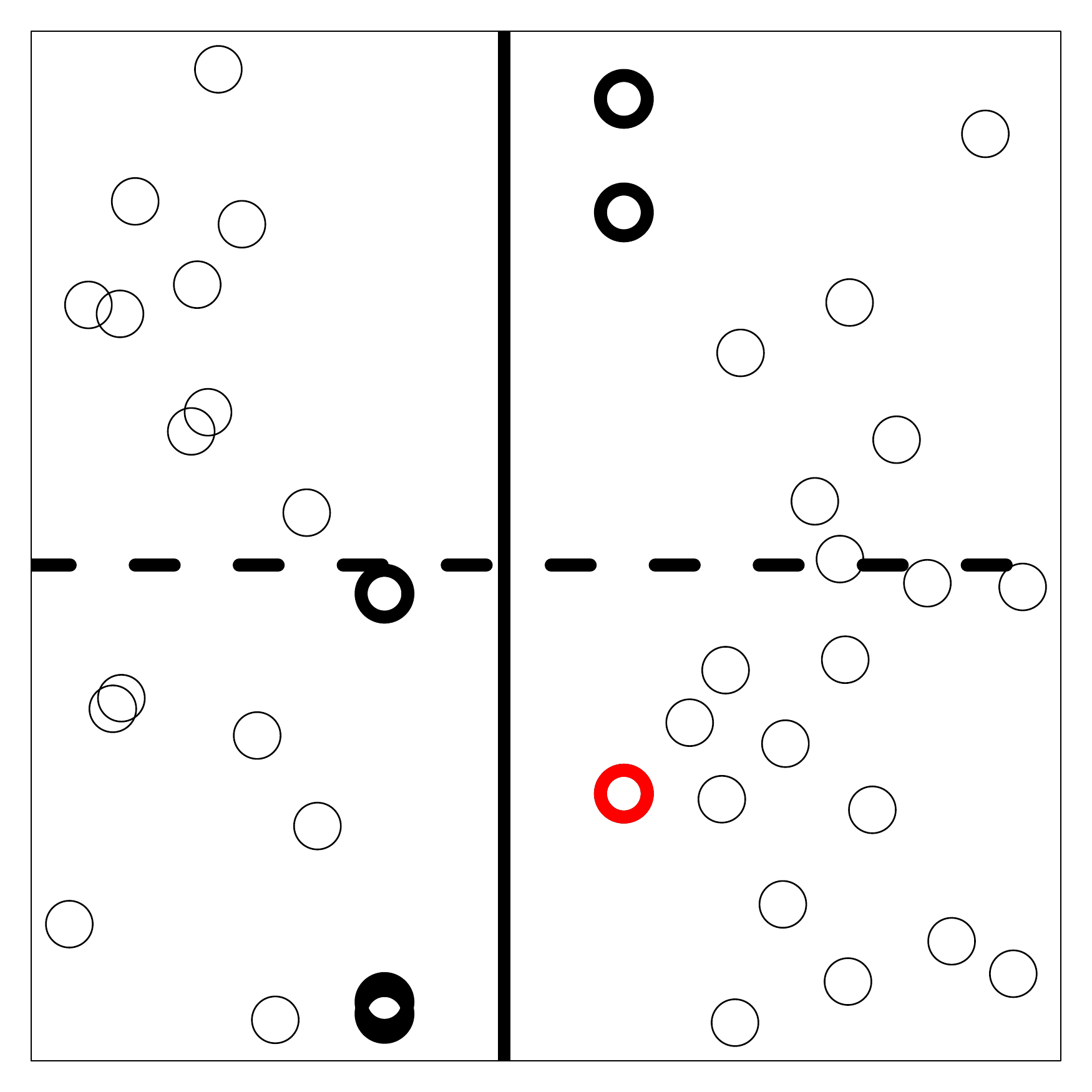}}\\
\subfigure[]{\includegraphics[width = 3cm]{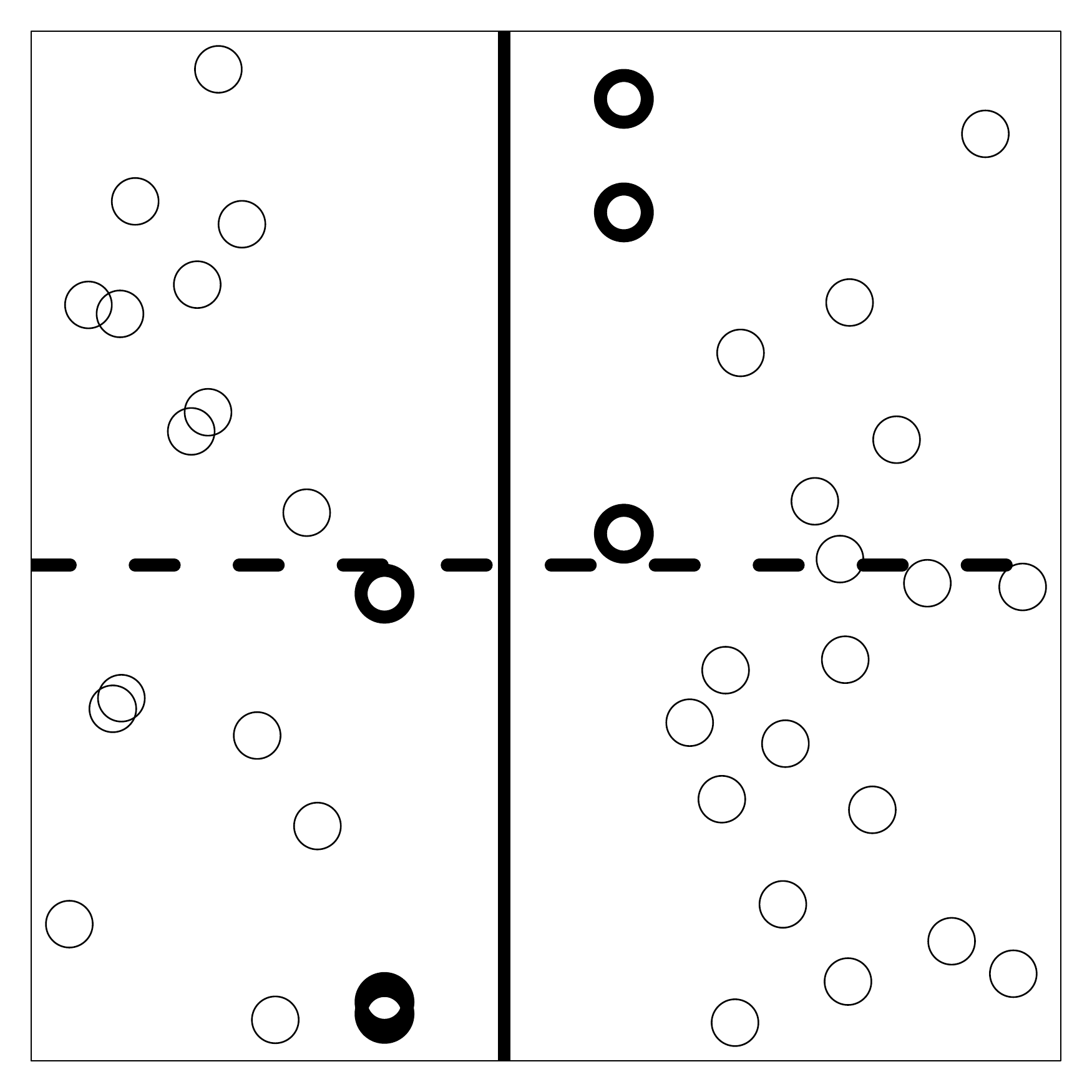}}
\subfigure[]{\includegraphics[width = 3cm]{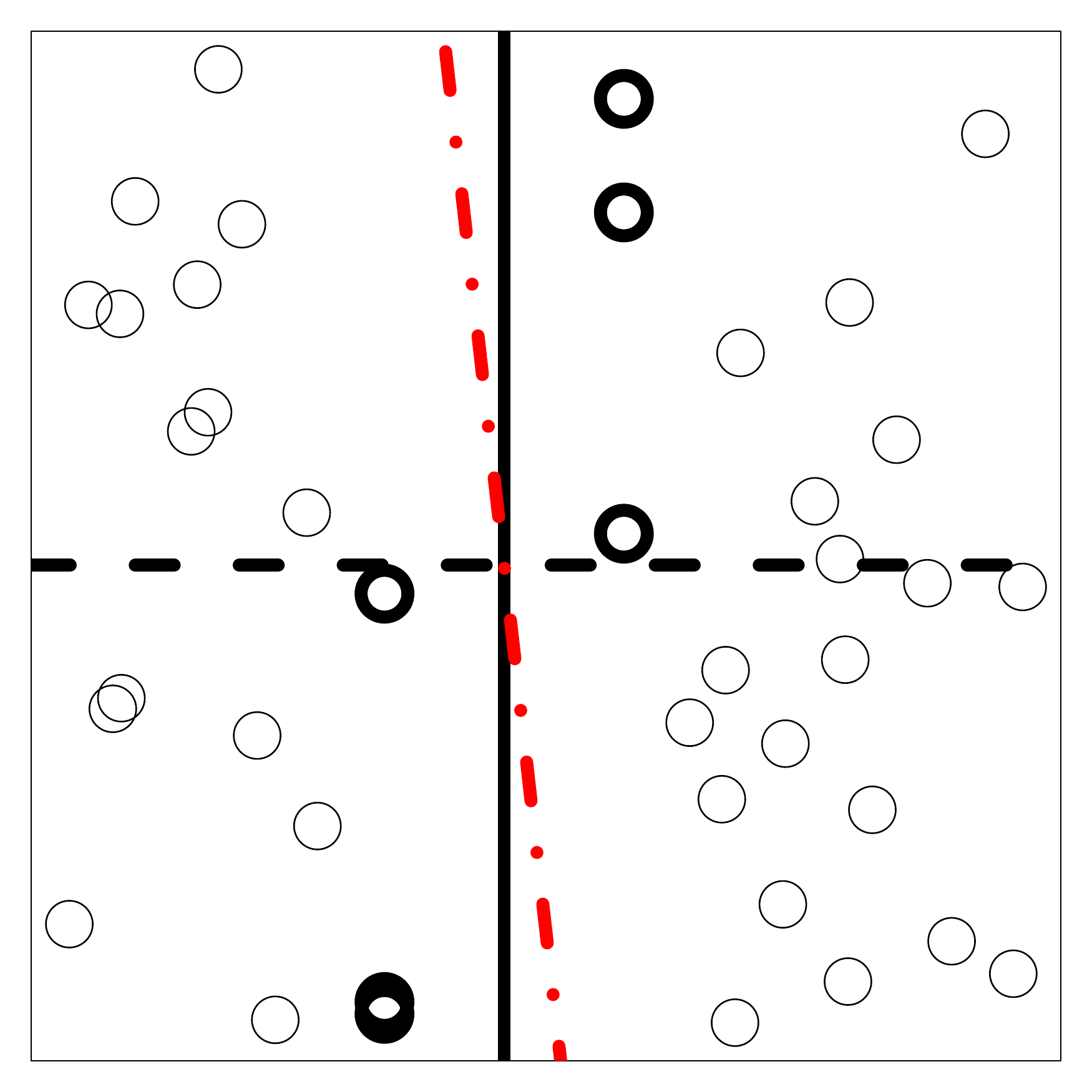}}\\
Proposed
MMH {\bf -----}, Orthogonal hyperplane {\bf - - -}, Hyperplane with larger \\ margin \red{\bf - $\cdot$ - $\cdot$ -}, Regular points $\bigcirc$, Support points $\pmb{\bigcirc}$,\\ Differently assigned support point \red{$\pmb{\bigcirc}$}
\caption{Two dimensional illustration of Lemma~\ref{lem:MMHorthog}}
\label{fig:MMHorthog}
\end{figure}

\begin{lemma}\label{lem:MMHorthog}

Suppose there is a unique hyperplane in $F$ with maximum margin, which can be
parameterised by $(\v^m, b^m) \in \B \times \R$. Let $M = \margin\, H(\v^m, b^m)$,
$C^+ = \{\x \in \X \,\vert \, \v^m \cdot \x - b^m = M\}$ and $C^- = \{\x \in \X \,
\vert \, b^m - \v^m \cdot \x = M\}$.
Then, $\forall \w \in \mathrm{Null}(\v^m)$, $c \in \R$ either $\min\{\w\cdot \x - c
\, \vert \, \x \in C^+\}\leqslant 0$, or $\max\{\w\cdot \x - c \, \vert \, \x \in
C^-\}\geqslant 0$.

\end{lemma}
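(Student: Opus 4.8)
The plan is to argue by contradiction: from a hypothetical pair $(\w,c)$ violating the conclusion I would construct a feasible hyperplane with strictly larger margin than $H(\v^m,b^m)$, contradicting its unique maximality. Negating the conclusion, suppose there is $\w\in\mathrm{Null}(\v^m)$ and $c\in\R$ with $\w\cdot\x>c$ for every $\x\in C^+$ and $\w\cdot\x<c$ for every $\x\in C^-$. The geometric idea, illustrated by Figure~\ref{fig:MMHorthog}(c)--(d), is that tilting the normal of the MMH towards $\w$ pushes \emph{both} families of support points further away, increasing the margin.

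Concretely, for $t\geq0$ I would take the hyperplane $H_t$ with unnormalised normal $\v^m+t\w$ and unnormalised offset $b^m+tc$; after normalising this is $H\!\left(\frac{\v^m+t\w}{\|\v^m+t\w\|},\ \frac{b^m+tc}{\|\v^m+t\w\|}\right)$, and since $\w\perp\v^m$ with $\|\v^m\|=1$ we have $\|\v^m+t\w\|=\sqrt{1+t^2\|\w\|^2}$. Writing $g_i=\v^m\cdot\x_i-b^m$ and $h_i=\w\cdot\x_i-c$, the Euclidean distance from $\x_i$ to $H_t$ equals $|g_i+t h_i|/\sqrt{1+t^2\|\w\|^2}$.

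The key estimates are the following. For $\x_i\in C^+$ we have $g_i=M$ and $h_i>0$, so $g_i+th_i=M+t h_i\geq M+t\varepsilon$, where $\varepsilon>0$ is the smaller of $\min\{h_i:\x_i\in C^+\}$ and $\min\{-h_i:\x_i\in C^-\}$; symmetrically, for $\x_i\in C^-$ we have $g_i=-M$, $h_i<0$, so $g_i+th_i\leq-(M+t\varepsilon)$. Hence every support point lies at distance at least $(M+t\varepsilon)/\sqrt{1+t^2\|\w\|^2}$ from $H_t$, and this expression equals $M$ at $t=0$ with derivative $\varepsilon>0$ there, so it strictly exceeds $M$ for all small $t>0$. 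For every non-support point $\x_i$ the distance $|g_i|$ to $H_0$ is \emph{strictly} larger than $M$, so by continuity its distance to $H_t$ stays above $M$ for small $t$; as $\X$ is finite, a single $t^*>0$ makes all of these hold at once, giving $\margin\, H_{t^*}>M$.

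To close, note that $H_t\to H(\v^m,b^m)\in F$ as $t\to0^+$, so for $t^*$ small enough $H_{t^*}\in F$ as well, and then $\margin\, H_{t^*}>M=\margin\, H(\v^m,b^m)$ contradicts the uniqueness/maximality of the MMH in $F$; hence no violating $(\w,c)$ exists. I expect the main obstacle to lie in the elementary but delicate bookkeeping of two points: (i)~verifying that the first-order gain $t\varepsilon$ in the numerator really dominates the second-order $O(t^2)$ loss from renormalising $\v^m+t\w$ --- the computation above shows it does, but it is where a careless argument could break --- and (ii)~confirming $H_{t^*}$ remains feasible, which is immediate when $H(\v^m,b^m)$ lies in the interior of $F$ and otherwise is covered by the strict-feasibility assumption used later (cf.\ Lemma~\ref{lem:ULConv}). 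A slicker alternative avoids perturbation entirely: if $p^\star\in\conv C^+$ and $q^\star\in\conv C^-$ are the nearest pair of points between these two hulls, then $\v^m$ is a positive multiple of $p^\star-q^\star$, whereas averaging the hypothesised strict inequalities gives $\w\cdot p^\star>c>\w\cdot q^\star$, hence $\w\cdot(p^\star-q^\star)>0$ and so $\w\cdot\v^m>0$, contradicting $\w\perp\v^m$; this route, however, needs the nearest-point characterisation of the (constrained) maximum-margin hyperplane, which is most cleanly established by essentially the same perturbation.
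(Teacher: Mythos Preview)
Your argument is correct and is essentially the paper's own proof: both assume a violating $(\w,c)$ exists and tilt the normal of the MMH toward $\w$ to obtain a hyperplane with margin strictly exceeding $M$ on the support points, contradicting maximality. The only difference is in execution---the paper fixes the specific convex combination $\u=\frac{\lambda\w+(1-\lambda)\v^m}{\sqrt{\lambda^2+(1-\lambda)^2}}$, $d=\frac{\lambda c+(1-\lambda)b^m}{\sqrt{\lambda^2+(1-\lambda)^2}}$ with $\lambda=m/(2M)$ (where $m=\min\{|\w\cdot\x-c|:\x\in C^+\cup C^-\}$) and checks $|\u\cdot\x-d|>M$ on $C^+\cup C^-$ by direct algebra, whereas you use the infinitesimal tilt $\v^m+t\w$ and a derivative argument at $t=0$; your handling of non-support points and of feasibility in $F$ is actually more explicit than the paper's, which stops at the support points and leaves membership in $F$ unaddressed.
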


\begin{proof}

Suppose the result does not hold, then $\exists (\w, c)$ with $\|\w\| = 1, \w
\cdot \v^m = 0$ and $\min\{\w \cdot \x - c \vert \x \in C^+\} > 0$ and $\max\{\w \cdot
\x - c \vert \x \in C^-\} < 0$.
Let $m = \min\{ |\w\cdot \x - c | \,\big \vert\, \x \in C^+ \cup C^-\}$.  Define $\lambda = \frac{m}{2M} < 1$.
Define $\u = \frac{1}{\sqrt{\lambda^2 + (1-\lambda)^2}}(\lambda \w + (1-\lambda)
\v^m)$ and $d = \frac{\lambda c + (1-\lambda) b^m}{\sqrt{\lambda^2 +
(1-\lambda)^2}}$. By construction $\|\u\| = 1$. For any $\x_+ \in C^+$ we have,
\begin{align*}
\u \cdot \x_+ - d &= \frac{\lambda(\w \cdot \x_+ - c) + (1-\lambda)(\v^m \cdot \x_+ -
b^m)}{\sqrt{\lambda^2 + (1-\lambda)^2}}\\
&\geqslant \frac{\lambda m + (1-\lambda)M}{\sqrt{\lambda^2 + (1-\lambda)^2}}\\
&= \frac{m^2 + 2M^2 - Mm}{\sqrt{m^2 + (2M-m)^2}}\\
&>M.
\end{align*}
Similarly one can show that $d - \u \cdot \x_- > M$ for any $\x_- \in C^-$,
meaning that $(\u,d)$ achieves a larger margin on $C^+$ and $C^-$ than $(\v^m, b^m)$, a
contradiction.

\end{proof}

\noindent
The next lemma uses the above result to provide an upper bound on the
distance between pairs of support points projected onto any vector, in terms of
the angle between that vector and $\v^m$.

\begin{lemma} \label{lm:2Mv}

Suppose there is a unique hyperplane in $F$ with maximum margin, which can
be parameterised by $(\v^m, b^m) \in \B \times \R$. Define $M =
\margin\, H(\v^m,b^m)$, $C^+ = \{\x \in \X \vert \v^m \cdot \x - b^m = M\}$, and $C^- = \{\x \in
\X \vert  b^m-\v^m\cdot \x = M\}$.
There is no vector $\w \in \R^d$ for which $\w\cdot \x_+ - \w\cdot \x_- > 2M \v^m \cdot \w$ for
all pairs $\x_+ \in C^+, \x_- \in C^-$. 

\end{lemma}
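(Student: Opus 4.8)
The plan is to reduce Lemma~\ref{lm:2Mv} to Lemma~\ref{lem:MMHorthog}, exploiting the fact that the support points have fixed projections onto $\v^m$. Since every $\x_+ \in C^+$ satisfies $\v^m\cdot\x_+ = b^m + M$ and every $\x_- \in C^-$ satisfies $\v^m\cdot\x_- = b^m - M$, we have $\v^m\cdot(\x_+-\x_-) = 2M$ for every admissible pair. Thus $2M\,\v^m\cdot\w$, the right-hand side in the statement, is exactly $\w$ applied to the $\v^m$-parallel part of $\x_+-\x_-$, and the whole inequality should collapse to a statement about the orthogonal part only.

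First I would argue by contradiction: suppose some $\w\in\R^d$ satisfies $\w\cdot\x_+ - \w\cdot\x_- > 2M\,\v^m\cdot\w$ for all $\x_+\in C^+$, $\x_-\in C^-$. Decompose $\w = (\v^m\cdot\w)\v^m + \u$ with $\u\perp\v^m$, so $\u\in\mathrm{Null}(\v^m)$. Substituting and using $\v^m\cdot(\x_+-\x_-)=2M$, the left-hand side becomes $2M\,\v^m\cdot\w + \u\cdot(\x_+-\x_-)$, and the assumed inequality reduces to $\u\cdot\x_+ > \u\cdot\x_-$ for all such pairs, i.e. $\min_{\x_+\in C^+}\u\cdot\x_+ > \max_{\x_-\in C^-}\u\cdot\x_-$. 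In particular $\u\neq 0$, so rescaling to the unit vector $\w' = \u/\|\u\| \in \mathrm{Null}(\v^m)$ preserves the same strict inequality between the two values.

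Then I would choose any constant $c$ with $\max_{\x_-\in C^-}\w'\cdot\x_- < c < \min_{\x_+\in C^+}\w'\cdot\x_+$, which exists because $C^+,C^-$ are finite (inherited from $\X$ finite) and the gap is strictly positive. This $c$ gives simultaneously $\min\{\w'\cdot\x - c \mid \x\in C^+\} > 0$ and $\max\{\w'\cdot\x - c \mid \x\in C^-\} < 0$, directly contradicting Lemma~\ref{lem:MMHorthog} applied to $(\w',c)$. Hence no such $\w$ exists.

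There is no deep obstacle here; the one point to get right is the exact cancellation — recognising that the $2M\,\v^m\cdot\w$ term on the right precisely removes the $\v^m$-parallel contribution on the left, so that the claim is equivalent to a statement purely about the component of $\w$ in $\mathrm{Null}(\v^m)$. Once that is seen, the conclusion is immediate from the previous lemma.
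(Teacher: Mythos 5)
Your proof is correct and takes essentially the same route as the paper's: there one sets $\w' = \w - (\v^m\cdot\w)\v^m \in \mathrm{Null}(\v^m)$, uses $\v^m\cdot(\x_+-\x_-)=2M$ to get $\w'\cdot\x_+ > \w'\cdot\x_-$ for all pairs, and chooses $c$ in the resulting gap (the midpoint) to contradict Lemma~\ref{lem:MMHorthog}. Your extra normalisation of the orthogonal component is an immaterial cosmetic difference.
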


\begin{proof}

Suppose such a vector exists. Define $\w' = \w - (\v^m\cdot \w)\v^m$. By construction $\w' \in
\mathrm{Null}(\v^m)$. For any pair $\x_+ \in C^+, \x_- \in C^-$ we have
\begin{align*}
\w' \cdot x_+ - \w' \cdot x_- &= \w \cdot x_+ - (\v^m \cdot \w) \v^m \cdot x_+ - \w
	\cdot x_- +  (\v^m \cdot \w) \v^m \cdot x_-\\
&> \v^m\cdot \w(2M - \v^m\cdot x_+ + b^m - b^m + \v^m \cdot x_-)\\
&= 0.  \end{align*}
Define $c: = \frac{1}{2}(\min \{ \w' \cdot \x_ + \big \vert \x_+ \in C^+\} + \max
\{ \w' \cdot \x_- \big \vert \x_- \in C^-\})$. Then $\min\{\w'\cdot \x_+ - c \vert
\x_+ \in C^+\} > 0$ and $\max\{\w' \cdot \x_- - c \vert \x_- \in C^-\} < 0$, a
contradiction.

\end{proof}

We are now in a position to provide the main proof of this appendix. The theorem states that if the maximum
margin hyperplane is unique, and can be parameterised by $(\v^m, b^m) \in \B\times \R$, then
\[
\lim_{h \to 0^+} \min \left\{\|(\v^\star_h, b^\star_h) - (\v^m, b^m)\|, \|(\v^\star_h, b^\star_h) + (\v^m, b^m)\|\right\} = 0,
\]
where $\left\{H(v^\star_h, b^\star_h)\right\}_h$ is any collection of minimum density hyperplanes indexed by their bandwidth $h > 0$.

\vspace{0.5cm}
\begin{proof}{{\bf of Theorem~\ref{thm:convergence}}}

Define $M = \margin\, H(\v^m, b^m)$, $C^+ = \{\x \in \X \,\vert \, \v^m \cdot \x - b^m = M\}$
and $C^- = \{\x \in \X \, \vert \, b^m - \v^m \cdot \x = M\}$. Let $B = \max\{\| \x\| \, \big \vert \, \x \in \X\}$. Take any $\epsilon > 0$ and set $0<\delta$ to satisfy $\frac{2 \delta}{M}(1+B^2) + 2B\delta^{3/2}\sqrt{\frac{2}{M}} + \delta^2 = \epsilon^2$. Now, suppose $(\w, c) \in \B \times \R$ satisfies,
\begin{equation*}
\w \cdot \x_+ - c > M- \delta, \ \forall \x_+ \in C^+ \mbox{ and } c - \w \cdot \x_- > M - \delta, \ \forall \x_- \in C^-. 
\end{equation*}
By Lemma \ref{lm:2Mv} we know that $\exists \x_+ \in C^+, \x_- \in C^-$ s.t. $\w \cdot \x_+ -  \w\cdot \x_- \leqslant 2M \v^m \cdot \w$. Thus
\begin{align*}
\v^m \cdot \w &\geqslant \frac{\w\cdot \x_+ - \w \cdot \x_-}{2M}\\
&= \frac{\w\cdot \x_+ -c + c- \w \cdot \x_-}{2M}\\
&> \frac{2M - 2 \delta}{2M} = 1 - \frac{\delta}{M}.
\end{align*}
Thus $\|\v^m - \w \|^2 < \frac{2\delta}{M}$. Now, for each $\x_+ \in C^+, \v^m \cdot \x_+ - b = M$ and for each $\x_- \in C^-, b - \v^m \cdot \x_- = M$. Thus for any such $\x_+, \x_-$ we have,
\begin{eqnarray*}
&M - \delta + \w \cdot \x_- < c < \w \cdot \x_+ - M + \delta,\\
& b^m - \v^m \cdot \x_- - \delta + \w \cdot \x_- < c < \w \cdot \x_+ - \v^m \cdot \x_+ + b^m + \delta,\\
& b^m-\delta - (\v^m - \w)\cdot \x_- < c < b^m + \delta + (\w - \v^m)\cdot \x_+,\\
& b^m-\delta - B\|\v^m - \w\| < c < b^m+\delta + B\|\w-\v^m\|, \\
& | c - b^m | < \left| \delta + B\|\w-\v^m\| \right|.
\end{eqnarray*}
We can now bound the distance between $(\w,c)$ and $(\v^m,b^m)$,
\begin{eqnarray*}
\|(\v^m, b^m) - (\w, c)\|^2 &=& \|\v^m-\w\|^2 + \vert b^m -c\vert ^2\\
&<& \|\v^m-\w\|^2(1+B^2) + 2B\delta\|\v^m-\w\| + \delta^2\\
&<& \frac{2 \delta}{M}(1+B^2) + 2B\delta\sqrt{\frac{2 \delta}{M}} + \delta^2\\
&=& \epsilon^2.
\end{eqnarray*}

\noindent
We have shown that for any hyperplane $H(\w, c)$ that achieves a margin larger than $M-\delta$
on the support points of the maximum margin hyperplane, $\x \in C^+ \cup C^-$, the distance
between $(\w,c)$ and $(\v^m,b^m)$ is less than $\epsilon$.
Equivalently, any hyperplane $H(\w,
c)$ such that $\| (\w,c) - (\v^m, b^m)\| > \epsilon$ has a margin less than $M-\delta$, as
$\min\left\{ | \w \cdot \x - c| \, \big\vert \, \x \in C^+ \cup C_- \right\} < M-\delta$.
By symmetry, the same holds for any $(\w,c)$ within distance $\epsilon$ of $(-\v^m, -b^m)$.

By Lemma \ref{lm:samedivision} $\exists h_1>0$
such that for all $h \in (0,h_1)$, the minimum density hyperplane for $h$, $H(\v^\star_h, b^\star_h)$,
induces the same partition
of $\X$ as the maximum margin hyperplane, $H(\v^m, b^m)$.
By Lemma \ref{lm:bddmargin} $\exists h_2 > 0$ such that for all $h \in(0,h_2)$,
$\margin\, H(\v^{\star}_h, b^{\star}_h) > M-\delta$.
Therefore for $h \in (0,\min\{h_1, h_2\})$, $\min\left\{ \|(\v^{\star}_h, b^{\star}_h) - (\v^m, b^m)\|,
\| (\v^{\star}_h, b^{\star}_h) +(\v^m, b^m)\| \right\} < \epsilon$. Since $\epsilon > 0$
was arbitrarily chosen, this gives the result. 

\end{proof}

\vskip 0.2in

\end{document}